\def\eqref#1{equation~\ref{#1}}
\def\1{\bm{1}}
\DeclareMathAlphabet{\mathsfit}{\encodingdefault}{\sfdefault}{m}{sl}
\SetMathAlphabet{\mathsfit}{bold}{\encodingdefault}{\sfdefault}{bx}{n}
\newcommand{\R}{\mathbb{R}}
\DeclareMathOperator*{\argmin}{arg\,min}
\definecolor{bdacolor}{RGB}{168, 141, 201}
\definecolor{myblue}{rgb}{.8, .8, 1}
\definecolor{pastelblue}{RGB}{76,113,175}
\definecolor{pastelgreen}{RGB}{144,238,144}
\definecolor{pastelred}{RGB}{196,78,82}
\definecolor{pastelgrey}{RGB}{230,230,230}
\definecolor{pastelbeige}{RGB}{243,236,221}
\definecolor{pastelpurple}{RGB}{154,139,192}
\definecolor{salmon}{RGB}{250, 128, 114}
\definecolor{darkgreen}{rgb}{0,0.6,0}
\definecolor{darkred}{rgb}{0.5,0,0}
\definecolor{verylightgreen}{HTML}{F6FFF9}
\definecolor{verylightred}{HTML}{FFF4F3}
\definecolor{verylightgray}{HTML}{F4F6F6}
\definecolor{babyblueeyes}{rgb}{0.63, 0.79, 0.95}
\definecolor{lightpink}{rgb}{1.00, 0.714, 0.757}
\newtheorem{example}{Example}
\newtheorem{proposition}{Proposition}
\def\thmt@refnamewithcomma #1#2#3,#4,#5\@nil{%
	\@xa\def\csname\thmt@envname #1utorefname\endcsname{#3}%
	\ifcsname #2refname\endcsname
	\csname #2refname\expandafter\endcsname\expandafter{\thmt@envname}{#3}{#4}%
	\fi}
\Crefname{conjecture}{Conjecture}{Conjectures}
\Crefname{definition}{Definition}{Definitions}
\Crefname{observation}{Observation}{Observations}
\Crefname{assumption}{Assumption}{Assumptions}
\Crefname{axiom}{Axiom}{Axioms}
\Crefname{case}{Case}{Cases}
\Crefname{claim}{Claim}{Claims}
\Crefname{conclusion}{Conclusion}{Conclusions}
\Crefname{condition}{Condition}{Conditions}
\Crefname{criterion}{Criterion}{Criteria}
\Crefname{exercise}{Exercise}{Exercises}
\Crefname{example}{Example}{Examples}
\Crefname{notation}{Notation}{Notations}
\Crefname{problem}{Problem}{Problems}
\Crefname{property}{Property}{Properties}
\Crefname{remark}{Remark}{Remarks}
\Crefname{solution}{Solution}{Solutions}
\Crefname{summary}{Summary}{Summaries}
\Crefname{motivation}{Motivation}{Motivations}
\Crefname{query}{Query}{Queries}
\newcommand*\dbar[1]{\overline{\overline{\lower0.2ex\hbox{$#1$}}}}
\def\cW{{\mathcal{W}}}
\DeclareFontFamily{U}{BOONDOX-calo}{\skewchar\font=45 }
\DeclareFontShape{U}{BOONDOX-calo}{m}{n}{
  <-> s*[1.05] BOONDOX-r-calo}{}
\DeclareFontShape{U}{BOONDOX-calo}{b}{n}{
  <-> s*[1.05] BOONDOX-b-calo}{}
\DeclareMathAlphabet{\mathcalb}{U}{BOONDOX-calo}{m}{n}
\SetMathAlphabet{\mathcalb}{bold}{U}{BOONDOX-calo}{b}{n}
\DeclareMathAlphabet{\mathbcalb}{U}{BOONDOX-calo}{b}{n}
  \def\\{}%
  \def\texttt#1{<#1>}%
\renewcommand{\paragraph}[1]{{\noindent \textbf{#1.}}}
\global\long\def\cv#1#2{\left\{  #1\,\middle|\,#2\right\}  }
\let\originalleft\left
\let\originalright\right
\renewcommand{\left}{\mathopen{}\mathclose\bgroup\originalleft}
\renewcommand{\right}{\aftergroup\egroup\originalright}
\global\long\def\X{\mathcal{X}}
\global\long\def\P{\mathcal{P}}
\global\long\def\S{\mathcal{S}}
\global\long\def\g{\nabla}
\global\long\def\norm#1{\left\lVert #1\right\rVert }
\global\long\def\inner#1#2{\left\langle #1, #2\right\rangle}
\global\long\def\W{\omega}
\definecolor{antiquefuchsia}{rgb}{0.57, 0.36, 0.51}
\definecolor{amethyst}{rgb}{0.6, 0.4, 0.8}
\newcommand{\deriv}[2]{\frac{\partial #1}{\partial #2}}
\newcommand{\mean}{\mathbb{E}}
\newcommand{\var}{{\rm I\kern-.3em D}}
\newcommand{\cond}{\,|\,}
\newcommand{\Normal}{\mathcal{N}}
\newtheorem*{theorem*}{Theorem}
\newtheorem*{proposition*}{Proposition}
\newtheorem*{example*}{Example}
\DeclareMathSymbol{\shortminus}{\mathbin}{AMSa}{"39}
\newcommand{\rev}[1]{#1}
\definecolor{cite_color}{HTML}{114083}
\definecolor{url_color}{RGB}{153, 102,  0}
\definecolor{commentscolor}{RGB}{100, 150, 200}
\providecommand{\section}{}
\renewcommand{\section}{%
  \@startsection{section}{1}{\z@}%
                {-1.0ex \@plus -0.5ex \@minus -0.2ex}%
                { 1.0ex \@plus  0.3ex \@minus  0.2ex}%
                {\large\sc\raggedright}%
}
\providecommand{\subsection}{}
\providecommand{\subsubsection}{}
\renewcommand{\subsubsection}{%
  \@startsection{subsubsection}{3}{\z@}%
                {-0.5ex \@plus -0.5ex \@minus -0.2ex}%
                { 0.5ex \@plus  0.2ex}%
                {\normalsize\sc\raggedright}%
}
\providecommand{\paragraph}{}
\renewcommand{\paragraph}{%
  \@startsection{paragraph}{4}{\z@}%
                {0.3ex \@plus 0.2ex \@minus 0.2ex}%
                {-1em}%
                {\normalsize\bf}%
}
\title{Meta Flow Matching: Integrating Vector \\ Fields on the Wasserstein Manifold}
\author{\textbf{Lazar Atanackovic}$^{1,2}$\thanks{Joint first authorship. Correspondence to: \url{l.atanackovic@mail.utoronto.ca} \\ \parindent 1.75em\indent Our code is available at: \url{https://github.com/lazaratan/meta-flow-matching}} \quad \textbf{Xi Zhang}$^{3,4*}$ \quad \textbf{Brandon Amos}$^{5}$ \quad \textbf{Mathieu Blanchette}$^{3,4}$ \\ \textbf{Leo J. Lee}$^{1,2}$ \quad \textbf{Yoshua Bengio}$^{3,6,7}$ \quad \textbf{Alexander Tong}$^{3,6}$ \quad \textbf{\textbf{Kirill Neklyudov}$^{3,6}$}
\\
$^1$University of Toronto \quad $^2$Vector Institute \quad $^3$Mila - Quebec AI Institute \\ $^4$McGill University \quad $^5$Meta \quad $^6$Universit\'{e} de Montr\'{e}al \quad $^7$CIFAR Fellow
}
\begin{document}

\maketitle

\vspace{-1em}
\begin{abstract}
Numerous biological and physical processes can be modeled as systems of interacting entities evolving continuously over time, e.g.\ the dynamics of communicating cells or physical particles.
Learning the dynamics of such systems is essential for predicting the temporal evolution of populations across novel samples and unseen environments.
Flow-based models allow for learning these dynamics at the population level --- they model the evolution of the entire distribution of samples.
However, current flow-based models are limited to a single initial population and a set of predefined conditions which describe different dynamics.
We argue that multiple processes in natural sciences have to be represented as vector fields on the Wasserstein manifold of probability densities.
That is, the change of the population at any moment in time depends on the population itself due to the interactions between samples.
In particular, this is crucial for personalized medicine where the development of diseases and their respective treatment response depend on the microenvironment of cells specific to each patient.
We propose \textit{Meta Flow Matching} (MFM), a practical approach to integrate along these vector fields on the Wasserstein manifold by amortizing the flow model over the initial populations.
Namely, we embed the population of samples using a Graph Neural Network (GNN) and use these embeddings to train a Flow Matching model.
This gives MFM the ability to generalize over the initial distributions, unlike previously proposed methods.
We demonstrate the ability of MFM to improve the prediction of individual treatment responses on a large-scale multi-patient single-cell drug screen dataset.
\end{abstract}

\setcounter{footnote}{0} 

\section{Introduction}

Understanding the dynamics of many-body problems is a focal challenge across the natural sciences. In the field of cell biology, a central focus is the understanding of the dynamic processes that cells undergo in response to their environment, and in particular their response and interaction with other cells. Cells communicate with one other in close proximity using \textit{cell signaling}, exerting influence over each other's trajectories~\citep{Armingol2020,pmid20066080}. This signaling presents an obstacle for modeling due to the complex nature of intercellular regulation, but is essential for understanding and eventually controlling cell dynamics during development~\citep{Gulati2020, Rizvi2017}, in diseased states~\citep{Mol2021,Binnewies2018,Zeng2019, Chung2017}, and in response to perturbations~\citep{ji_machine_2021,peidli2024scperturb}. 

The exponential decrease of sequencing costs and advances in microfluidics has enabled the rapid advancement of single-cell sequencing and related technologies over the past decade \citep{svensson2018exponential}. While single-cell sequencing has been used to great effect to understand the heterogeneity in cell systems, it is also destructive, making longitudinal measurements extremely difficult. 
Hence, learning dynamical system models of cells while also capturing \rev{their inherent} heterogeneity and stochasticity of cellular systems remains a central challenge in biology.

Instead, most existing approaches model cell dynamics at the population level~\citep{hashimoto_learning_2016,weinreb_fundamental_2018,schiebinger2019optimal, tong2020trajectorynet, neklyudov2022action, bunne2023learning}.
These approaches involve the formalisms of optimal transport \citep{villani2009optimal,peyre_computational_2019}, diffusion \citep{de2021diffusion}, or normalizing flows \citep{lipman2022flow}, which \rev{learn} a map between empirical measures.
While these methods are able to model the dynamics of the population, they are fundamentally limited in that they model the evolution of cells as independent particles evolving according to the learned model of a global vector field. 
Furthermore, these models can be trained to match any given set of measures, but they are restricted to modeling of a single population and can at best condition on a number of different dynamics that are in the training data.

\begin{figure}[t]
    \vspace{-15pt}
    \centering
    \begin{minipage}[t]{0.8\textwidth}
        \centering
        \begin{minipage}[t]{0.35\textwidth}
            \raggedright
            (a) \\ %
            \centering
            \includegraphics[width=\textwidth]{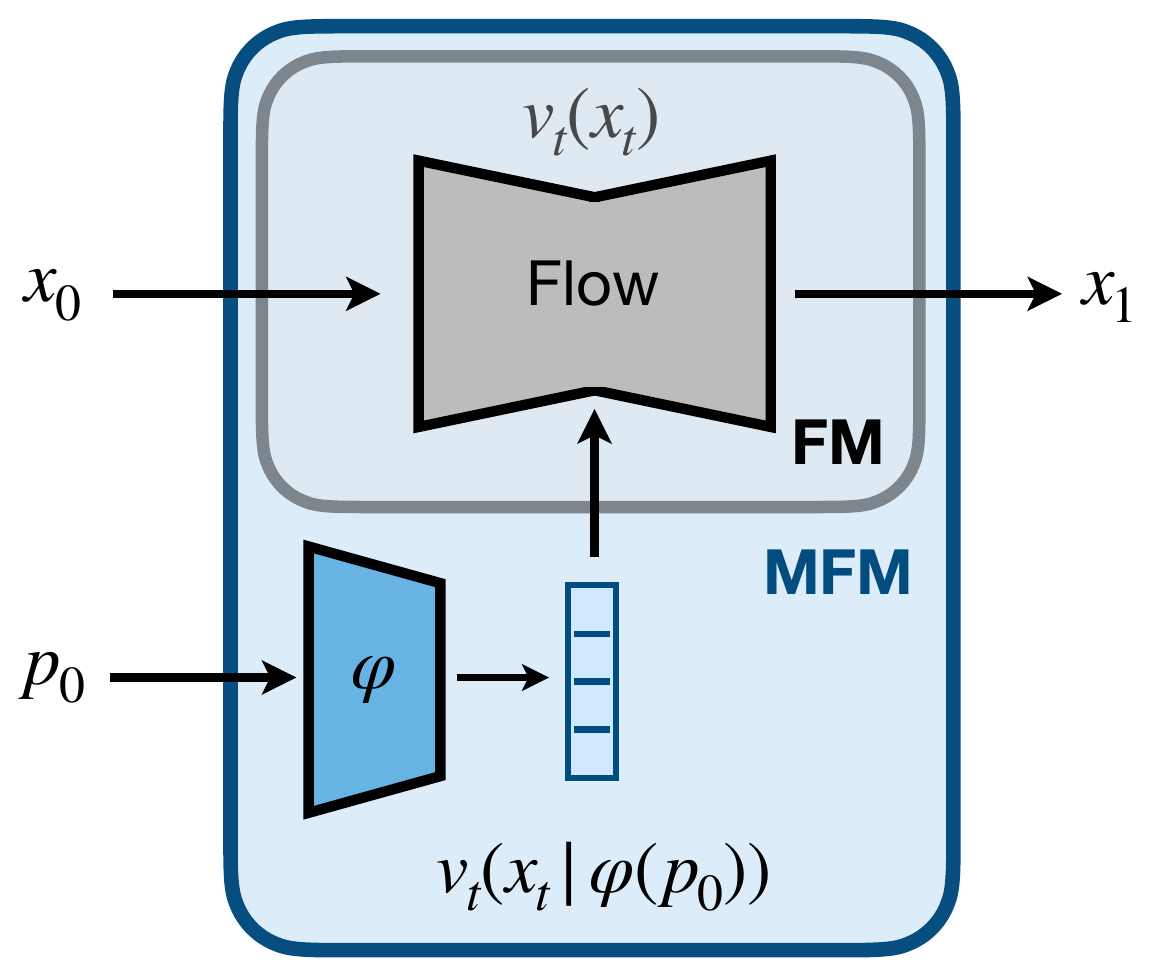}
        \end{minipage}%
        \hspace{3em}
        \begin{minipage}[t]{0.45\textwidth}
            \raggedright
            (b) \\ %
            \centering
            \includegraphics[width=\textwidth]{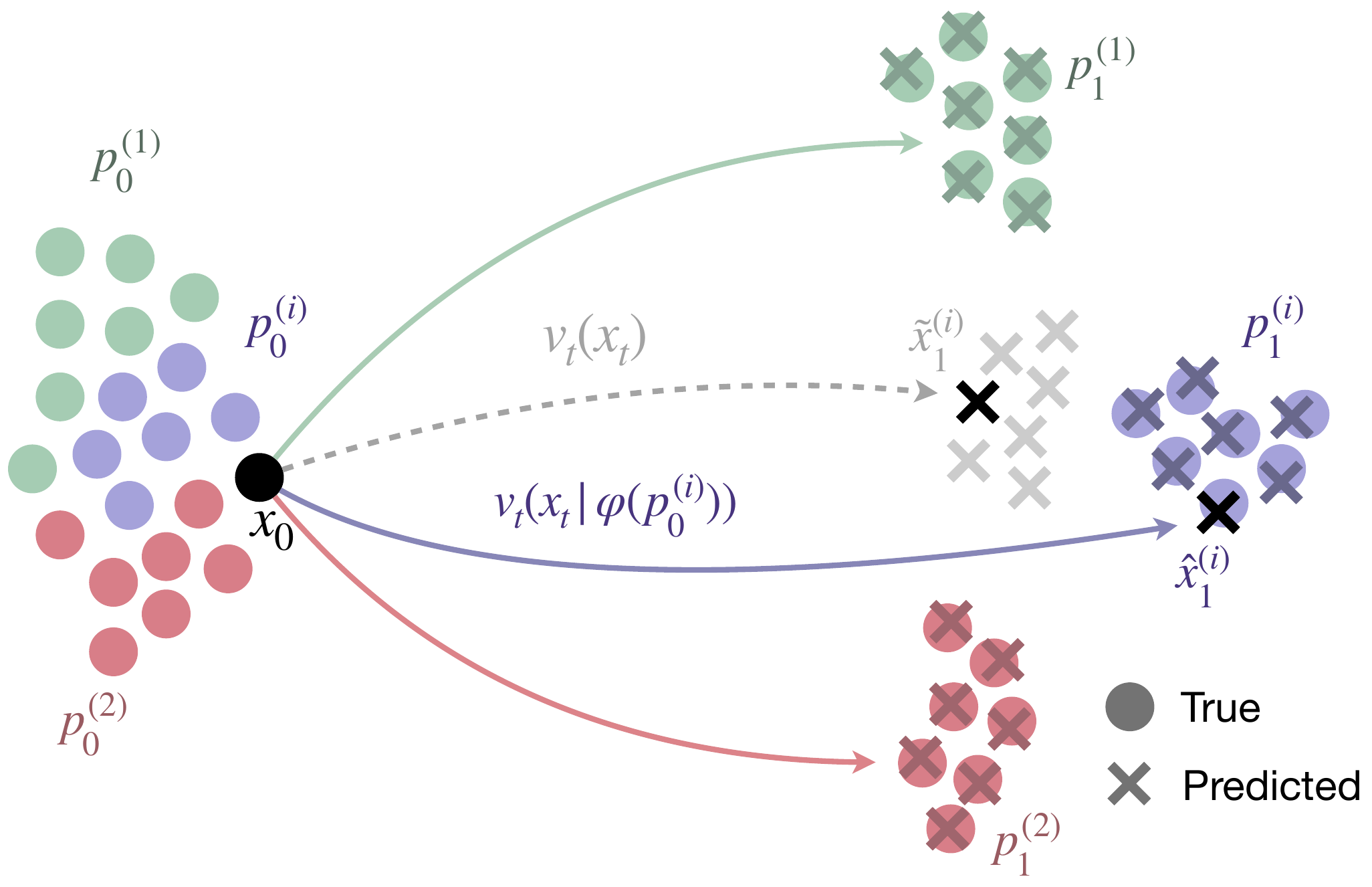}
        \end{minipage}
    \end{minipage}
    \vspace{-0.5em}
    \caption{\textbf{Illustration of \emph{Meta Flow Matching} (MFM,  \cref{eq:mfm_loss}).} (a) Comparison between \emph{Flow Matching} (FM, \cref{eq:loss_fm}) and MFM. (b) Depiction of differences between MFM and FM generated predictions. Given a point $x_t$, a vector field (flow) model trained with MFM can generate different points \smash{$\hat{x}_1$} for different initial distributions $p_0$ (represented by red, green, and purple). FM trained models can only predict an aggregate response over populations (shown in gray). FM at best can incorporate known (seen) conditional information available in the training data, denoted as \emph{Conditional Generative Flow Matching} (CGFM, \cref{eq:loss_cgfm}). In contrast, MFM jointly learns a population embedding model \smash{$\varphi(p_0)$} and a vector field \smash{$v_t$}, allowing generalization to unseen populations.}
    \label{fig:overview}
\vspace{-10pt}
\end{figure}

\textbf{We propose \textit{Meta Flow Matching} (MFM) --- the amortization of the Flow Matching generative modeling framework} \citep{lipman2022flow} \textbf{over the input measures}, which (i) takes into account the interaction of particles (instead of modelling them independently) and (ii) allows for generalization of the learned model to previously unseen input populations.
In practice, our method can be used to predict the time-evolution of distributions from a given dataset of the time-evolved examples.
Namely, we assume that the collected data undergoes a universal developmental process, which depends only on the population itself as in the setting of the interacting particles or communicating cells.
Under this assumption, we learn the vector field model that takes samples from the initial distribution as input and defines the push-forward map on the sample-space that maps the initial distribution to the final distribution (see \cref{fig:overview}).

We showcase the utility of our approach on two applications. 
To illustrate the intuition of the proposed method, we first test MFM on a synthetic task of ``letter denoising''.
We show that MFM is able to generalize the denoising process to unseen letter silhouettes, whereas the standard FM approach cannot. 
Next, we explore how MFM can be applied to model single-cell perturbation data~\citep{ji_machine_2021,peidli2024scperturb}. 
We evaluate MFM on predicting the response of patient-derived cells to chemotherapy treatments in a recently published large scale single-cell drug screening dataset where there are known to be patient-specific responses~\citep{RamosZapatero2023}.\footnote{This dataset includes more than $25$ million cells collected for ten patients over $2500$ different experimental/treatment conditions.}
This is a challenging task due to the inherent variation that exist across patients, the variability induced by different treatments, and variation due to local cell compositions. Addressing this problem can lead to better prediction of tumor growth and therapeutic response to cancer treatment.
We demonstrate that MFM can successfully predict the development of cell populations on replicated experiments, and most importantly, that it generalizes to previously unseen patients, thus, capturing the patient-specific response to the treatment.

\vspace{-0.5em}
\section{Background}
\label{sec:background}

\vspace{-0.5em}
\subsection{Generative Modeling via Flow Matching}

\vspace{-0.5em}
Flow Matching is an approach to generative modeling recently proposed independently in different works: Rectified Flows \citep{liu2022flow}, Flow Matching \citep{lipman2022flow}, Stochastic Interpolants \citep{albergo2022building}. 
A flow is a continuous interpolation between densities
$p_0(x_0)$ and $p_1(x_1)$ in the sample space.
That is, the sample from the intermediate density $p_t(x)$ is produced as follows
\begin{align}
    x_t =~& f_t(x_0, x_1), \;\; (x_0, x_1) \sim \pi(x_0,x_1)\,, \\
    ~&\text{ where } \int dx_1\; \pi(x_0,x_1) = p_0(x_0)\,,\;\;\int dx_0\; \pi(x_0,x_1) = p_1(x_1)\,,
    \label{eq:fm_sampling}
\end{align}
where $f_t$ is the time-continuous interpolating function such that $f_{t=0}(x_0, x_1) = x_0$ and $f_{t=1}(x_0, x_1) = x_1$ (e.g.\ linearly between $x_0$ and $x_1$ with $f_t(x_0,x_1) = (1-t)\cdot x_0 + t\cdot x_1$); $\pi(x_0, x_1)$ is the density of the joint distribution, which is usually taken as a distribution of independent random variables $\pi(x_0, x_1) = p_0(x_0)p_1(x_1)$, but can also be generalized to formulate the optimal transport problems \citep{pooladian2023multisample,tong2024improving}. 
The corresponding density can be defined then as the following expectation over the interpolating samples
\begin{align}
    p_t(x) = \int dx_0 dx_1\; \pi(x_0, x_1)\delta(x-f_t(x_0, x_1))\,.
\end{align}

The essential part of Flow Matching is the continuity equation that describes the change of this density through the vector field on the state space, which admits vector field $v^*_t(x)$ as a solution 
\begin{align}
    \deriv{p_t(x)}{t} = -\inner{\nabla_x}{p_t(x) v^*_t(x)}\,,\;\; v^*_{t}(\xi)=\frac{1}{p_{t}(\xi)}\mathbb{{E}}_{\pi(x_0, x_1)}\left[\delta(f_{t}(x_0,x_1)-\xi)\frac{\partial f_{t}(x_0,x_1)}{\partial t}\right]\,.
    \label{eq:continuity}
\end{align}

The expectation for $v^*_t$ is intractable, but we can \rev{model} $v_t(x; \W)$  to approximate $v^*_t$ so that we can efficiently generate new samples.
Relying on \cref{eq:continuity}, one can derive the tractable objective for learning $v^*_t(x)$, i.e.
\begin{align}
    \mathcal{L}_{\rm FM}(\W)=~&\int_0^1 dt\; \mean_{p_t(x)}\norm{v^*_{t}(x)-v_t(x; \W)}^{2} \\
    =~& \mean_{\pi(x_0, x_1)}\int_0^1 dt\;\norm{\deriv{}{t}f_{t}(x_0,x_1)-v_t(f_t(x_0,x_1); \W)}^{2} + \text{constant}\,.
    \label{eq:loss_fm}
\end{align}

Finally, the vector field $v_t(\xi, \W) \approx v_t^*(\xi)$ defines the push-forward density that approximately matches $p_{t=1}$, i.e. $T_{\#}p_0 \approx p_{t=1}$, where $T$ is the flow corresponding to vector field $v_t(\cdot, \W)$ with parameters $\W$.

\subsection{Conditional Generative Modeling via Flow Matching}
\label{sec:CFM}

Conditional image generation is one of the most common applications of generative models nowadays; it includes conditioning on the text prompts \citep{saharia2022photorealistic, rombach2022high} as well as conditioning on other images \citep{saharia2022palette}.
To learn the conditional generative process with diffusion models, one merely has to pass the conditional variable (sampled jointly with the data point) as an additional input to the parametric model of the vector field.
The same applies for the Flow Matching framework.

Conditional Generative Modeling via Flow Matching is independently introduced in several works
\citep{zheng2023guided, dao2023flow, isobe2024extended} and it operates as follows.
Consider a family of time-continuous densities $p_t(x_t\cond c)$, which corresponds to the distribution of the following random variable
\begin{align}
    x_t = f_t(x_0, x_1), \;\; (x_0, x_1) \sim \pi(x_0,x_1\cond c)\,.
\end{align}
For every $c$, the density $p_t(x_t\cond c)$ follows the continuity equation with the following vector field
\begin{align}
v^*_{t}(\xi\cond c)=\frac{1}{p_{t}(\xi\cond c)}\mathbb{{E}}_{\pi(x_0,x_1)}\delta(f_{t}(x_0, x_1)-\xi)\frac{\partial f_{t}(x_0, x_1)}{\partial t}\,,
\label{eq:cond_vf}
\end{align}
which depends on $c$.
Thus, the training objective of the conditional model becomes
\begin{align}
\mathcal{L}_{CGFM}(\W)=~&\mean_{p(c)}\mean_{\pi(x_0,x_1\cond c)}\int_0^1 dt\;\norm{\deriv{}{t}f_{t}(x_0,x_1)-v_t(f_t(x_0,x_1) \cond c; \W)}^{2}\,,
\label{eq:loss_cgfm}
\end{align}
where, compared to the original Flow Matching formulation, we first have to sample $c$, then produce the samples from $p_t(x_t\cond c)$ and pass $c$ as input to the parametric model of the vector field.

\vspace{-0.5em}
\subsection{Modeling Process in Natural Sciences as Vector Fields on the Wasserstein Manifold}
\vspace{-0.5em}

\begin{figure}[t]
    \centering
    \begin{minipage}{0.3\textwidth}
    \centering
    \textbf{\small Flow Matching} \\[-1mm]
    \includegraphics[width=\textwidth]{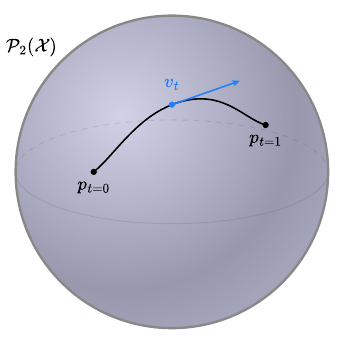}
    \end{minipage} \hfill
    \begin{minipage}{0.3\textwidth}
    \centering
    \textbf{\small Conditional Flow Matching} \\[-1mm]
    \includegraphics[width=\textwidth]{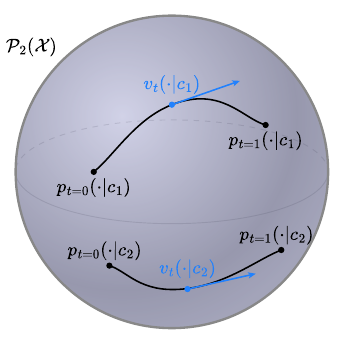}
    \end{minipage} \hfill
    \begin{minipage}{0.3\textwidth}
    \centering
    \textbf{\small Vector Field on $\P_2(\X)$} \\[-1mm]
    \includegraphics[width=\textwidth]{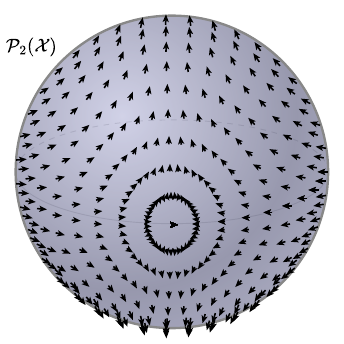}
    \end{minipage}
    \caption{\textbf{Illustration of flow matching methods on the 2-Wasserstein manifold, $\P_2(\X)$, depicted as a two-dimensional sphere.} \emph{Flow Matching} learns the tangent vectors to a single curve on the manifold.
    \emph{Conditional} generation corresponds to learning a finite set of curves on the manifold, e.g. classes $c_1$ and $c_2$ on the plot. 
    \emph{Meta Flow Matching} learns to integrate a vector field on $\P_2(\X)$, i.e. for every starting density $p_0$, MFM defines a push-forward measure that integrates along the underlying vector field.
    }
    \label{fig:geometry}
\vspace{-10pt}
\end{figure}

We argue that numerous biological and physical processes cannot be modeled via the vector field propagating the population samples independently.
Thus, we propose to model these processes as families of conditional vector fields where we amortize the conditional variable by embedding the population via a Graph Neural Network (GNN).

To provide the reader with the necessary intuition, we are going to use the geometric formalism developed by \cite{otto2001geometry} \rev{(see the complete discussion in \cref{app:defs})}. 
That is, time-dependent densities $p_t(x_t)$ define absolutely-continuous curves on the 2-Wasserstein space of distributions $\P_2(\X)$ \rev{(see \citep{ambrosio2008gradient}, Chapter 8)}.
The tangent space of this manifold is defined by the gradient flows $\S_t = \cv{\g s_t}{s_t:\X \to \mathbb{R}}$ on the state space $\X$\rev{, to be precise, its $L^2(\mu;\X)$ closure $\forall \mu \in \P_2(\X)$ (see \citep{ambrosio2008gradient}, eq. (8.0.2))}.
In the Flow Matching context, we are going to refer to the tangent vectors as vector fields since one can always project the vector field onto the tangent space by parameterizing it as a gradient flow \citep{neklyudov2022action}.

Under the geometric formalism of the 2-Wasserstein manifold, Flow Matching can be considered as learning the tangent vectors $v_t(\cdot)$ along the density curve $p_t(x_t)$ defined by the sampling process in \cref{eq:fm_sampling} (see the left panel in \cref{fig:geometry}).
Furthermore, the conditional generation processes $p_t(x_t\cond c)$ would be represented as a finite set of curves if $c$ is discrete (e.g. class-conditional generation of images) or as a family of curves if $c$ is continuous (see the middle panel in \cref{fig:geometry}).

Finally, one can define a vector field on the 2-Wasserstein manifold via the continuity equation with the vector field $v_t(x, p_t(x))$ on the state space $\X$ that depends on the current density $p_t(x)$ or its derivatives.
Below we give two examples of processes defined as vector fields on the 2-Wasserstein manifold.

\begin{mdframed}[style=MyFrame2]
\begin{example}[Mean-field limit of interacting particles] \label{ex:meanf}
Consider a system of interacting particles, where the velocity of the particle at point $x$ interacting with the particle at point $y$ is defined as $k(x,y): \mathbb{R}^d\times\mathbb{R}^d \to\mathbb{R}^d$.
In the limit of the infinite number of particles one can describe their state using the density function $p_t(x)$.
Then the change of the density is described by the following continuity equation
\begin{align}
    \frac{dx}{dt} = \mean_{p_t(y)} k(x,y), \;\; \deriv{p_t(x)}{t} = -\inner{\nabla_x}{p_t(x)\mean_{p_t(y)} k(x,y)}\,,
\end{align}
which is the first-order analog of the Vlasov equation \citep{jabin2016mean}.
\end{example}
\end{mdframed}

\begin{mdframed}[style=MyFrame2]
\begin{example}[Diffusion]\label{ex:diffusion}
Even when the physical particles evolve independently in nature, the deterministic vector field model might be dependent on the current density of the population.
For instance, for the diffusion process, the change of the density is described by the Fokker-Planck equation, which results in the density-dependent vector field when written as a continuity equation, i.e.
\begin{align}
    \deriv{p_t(x)}{t} = \frac{1}{2}\Delta_x p_t(x) = -\inner{\nabla_x}{p_t(x)\left(-\frac{1}{2} \nabla_x \log p_t(x)\right)}\implies \frac{dx}{dt} = -\frac{1}{2}\nabla_x\log p_t(x)\,.
\end{align}
\end{example}
\end{mdframed}

Motivated by the examples above, we argue that using the information about the current or the initial density is crucial for the modeling of time-evolution of densities in natural processes, to capture this type of dependency one can model the change of the density as the following Cauchy problem
\begin{align}
    \deriv{p_t(x)}{t} = -\inner{\nabla_x}{p_t(x)v_t\left(x, p_t\right)}\,,\;\; p_{t=0}(x) = p_0(x)\,,
    \label{eq:w2vf}
\end{align}
where the state-space vector field $v_t\left(x, p_t\right)$ depends on the density $p_t$.

The dependency might vary across models, e.g. in \cref{ex:meanf} the vector field can be modeled as an application of a kernel to the density function, while in \cref{ex:diffusion} the vector field depends only on the local value of the density and its derivative.

\section{Meta Flow Matching}
\label{sec:meta_flow_matching}

In this paper, we propose the amortization of the Flow Matching framework over the marginal distributions.
Our model is based on the outstanding ability of the Flow Matching framework to learn the push-forward map for any joint distribution $\pi(x_0,x_1)$ given empirically.
For the given joint $\pi(x_0,x_1)$, we denote the solution of the Flow Matching optimization problem as follows
\begin{align}
    v_t^*(\cdot, \pi) = \argmin_{v_t} \mathcal{L}_{GFM}(v_t(\cdot), \pi(x_0, x_1))\,.
    \label{eq:fm_opt}
\end{align}
Analogous to amortized optimization \citep{chen2022learning, amos2023tutorial}, we aim to learn the model that outputs the solution of \cref{eq:fm_opt} based on the input data sampled from $\pi$, i.e.
\begin{align}
    v_t(\cdot, \varphi(\pi)) = v_t^*(\cdot, \pi)\,,
\end{align}
where $\varphi(\pi)$ is the embedding model of $\pi$ and the joint density $\pi(\cdot \cond c)$ is generated using some unknown measure of the conditional variables $c\sim p(c)$.

\subsection{Integrating Vector Fields on the Wasserstein Manifold via Meta Flow Matching}

Consider the dataset of joint populations $\mathcal{D} = \{(\pi(x_0, x_1\cond i))\}_{i}$, where, to simplify the notation, we associate every $i$-th population with its density $\pi(\cdot \cond i)$ and the conditioning variable here is the index of this population in the dataset.
We make the following assumptions regarding the ground truth sampling process (i) we assume that the starting marginals $p_0(x_0\cond i) = \int dx_1\; \pi(x_0,x_1\cond i)$ are sampled from some unknown distribution that can be parameterized with a large enough number of parameters (ii) the endpoint marginals $p_1(x_1\cond i) = \int dx_0\; \pi(x_0,x_1\cond i)$ are obtained as push-forward densities solving the Cauchy problem in \cref{eq:w2vf}, (iii) there exists unique solution to this Cauchy problem.

One can learn a joint model of all the processes from the dataset $\mathcal{D}$ using the conditional version of the Flow Matching algorithm (see \cref{sec:CFM}) where the population index $i$ plays the role of the conditional variable.
However, obviously, such a model will not generalize beyond the considered data $\mathcal{D}$ and unseen indices $i$.
We illustrate this empirically in \cref{sec:exps}.

To be able to generalize to previously unseen populations, we propose learning the density-dependent vector field motivated by \cref{eq:w2vf}.
That is, we propose to use an embedding function $\varphi: \P_2(\X) \to \mathbb{R}^{m}$ to embed the starting marginal density $p_0$, which we then input into the vector field model and minimize the following objective over $\W$
\begin{align}
    \mathcal{L}_{\text{MFM}}(\W;\varphi)=~&\mean_{i \sim \mathcal{D}}\mean_{\pi(x_0,x_1\cond i)}\int_0^1 dt\;\norm{\deriv{}{t}f_{t}(x_0,x_1)-v_t(f_t(x_0,x_1) \cond \varphi(p_0); \W)}^{2}\,.
    \label{eq:mfm_notheta}
\end{align}
Note that the initial density $p_0$ is enough to predict the push-forward density $p_1$ since the Cauchy problem for \cref{eq:w2vf} has a unique solution.
The embedding function $\varphi(p_0)$ can take different forms, e.g. it can be the density value $\varphi(p_0) = p_0(\cdot)$, which is then used inside the vector field model to evaluate at the current point (analogous to \cref{ex:diffusion}); a kernel density estimator (analogous to \cref{ex:meanf}); or a parametric model taking the samples from this density as an input.

\begin{mdframed}[style=MyFrame2]
\begin{proposition}
    Meta Flow Matching recovers the Conditional Generation via Flow Matching when the conditional dependence of the marginals $p_0(x_0 \cond c) = \int dx_1 \pi(x_0,x_1\cond c)$ and $p_1(x_1 \cond c) = \int dx_0 \pi(x_0,x_1\cond c)$ and the distribution $p(c)$ are known, i.e.
    there exist $\varphi: \P_2(\X) \to \mathbb{R}^{m}$ such that $\mathcal{L}_{MFM}(\W) = \mathcal{L}_{CGFM}(\W)$.
    \label{math:prop1}
\end{proposition}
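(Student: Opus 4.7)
The plan is to exhibit a particular embedding $\varphi$ under which the two loss functionals coincide term by term as functions of $\W$. Since $\mathcal{L}_{CGFM}$ conditions the vector field on $c$ while $\mathcal{L}_{MFM}$ conditions on $\varphi(p_{0}(\cdot \cond c))$, the natural strategy is to choose $\varphi$ so that $\varphi(p_{0}(\cdot \cond c))$ is in bijective correspondence with $c$ on $\supp p(c)$, after which the two parametric families of vector fields can be identified by a mere relabeling of the conditioning input.

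Concretely, first I would observe that under the stated hypothesis the map $c \mapsto p_{0}(\cdot \cond c)$ is known, and any meaningful conditional generation setup forces this map to be injective on $\supp p(c)$ --- otherwise two distinct conditions would produce identical initial populations and the label $c$ would be informationally redundant for any vector field living on $\X$. Letting $\psi$ denote the inverse of this map on its image, I would invoke the paper's earlier assumption that the family $\{p_{0}(\cdot \cond c)\}_{c \in \supp p(c)}$ is parameterized by finitely many parameters, and choose $\varphi : \P_{2}(\X) \to \mathbb{R}^{m}$ to be any injective encoding of this finite-dimensional image; the cleanest choice is $\varphi(p_{0}(\cdot \cond c)) := c$, extended arbitrarily outside the image. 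Under this choice, the MFM parametric family $\{v_{t}(\cdot \cond \varphi(p_{0}); \W)\}_{\W}$ can be identified with the CGFM family $\{v_{t}(\cdot \cond c; \W)\}_{\W}$ via the relabeling $v_{t}(x \cond \varphi(p_{0}); \W) \equiv v_{t}(x \cond \psi(p_{0}); \W)$.

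Finally, I would substitute $\varphi(p_{0}(\cdot \cond i)) = i$ into $\mathcal{L}_{MFM}(\W;\varphi)$ from \cref{eq:mfm_notheta}, note that sampling $i \sim \mathcal{D}$ realizes the outer expectation over $c \sim p(c)$ in \cref{eq:loss_cgfm}, and verify that the two integrands agree pointwise in $(x_{0}, x_{1}, t, \W)$. The main obstacle is reconciling the finite-dimensional codomain $\mathbb{R}^{m}$ with the infinite-dimensional domain $\P_{2}(\X)$; this is exactly what the finite-parametrization assumption resolves, by restricting the relevant image of $\varphi$ to a finite-dimensional submanifold on which an injective encoding into $\mathbb{R}^{m}$ exists. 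A secondary subtlety worth flagging is that the proposition is really an existence/expressivity statement: in practice one still needs the parametric embedding (the GNN introduced in \cref{sec:meta_flow_matching}) to approximate such a $\varphi$, and any failure to recover $c$ from the embedding translates directly into a gap between $\mathcal{L}_{MFM}$ and $\mathcal{L}_{CGFM}$.
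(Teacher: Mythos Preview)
Your proposal is correct and follows essentially the same approach as the paper: set $\varphi(p_{0}(\cdot \cond c)) = c$, identify sampling $i \sim \mathcal{D}$ with sampling $c \sim p(c)$, and observe that the two losses coincide. The paper's own proof is a one-liner that asserts exactly this choice of $\varphi$ without addressing the injectivity or finite-parametrization subtleties you carefully flag; your treatment is strictly more thorough on those points, but the core idea is identical.
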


\begin{proof}
    Indeed, sampling from the dataset $i \sim \mathcal{D}$ becomes sampling of the conditional variable $c \sim p(c)$ and the embedding function becomes $\varphi(p_0(\cdot\cond c)) = c$.
\end{proof}
\end{mdframed}

Furthermore, for the parametric family of the embedding models $\varphi(p_t, \theta)$, we show that the parameters $\theta$ can be estimated by minimizing the objective in \cref{eq:mfm_notheta} in the joint optimization with the vector field parameters $\W$.
We formalize this statement in the following theorem.

\begin{mdframed}[style=MyFrame2]
\begin{restatable}{theorem}{mfm}\label{th:mfm}
Consider a dataset of populations $\mathcal{D} = \{(\pi(x_0, x_1\cond i))\}_{i}$ generated from some unknown conditional model $\pi(x_0,x_1\cond c)p(c)$.
Then the following objective
\begin{align}
    \mathcal{L}(\W, \theta)=~&\mean_{p(c)}\int_0^1 dt\; \mean_{p_t(x_t\cond c)}\norm{v^*_{t}(x_t \cond c)-v_t(x_t \cond \varphi(p_0,\theta), \W)}^{2}
\end{align}
is equivalent to the Meta Flow Matching objective
\begin{align}
    \mathcal{L}_{\text{MFM}}(\W,\theta)=~&\mean_{i \sim \mathcal{D}}\mean_{\pi(x_0,x_1\cond i)}\int_0^1 dt\;\norm{\deriv{}{t}f_{t}(x_0,x_1)-v_t(f_t(x_0,x_1)\cond \varphi(p_0,\theta); \W)}^{2}
    \label{eq:mfm_loss}
\end{align}
up to an additive constant.
\label{math:thrm1}
\end{restatable}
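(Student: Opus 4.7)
The statement is essentially the standard Flow Matching identity (equivalence between regressing onto the intractable ``true'' vector field and regressing onto the interpolant derivative) lifted to the conditional/amortized setting, where the conditioning variable is routed through the embedding $\varphi(p_0,\theta)$. The plan is to expand both squared norms, observe that the $\W,\theta$-independent pieces absorb into the additive constant, and check that the two cross/quadratic terms that do depend on $(\W,\theta)$ match after using the explicit formula for $v_t^*$ given by \cref{eq:cond_vf}.

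\textbf{Step 1: Reduce the sampling over $\mathcal{D}$ to sampling over $p(c)$.} By the data-generating assumption in \cref{sec:meta_flow_matching}, each $\pi(\cdot,\cdot\cond i) \in \mathcal{D}$ is obtained by sampling $c\sim p(c)$ and then $\pi(\cdot,\cdot \cond c)$, so $\mean_{i\sim \mathcal{D}} = \mean_{p(c)}$ in the population limit. In particular, since $\varphi(p_0,\theta)$ is a deterministic functional of the starting marginal $p_0(\cdot\cond c)$, it only depends on $c$ through $p_0(\cdot\cond c)$, and may be moved freely inside or outside the inner expectations.

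\textbf{Step 2: Expand $\mathcal{L}_{\mathrm{MFM}}$.} Writing $\|a-b\|^2 = \|a\|^2 - 2\langle a,b\rangle + \|b\|^2$ with $a=\partial_t f_t(x_0,x_1)$ and $b = v_t(f_t(x_0,x_1)\cond \varphi(p_0,\theta);\W)$, the term $\mean_{p(c)}\mean_{\pi(\cdot\cond c)} \int_0^1 \|\partial_t f_t\|^2\,dt$ is independent of $(\W,\theta)$ and will be the additive constant. The two remaining terms are
\begin{align}
-2\,\mean_{p(c)}\mean_{\pi(x_0,x_1\cond c)}\!\int_0^1\! \langle \partial_t f_t(x_0,x_1),\, v_t(f_t(x_0,x_1)\cond \varphi(p_0,\theta);\W)\rangle\, dt\ +\ \mean_{p(c)}\mean_{\pi(\cdot\cond c)}\!\int_0^1\!\|v_t(f_t\cond \varphi(p_0,\theta);\W)\|^2\,dt.
\end{align}

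\textbf{Step 3: Expand $\mathcal{L}(\W,\theta)$ and rewrite using \cref{eq:cond_vf}.} Similarly expanding the squared norm inside $\mathcal{L}$ yields a $(\W,\theta)$-independent term $\mean_{p(c)}\int_0^1 \mean_{p_t(x_t\cond c)}\|v_t^*(x_t\cond c)\|^2\,dt$, which joins the additive constant. For the cross term, multiply through by $p_t(x_t\cond c)$ and apply the definition
\begin{align}
p_t(x_t\cond c)\,v_t^*(x_t\cond c)=\mean_{\pi(x_0,x_1\cond c)}\bigl[\delta(f_t(x_0,x_1)-x_t)\,\partial_t f_t(x_0,x_1)\bigr],
\end{align}
so that integrating the delta against $x_t$ collapses the inner product to exactly the MFM cross term of Step 2. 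For the pure $\|v_t\|^2$ term, the push-forward identity $p_t(x_t\cond c) = \int \pi(x_0,x_1\cond c)\,\delta(f_t(x_0,x_1)-x_t)\,dx_0\,dx_1$ likewise converts the integral against $p_t(\cdot\cond c)$ into $\mean_{\pi(\cdot\cond c)}\|v_t(f_t\cond \varphi(p_0,\theta);\W)\|^2$, matching Step 2.

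\textbf{Step 4: Conclude.} The two $(\W,\theta)$-dependent contributions on both sides agree term by term, so $\mathcal{L}(\W,\theta) = \mathcal{L}_{\mathrm{MFM}}(\W,\theta) + C$ with $C = \mean_{p(c)}\int_0^1 \mean_{p_t(\cdot\cond c)}\|v_t^*\|^2\,dt - \mean_{p(c)}\mean_{\pi(\cdot\cond c)}\int_0^1 \|\partial_t f_t\|^2\,dt$, independent of $\W$ and $\theta$. The only subtlety --- and the step I expect to require the most care --- is the delta-function manipulation in Step 3, which is formal as written but can be justified rigorously either by testing against smooth functions or, equivalently, by noting that for any measurable $g$, $\mean_{p_t(x_t\cond c)}g(x_t) = \mean_{\pi(x_0,x_1\cond c)}g(f_t(x_0,x_1))$ by construction of $p_t$; applying this to $g(x_t) = \|v_t(x_t\cond \varphi(p_0,\theta);\W)\|^2$ and to $g(x_t) = v_t(x_t\cond \varphi(p_0,\theta);\W)$ (contracted against $v_t^*$ via the defining identity above) yields the desired equalities without invoking distributions.
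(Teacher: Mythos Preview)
Your proposal is correct and follows essentially the same approach as the paper: expand both squared norms, invoke the identity $p_t(\xi\cond c)\,v_t^*(\xi\cond c)=\mean_{\pi(x_0,x_1\cond c)}[\delta(f_t(x_0,x_1)-\xi)\,\partial_t f_t]$ from \cref{eq:cond_vf} to collapse the cross term, and absorb the $(\W,\theta)$-independent pieces into the constant. The only organizational difference is that the paper starts from $\mathcal{L}(\W,\theta)$ and completes the square to land on $\mathcal{L}_{\mathrm{MFM}}$, whereas you expand both objectives separately and match terms; your treatment is in fact slightly more explicit about why the pure $\|v_t\|^2$ term matches (via the push-forward identity for $p_t$) and about how to make the delta manipulation rigorous.
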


\begin{proof}
    We postpone the proof to \cref{app:mfm_proof}.
\end{proof}
\end{mdframed}

\subsection{Learning Population Embeddings via Graph Neural Networks (GNNs)}

\begin{algorithm}[t]
    \SetKwInOut{Input}{Input}
    \SetNoFillComment
    \DontPrintSemicolon

    \setstretch{1.3} %

    \Input{dataset of populations $\{(\pi(x_0,x_1 \mid i),c^i)\}_{i=1}^N$ and treatments $c^i$, and parametric models for the velocity, $v_t(\cdot;\W)$, and population embedding $\varphi(\cdot;\theta)$.}

    \For{training iterations}{
        $i \sim \mathcal{U}_{\{1,N\}}(i)$ \tcp*[l]{\smaller sample batch of $n$ populations ids}

        $(x_0^j,x_1^j,t^j) \sim \pi(x_0,x_1\mid i)\mathcal{U}_{[0,1]}(t)$ \tcp*[l]{\smaller sample $N_i$ particles for every population $i$}

        $f_t(x_0^j, x_1^j) \gets (1 - t^j)x_0^j + t^j x_1^j\,$\;

        $h^i(\theta) \gets \varphi\left(\{x_0^j\}_{j=1}^{N_i};\theta\right)$ \tcp*[l]{\smaller embed population $\{x_0^j\}_{j=1}^{N_i}$. For \textbf{CGFM} $h \gets i$, \textbf{FM} $h \gets \emptyset$.}

        $\mathcal{L}_{\text{MFM}}(\W,\theta) \gets \frac{1}{n}\sum_{i}\frac{1}{n_i}\sum_j\norm{\frac{d}{dt}f_t(x_0^j, x_1^j) - v_{t^j}\left(f_t(x_0^j,x_1^j)\mid h^i(\theta), c^i;\W\right)}^{2}$\;

        \BlankLine
        $\W' \gets \text{Update}(\W, \nabla_{\W}\mathcal{L}_{\text{MFM}}(\W,\theta))$ \tcp*[l]{\smaller evaluate new parameters of the flow model}

        $\theta' \gets \text{Update}(\theta, \nabla_{\theta}\mathcal{L}_{\text{MFM}}(\W,\theta))$ \tcp*[l]{\smaller evaluate new parameters of the embedding model}

        $\W \gets \W',\;\; \theta \gets \theta'$ \tcp*[l]{\smaller update both models}
    }

    \Return{$v_t(\cdot;\W^*), \varphi(\cdot;\theta^*)$}

    \caption{Meta Flow Matching (training)}
    \label{algo:train_mfm}
\end{algorithm}

\begin{wrapfigure}{r}{0.55\textwidth}
    \begin{minipage}{0.54\textwidth}
    \vspace{-1.35em}
        \begin{algorithm}[H]
             \SetKwInOut{Input}{Input}
             \SetNoFillComment
             \DontPrintSemicolon

             \setstretch{1.3} %

             \Input{initial population $\{x_0^j\}_{j=1}^{N'}$, treatment condition $c^{\rev{i}}$, models $v_t(\cdot;\W^*)$ and $\varphi(\cdot;\theta^*)$.}

             \BlankLine
             $h = \varphi\left(\{x_0^j\}_{j=1}^{N'};\theta\right)$\tcp*[r]{\smaller embed the  population}
             $x_1^j = \int_0^1 v_t(x_t^j \mid h, c^{\rev{i}};\W) dt + x_0^j$\tcp*[r]{\smaller ODE solver}

             \Return{predicted population $\{x_1^j\}_{j=1}^{N'}$}

             \caption{Meta Flow Matching (sampling)}
             \label{algo:test_mfm}
         \end{algorithm}
    \vspace{-1.4em}
    \end{minipage}
\end{wrapfigure}

In many applications, the populations $\mathcal{D} = \{(\pi(x_0, x_1\cond i))\}_{i=1}^N$ are given as empirical distributions, i.e. they are represented as samples from some unknown density $\pi$
\begin{align}
    \{(x_0^j, x_1^j)\}_{j=1}^{N_i}\,,\;\;(x_0^j, x_1^j) \sim \pi(x_0,x_1\cond i)\,,
    \label{eq:data_dist_sample_coupling}
\end{align}
where $N_i$ is the size of the $i$-th population.
For instance, for the diffusion process considered in \cref{ex:diffusion}, the samples from $\pi(x_0,x_1\cond i)$ can be generated by generating some marginal $p_1(x_1\cond i)$ and then adding the Gaussian random variable to the samples $x_1^j$.
We use this model in our synthetic experiments in \cref{sec:exp_diffusion}.

Since the only available information about the populations is samples, we propose learning the embedding of populations via a parametric model $\varphi(p_0,\theta)$, i.e.
\begin{align}
     \varphi(p_0,\theta) = \varphi\left(\{x_0^j\}_{j=1}^{N_i},\theta\right)\,,\;\; (x_0^j,x_1^j) \sim \pi(x_0,x_1\cond i)\,.
\end{align}
For this purpose, we employ GNNs, which recently have been successfully applied for simulation of complicated many-body problems in physics \citep{sanchez2020learning}.
To embed a population $\{x_0^j\}_{j=1}^{N_i}$, we create a k-nearest neighbour graph $G_i$ based on the metric in the state-space $\X$, input it into a GNN, which consists of several message-passing iterations \citep{gilmer2017neural} and the final average-pooling across nodes to produce the embedding vector.
Finally, we update the parameters of the GNN jointly with the parameters of the vector field to minimize the loss function in \cref{eq:mfm_loss}. We show pseudo-code for training in Algorithm \ref{algo:train_mfm} and sampling in Algorithm \ref{algo:test_mfm}.

\section{Related Work}

\paragraph{\emph{Meta} -- amortizing learning over distributions}
The meta-learning of probability measures was previously studied by \cite{amos2022meta} where they \rev{demonstrated} that the prediction of the optimal transport paths can be efficiently amortized over the input marginal measures.
The main difference with our approach is that we are trying to learn the push-forward map without embedding the second marginal and without restricting ourselves to Input-Convex Neural Network (ICNN) by \citet{amos2017input} and optimal transport maps. MFM is \emph{meta} in the same sense of amortizing optimal transport (or in our case flow) problems over multiple input distributions. Hence, we follow the naming convention of \emph{Meta Optimal Transport} \citep{amos2022meta}. This is different but related to how \emph{meta} is used in the meta learning setting, which amortizes over learning problems \citep{hospedales2021meta, achille2019task2vec}.

\paragraph{Generative modeling for single cells}
Single cell data has expanded to encompass multiple modalities of data profiling cell state and activities \citep{cae,bunne2023learning}. Single-cell data presents multiple challenges in terms of noise, non-time resolved, and high dimension, and generative models have been used to counter those problems. Autoencoder has been used to embed and extrapolate data Out Of Distribution (OOD) with its latent state dimension~\citep{scgen,scVI,NEURIPS2022_aa933b5a}. Orthogonal non-negative matrix factorization (oNMF) has also been used for dimensionality reduction combined with mixture models for cell state prediction \citep{popalign}. Other approaches have tried to use Flow Matching (FM) ~\citep{tong2024simulation, tong2024improving, neklyudov2023computational} or similar approaches such as the Monge gap~\citep{pmlr-v202-uscidda23a} to predict cell trajectories. Currently, the state of the art method uses the principle of Optimal Transport (OT) to predict cell trajectories~\citep{makkuva_optimal_2020, bunne2023learning}. These methods are based on input convex neural network (ICNN) architectures and can generalize out of distribution to a new cells. As of this time, our method is the only method that takes inter-cellular interactions into account and learns embeddings of entire cell populations to generalize across unseen distributions.  

\paragraph{Generative modeling for physical processes}
The closest approach to ours is the prediction of the many-body interactions in physics \citep{sanchez2020learning} via GNNs.
However, the problem there is very different since these models use the information about the individual trajectories of samples, which are not available for the single-cell prediction. 
\cite{liu2022DeepGS,liu2024generalizedschrodingerbridgematching} consider a generalized form of Schr\"odinger bridges also with interacting terms, but do not consider the generalization to unseen distributions. \cite{neklyudov2022action} consider learning the vector field for any continuous time-evolution of a probability measure, however, their method is restricted to single curves and do not consider generalization to unseen data.
\cite{campbell2024generative} use input and space independent conditions for applications in co-protein design, but similarly, their method cannot generalize to novel distributions.
Finally, the weather/climate forecast models generating the next state conditioned on the previous one \citep{price2023gencast, verma2024climode} are similar approaches to ours but operating on a much finer time resolution.
 
\section{Experiments}
\label{sec:exps}

\vspace{-0.3em}
To show the effectiveness of MFM to generalize under previously unseen populations for the population prediction \rev{task}, we consider two experimental settings. (i) A synthetic experiment with well defined coupled populations, and (ii) experiments on a publicly available single-cell dataset consisting of populations from patient dependent treatment response trials. 
We parameterize all vector field models $v_t(\cdot \cond \varphi(p_0); \W)$ using a Multi-Layer Perceptron (MLP). 
For MFM, we additionally parameterize $\varphi(p_t; \theta, k)$ using a Graph Convolutional Network (GCN) with a $k$-nearest neighbor graph edge pooling layer. 
We include details regarding model hyperparameters, training/optimization, and implementation in \cref{ap:experimental_details} and \cref{ap:bio_expts}.
We report results over 3 random seeds.

\vspace{-3pt}
\subsection{Synthetic Experiment}
\label{sec:exp_diffusion}

\begin{figure}[t]
\vspace{-10pt}
    \centering
    \begin{minipage}[t]{0.32\textwidth}
        \centering
        \begin{tikzpicture}[every node/.style={inner sep=0pt,outer sep=0pt,anchor=south west}]
          \node (source) at (0,0) {\includegraphics[width=\textwidth]{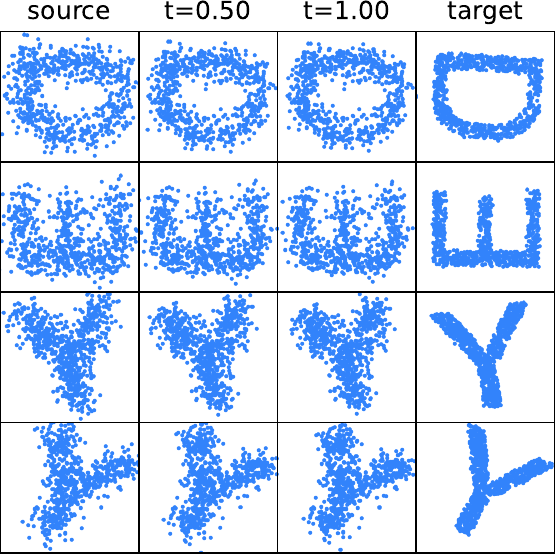}};
          \begin{scope}[overlay] %
            \node[rotate=90] at (-1mm, 2.7cm) {\textbf{Train}};
            \node[rotate=90] at (-1mm, 0.7cm) {\textbf{Test}};
            \draw[line width=1.5pt] (3.35cm, 0) -- ++(0cm, 4.2cm);
            \draw[line width=1.5pt,] (0, 2.1) -- ++(\textwidth, 0);
          \end{scope}
        \end{tikzpicture}
        FM \\
    \end{minipage}
    \begin{minipage}[t]{0.32\textwidth}
        \centering
        \begin{tikzpicture}[every node/.style={inner sep=0pt,outer sep=0pt,anchor=south west}]
          \node (source) at (0,0) {\includegraphics[width=\textwidth]{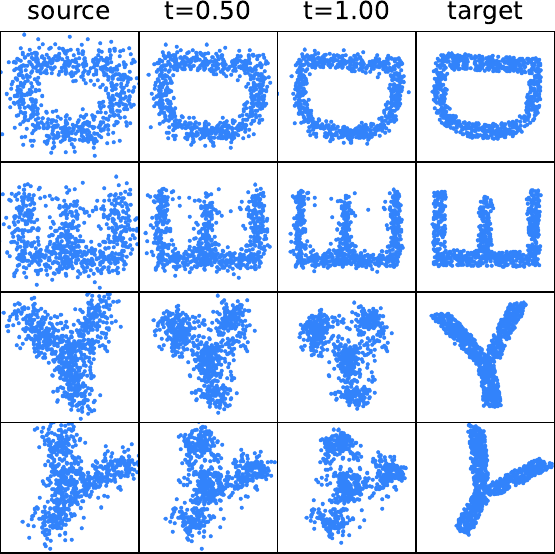}};
          \begin{scope}[overlay] %
            \draw[line width=1.5pt] (3.35cm, 0) -- ++(0cm, 4.2cm);
            \draw[line width=1.5pt,] (0, 2.1) -- ++(\textwidth, 0);
          \end{scope}
        \end{tikzpicture}
        CGFM \\
    \end{minipage}
    \begin{minipage}[t]{0.32\textwidth}
        \centering
        \begin{tikzpicture}[every node/.style={inner sep=0pt,outer sep=0pt,anchor=south west}]
          \node (source) at (0,0) {\includegraphics[width=\textwidth]{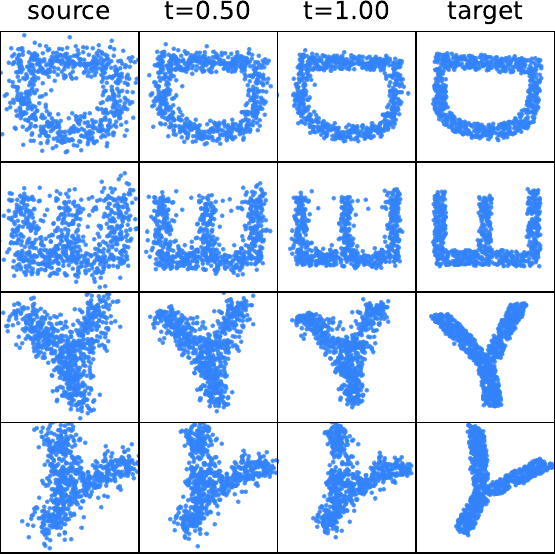}};
          \begin{scope}[overlay] %
            \draw[line width=1.5pt] (3.35cm, 0) -- ++(0cm, 4.2cm);
            \draw[line width=1.5pt,] (0, 2.1) -- ++(\textwidth, 0);
          \end{scope}
        \end{tikzpicture}
        MFM \\
    \end{minipage}    
    \vspace{-5pt}
    \caption{\textbf{Synthetic letters experiment visualizations.} Examples of model-generated samples from the source distribution ($t=0$) to predicted target distribution ($t=1$). 
    See \cref{fig:letters_full} \cref{ap:extended_results} for further examples.}
    \label{fig:letters}
\end{figure}

\begin{table}[t]
    \vspace{-10pt}
    \centering
    \caption{\textbf{Results of the synthetic letters experiment for population prediction on seen train populations and unseen test populations.} 
    We report the 1-Wasserstein ($\cW_1$), 2-Wasserstein ($\cW_2$), and the maximum-mean-discrepancy (MMD) distributional distances. We consider 4 settings for MFM with varying $k$. 
    We use $^{\text{w/}}\mathcal{N}$ to denote models that use Gaussian source distributions sampled via $x_0 \sim \mathcal{N}(\mathbf{0}, \mathbf{1})$, but maintains the noisy letters source populations $p_0$ as input to the population embedding model $\varphi(p_0; \theta, k)$.
    Populations that comprise test sets (X and Y) are entirely unseen during training.
    We include an ablation on the effect of changing the number of training populations (\cref{fig:letters_ablation}) and results when using OT couplings between samples (\cref{tab:ot_letters}) in \cref{ap:extended_results}.
    }
    \vspace{-1em}
    \resizebox{\columnwidth}{!}{
    \begin{tabular}{lrrrrrrrrr}
            \toprule
            & \multicolumn{3}{c}{\textbf{Train}} & \multicolumn{3}{c}{\textbf{Test} (X's)} & \multicolumn{3}{c}{\textbf{Test} (Y's)} \\
            \cmidrule(lr){2-4} \cmidrule(lr){5-7} \cmidrule(lr){8-10}
            & \multicolumn{1}{c}{$\cW_1 (\downarrow)$} & \multicolumn{1}{c}{$\cW_2 (\downarrow)$} & \multicolumn{1}{c}{MMD {\tiny ($\times 10^{-3}$)} $(\downarrow)$} & \multicolumn{1}{c}{$\cW_1 (\downarrow)$} & \multicolumn{1}{c}{$\cW_2 (\downarrow)$} & \multicolumn{1}{c}{MMD {\tiny ($\times 10^{-3}$)} $(\downarrow)$}  & \multicolumn{1}{c}{$\cW_1 (\downarrow)$} & \multicolumn{1}{c}{$\cW_2 (\downarrow)$} & \multicolumn{1}{c}{MMD {\tiny ($\times 10^{-3}$)} $(\downarrow)$}\\
            \midrule
            FM & $0.209 \pm 0.000$ & $0.277 \pm 0.000$ & $2.54 \pm 0.00$ & $0.234 \pm 0.000$ & $0.309 \pm 0.000$ & $2.45 \pm 0.00$ & $0.238 \pm 0.000$ & $0.316 \pm 0.000$ & $3.32 \pm 0.01$ \\
            FM$^{\text{w/}}\mathcal{N}$ & $0.806 \pm 0.000$ & $0.960 \pm 0.000$ & $31.68 \pm 0.00$ & $0.764 \pm 0.000$ & $0.931 \pm 0.000$ & $25.04 \pm 0.00$ & $1.030 \pm 0.000$ & $1.228 \pm 0.000$ & $45.36 \pm 0.00$ \\
            CGFM & $0.090 \pm 0.000$ & $0.113 \pm 0.000$ & $0.25 \pm 0.00$ & $0.334 \pm 0.000$ & $0.407 \pm 0.000$ & $5.55 \pm 0.00$ & $0.327 \pm 0.000$ & $0.405 \pm 0.000$ & $6.85 \pm 0.00$ \\
            CGFM$^{\text{w/}}\mathcal{N}$ & $0.156 \pm 0.025$ & $0.201 \pm 0.027$ & $1.02 \pm 0.39$ & $0.849 \pm 0.004$ & $0.993 \pm 0.003$ & $35.08 \pm 0.75$ & $1.062 \pm 0.011$ & $1.229 \pm 0.010$ & $55.66 \pm 0.76$ \\
            \midrule
            MFM$_{k=0}$ $^{\text{w/}}\mathcal{N}$ \hfill (ours) & $0.148 \pm 0.003$ & $0.195 \pm 0.010$ & $0.94 \pm 0.11$ & $0.347 \pm 0.011$ & $0.431 \pm 0.012$ & $6.47 \pm 0.44$ & $0.402 \pm 0.011$ & $0.485 \pm 0.010$ & $10.92 \pm 0.18$ \\
            MFM$_{k=1}$ $^{\text{w/}}\mathcal{N}$ \hfill (ours) & $0.154 \pm 0.004$ & $0.208 \pm 0.010$ & $0.91 \pm 0.01$ & $0.349 \pm 0.023$ & $0.433 \pm 0.023$ & $6.53 \pm 0.52$ & $0.391 \pm 0.035$ & $0.477 \pm 0.041$ & $10.71 \pm 1.86$ \\
            MFM$_{k=10}$ $^{\text{w/}}\mathcal{N}$ \hfill (ours) & $0.151 \pm 0.013$ & $0.197 \pm 0.015$ & $0.94 \pm 0.15$ & $0.343 \pm 0.020$ & $0.427 \pm 0.019$ & $6.38 \pm 0.67$ & $0.413 \pm 0.018$ & $0.502 \pm 0.024$ & $11.93 \pm 1.14$ \\
            MFM$_{k=50}$ $^{\text{w/}}\mathcal{N}$ \hfill (ours) & $0.174 \pm 0.005$ & $0.232 \pm 0.006$ & $1.40 \pm 0.13$ & $0.363 \pm 0.010$ & $0.449 \pm 0.013$ & $7.46 \pm 0.44$ & $0.446 \pm 0.021$ & $0.536 \pm 0.028$ & $13.40 \pm 0.23$ \\
            \midrule
            MFM$_{k=0}$ \hfill (ours) & $\textbf{0.081} \pm \textbf{0.003}$ & $\textbf{0.100} \pm \textbf{0.004}$ & $\textbf{0.16} \pm \textbf{0.06}$ & $0.202 \pm 0.002$ & $0.249 \pm 0.003$ & $2.29 \pm 0.05$ & $0.218 \pm 0.001$ & $0.262 \pm 0.002$ & $3.79 \pm 0.11$ \\
            MFM$_{k=1}$ \hfill (ours) & $0.082 \pm 0.001$ & $0.101 \pm 0.002$ & $\textbf{0.16} \pm \textbf{0.01}$ & $0.205 \pm 0.008$ & $0.251 \pm 0.008$ & $2.38 \pm 0.22$ & $0.215 \pm 0.006$ & $0.258 \pm 0.007$ & $3.78 \pm 0.25$ \\
            MFM$_{k=10}$ \hfill (ours) & $0.088 \pm 0.002$ & $0.109 \pm 0.003$ & $0.21 \pm 0.01$ & $\textbf{0.201} \pm \textbf{0.006}$ & $\textbf{0.248} \pm \textbf{0.006}$ & $2.20 \pm 0.15$ & $0.208 \pm 0.003$ & $0.252 \pm 0.002$ & $3.55 \pm 0.06$ \\
            MFM$_{k=50}$ \hfill (ours) & $0.092 \pm 0.004$ & $0.116 \pm 0.004$ & $0.25 \pm 0.06$ & $0.206 \pm 0.008$ & $0.257 \pm 0.008$ & $\textbf{2.18} \pm \textbf{0.25}$ & $\textbf{0.204} \pm \textbf{0.005}$ & $\textbf{0.249} \pm \textbf{0.006}$ & $\textbf{3.14} \pm \textbf{0.18}$ \\
            \bottomrule
        \end{tabular}
    }
    \label{tab:letters}
\end{table}

\vspace{-0.3em}
\paragraph{Synthetic data.} We curate a synthetic dataset of the joint distributions $\{(p_0(x_0,\cond i), p_1(x_1\cond i))\}_{i=1}^N$ by simulating a diffusion process applied to a set of pre-defined target distributions $p_1(x_1\cond i)$ for $i = 1, \dots, N$. 
To get a paired population $p_0(x_0\cond i)$ we simulate the forward diffusion process without drift $x_0 \sim \Normal(x_1,\sigma)$.
After this setup, for reasonable values of $\sigma$, we assume that one can reverse the diffusion process and learn the push-forward map from $p_0(x_0\cond i)$ to $p_1(x_1\cond i)$ for every index $i$. 
For this task, given the $i$-th population index we denote $p_0(x_0\cond i)$ as the \emph{source} population $p_1(x_1\cond i)$ as the $i$-th \emph{target} population.

To construct $p_1(x_1\cond i)$, we discretize samples from a defined silhouette; e.g. an image of a character, where $i$ indexes the respective character. 
We use upper case letters as the silhouette and generate the corresponding samples $x_1 \sim p_1(x_1\cond i)$ from the uniform distribution over the silhouette and run the diffusion process for samples $x_1$ to acquire $x_0$.
We construct the \emph{training data} using 10 random orientations of 24 letters.
We construct the \emph{test data} by using 10 random orientations of ``X'' and ``Y''. Test data populations of ``X'' and ``Y'' are entirely unseen during training.

We train FM, CGFM and 4 variants of MFM of varying $k$ for the GCN population embedding model $\varphi(p_0; \theta, k)$. When $k=0$, $\varphi(p_t; \theta, k)$ becomes identical to the DeepSets model \citep{zaheer2017deep}.
We compare MFM to FM and CGFM. 
We repeat this experiment for models trained with source distributions sampled from a standard normal $x_0 \sim \mathcal{N}(\mathbf{0}, \mathbf{1})$. We label these models as FM$^{\text{w/}}\mathcal{N}$, CGFM$^{\text{w/}}\mathcal{N}$, and MFM$^{\text{w/}}\mathcal{N}$. Here, MFM$^{\text{w/}}\mathcal{N}$ still takes the original $p_0$ as input to the population embedding model $\varphi(p_0; \theta, k)$, while $v_t(\cdot;\W)$ uses $x_0 \sim \mathcal{N}(\mathbf{0}, \mathbf{1})$ as the source distribution. 

\vspace{-0.3em}
\paragraph{MFM generalizes to populations from unseen letter silhouettes.} FM does not have access to conditional information; hence will only learn an aggregated lens of the distribution dynamics and will not be able to fit the training data, and consequently won't generalize to the test conditions.
For the training data, the CGFM vector field model takes in the distribution index $i$ as a one-hot input condition. 
On the test set, since none of these indices is present, we input the normalized constant vector, which averages the learned embeddings of the indices.
Because of this, CGFM will fit the training data, however, will not be able to generalize to the unseen condition in the test dataset. 
CGFM indicates when the model can fit the training data and demonstrates that the test data is substantially different and cannot be generated by the same model.
For MFM, we expect to both fit the training data and generalize to unseen distributional conditions.

We report results for the synthetic experiment in \cref{fig:letters} and \cref{tab:letters}. We observe that indeed FM struggles to adequately learn to sample from $p_1(x_1\cond i)$ in the training set, while CGFM is able to effectively sample from $p_1(x_1\cond i)$ in the training set.
As expected, CGFM fits the training data, but struggles to generalize beyond its set of training conditions. 
In contrast, we see that MFM is able to both fit the training data while also generalizing to the unseen test distributions. 
Interestingly, although MFM performs better for certain values of $k$ versus others, performance does not vary significantly for the range considered.
In \cref{fig:embeddings} (\cref{ap:embeddings}) we plot the letter population embeddings.

\vspace{-4pt}
\subsection{Experiments on Organoid Drug-screen Data}
\label{sec:exp_bio}
\vspace{-0.3em}

\begin{wrapfigure}{R}{0.55\textwidth}
     \vspace{-1.4em}
     \begin{minipage}{0.55\textwidth}
            \centering
            \includegraphics[width=0.98\textwidth]{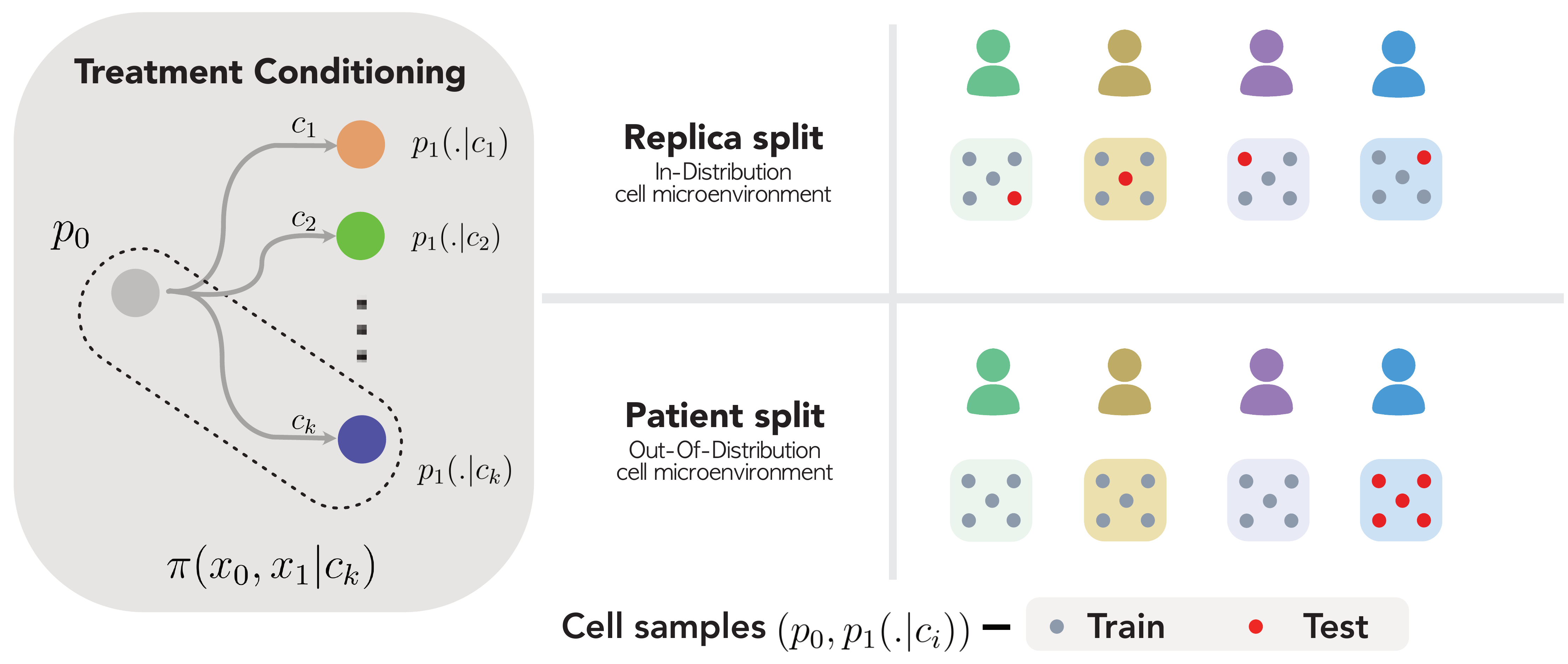}
            \vspace{-0.75em}
            \caption{\textbf{Organoid drug-screen dataset overview.} (\emph{Left}) a given replica consists of a control distribution $p_0$ and corresponding treatment response distribution $p_1$ for treatment condition $c_i$. (\emph{Right}) train and test data splits for replicates (top) and patients (bottom). Experiments are conducted for 11 treatments, 10 patients, 3 culture conditions, and repeated (replicated) numerous times, resulting in a dataset of many control and treated population pairs.}
        \label{fig:exp_design}
    \end{minipage}
    \vspace{-1em}
\end{wrapfigure}

\textbf{Organoid drug-screen data.}
For experiments on biological data, we use the organoid drug-screen dataset from \cite{RamosZapatero2023}. This dataset is a single-cell mass-cytometry dataset collected over 10 patients. Somewhat unique to this dataset, unlike many prior perturbation-screen datasets which have a single control population, this dataset has matched controls to each experimental condition. Populations from each patient are treated with 11 different drug treatments of varying dose concentrations.\footnote{We consider only the highest dosage and leave exploration of dose-dependent response to future work.}
We use the term \emph{replicate} to define control-treatment population pairs, $p_0(x_0\cond c_i)$ and $p_1(x_1\cond c_i)$, respectively (see \cref{fig:exp_design}-left). After pre-processing, we acquire a total of 927 population pairs.
In each patient, cell population are categorized into 3 cell \emph{cultures}: (i) cancer associated Fibroblasts, (ii) patient-derived organoid cancer cells (PDO), and (iii)  patient-derived organoid cancer cells co-cultured fibroblasts (PDOF). 
We report results for the individual  cultures, i.e. Fibroblast, PDO, and PDOF cultures, in \cref{ap:extended_results}. 

\begin{wraptable}{R}{0.55\textwidth}
        \centering
        \vspace{-1.3em}
        \caption{\textbf{Experimental results on the organoid drug-screen dataset} for population prediction of treatment response over \textbf{\textit{replicates}}. 
        We use $^{\text{w/}}\text{OT}$ to denote models that incorporate OT couplings between source and target distributions. Note, the ICNN model operates in the OT regime.
        We use $^{\text{w/}}\mathcal{N}$ to denote models that use Gaussian source distributions sampled via $x_0 \sim \mathcal{N}(\mathbf{0}, \mathbf{1})$, but maintain the control populations $p_0$ as input to the population embedding model $\varphi(p_0; \theta, k)$.
        We denote the best non-OT method with \textbf{bold} and the best OT-based method with \underline{underline}.
        }
        \vspace{-1em}
        \resizebox{\linewidth}{!}{
        \begin{tabular}{lrrrr}
            \toprule
            & $\cW_1 (\downarrow)$ & $\cW_2 (\downarrow)$ & MMD {\tiny ($\times 10^{-3}$)} $(\downarrow)$ & $r^2 (\uparrow)$ \\
            \midrule
            $d(p_0, p_1)$ & $4.513$ & $4.695$ & $19.14$ & $0.876$ \\
            $d(p_0 - \mu_0 + \Tilde{\mu}_1, p_1)$ & $6.222$ & $6.346 $ & $74.90$ & $0.876$ \\
            \midrule
            FM & $4.340 \pm 0.078$ & $4.564 \pm 0.111$ & $13.00 \pm 0.67$ & $0.865 \pm 0.034$ \\
            CGFM & $4.443 \pm 0.033$ & $4.621 \pm 0.041$ & $17.00 \pm 1.03$ & $0.899 \pm 0.008$ \\
            MFM$_{k=0}$ \hfill (ours) & $4.209 \pm 0.007$ & $4.380 \pm 0.012$ & $12.34 \pm 0.50$ & $\textbf{0.918} \pm \textbf{0.002}$ \\
            MFM$_{k=10}$ \hfill (ours) & $4.216 \pm 0.090$ & $4.395 \pm 0.098$ & $11.99 \pm 2.36$ & $0.917 \pm 0.005$ \\
            MFM$_{k=50}$ \hfill (ours) & $4.214 \pm 0.017$ & $4.396 \pm 0.020$ & $12.09 \pm 0.75$ & $0.916 \pm 0.002$ \\
            MFM$_{k=100}$ \hfill (ours) & $\textbf{4.100} \pm \textbf{0.093}$ & $\textbf{4.269} \pm \textbf{0.104}$ & $\textbf{8.96} \pm \textbf{1.88}$ & $0.917 \pm 0.004$ \\
            \midrule
            FM$^{\text{w/}}\mathcal{N}$ & $7.114 \pm 0.100$ & $7.404 \pm 0.086$ & $64.97 \pm 3.79$ & $0.613 \pm 0.008$ \\
            CGFM$^{\text{w/}}\mathcal{N}$ & $7.135 \pm 0.045$ & $7.390 \pm 0.037$ & $79.78 \pm 4.67$ & $0.637 \pm 0.010$ \\
            MFM$_{k=0}$ $^{\text{w/}}\mathcal{N}$ \hfill (ours) & $4.177 \pm 0.042$ & $4.355 \pm 0.048$ & $10.53 \pm 0.59$ & $0.911 \pm 0.001$ \\
            MFM$_{k=10}$ $^{\text{w/}}\mathcal{N}$ \hfill (ours) & $4.156 \pm 0.065$ & $4.324 \pm 0.067$ & $9.58 \pm 1.63$ & $0.912 \pm 0.003$ \\
            MFM$_{k=50}$ $^{\text{w/}}\mathcal{N}$ \hfill (ours) & $4.153 \pm 0.069$ & $4.324 \pm 0.070$ & $9.63 \pm 1.45$ & $0.912 \pm 0.002$ \\
            MFM$_{k=100}$ $^{\text{w/}}\mathcal{N}$ \hfill (ours) & $4.166 \pm 0.001$ & $4.341 \pm 0.003$ & $9.52 \pm 0.33$ & $0.915 \pm 0.005$ \\
            \midrule
            FM$^{\text{w/}}\text{OT}$ & $\underline{4.210} \pm \underline{0.006}$ & $\underline{4.397} \pm \underline{0.001}$ &  $\underline{12.16} \pm \underline{0.72}$ & $\underline{0.910} \pm \underline{0.005}$ \\
            CGFM$^{\text{w/}}\text{OT}$ & $4.356 \pm 0.027$ & $4.531 \pm 0.025$ &  $15.82 \pm 0.19$ & $0.909 \pm 0.003$ \\
            ICNN & $4.488 \pm 0.035$ &  $4.665 \pm 0.038$ &  $17.60 \pm 0.55$ &  $0.884 \pm 0.002$ \\
            \bottomrule
        \end{tabular}
        }
    \label{tab:test_avg_replicates}
    \vspace{-1em}
\end{wraptable}

\vspace{-0.3em}
\paragraph{Pre-processing and data splits.} We filter each cell population to contain at least 1000 cells and use 43 bio-markers.
We construct two data splits for the organoid drug-screen dataset (see \cref{fig:exp_design}-right). \emph{\textbf{Replicate split}}; here we consider 2 settings, (i) leave-out replicates evenly across all patients for testing (\textit{\textbf{replica-}1}, depicted in \cref{fig:exp_design} see \cref{tab:avg_replicates_1} for results on this split), and (ii) leaving out a batch of replicates from a single patient while including the remaining replicas from the same patient for training (\textit{\textbf{replica-2}}, results in \cref{tab:test_avg_replicates}). The second setting is the \emph{\textbf{Patients split}} (results in \cref{tab:patients_3_splits}); here we leave-out replicates fully in one patients -- in this setting, we are testing the ability of the model to generalize population prediction of treatment response for unseen patients. We do this for 3 different patients and report results across these independent splits.
Further details regarding the organoid drug-screen dataset, data pre-processing, and data splits are provided in \cref{ap:bio_expts}.

\vspace{-0.3em}
\paragraph{\textbf{Baselines.}} For the organoid drug-screen experiments, we also consider an ICNN model as a baseline as well as the OT variations of FM and CGFM. The ICNN is equivalent to the architecture used in CellOT \citep{bunne2023learning}; a method for learning cell specific response to treatments. The ICNN (and likewise CellOT) counterparts our FM$^{\text{w/}}$OT model in that it does not take the population index $i$ as a condition. CGFM takes in the population identity index $i$ as a one-hot input condition. Unlike CGFM, MFM does not require knowledge of population identities. We additionally include two model-agnostic baselines: (i) the distributional distances between the target and source distributions $d(p_0, p_1)$, and (ii) the distributional distances between the target and the source distributions shifted by the average mean of target distributions ($\Tilde{\mu}_1$) of the training data $d(p_0 - \mu_0 + \Tilde{\mu}_1, p_1)$.

\vspace{-0.3em}
\paragraph{Predicting treatment response across replicates.} We show results for generalization across replicates for the \textit{\textbf{replica-2}} split in \cref{tab:test_avg_replicates}. 
In \cref{ap:extended_results}, we include an extended evaluation as well as report results for separate cell cultures in \cref{tab:replicates_1_fibs_pdos_pdofs} and \cref{tab:replicates_2_fibs_pdos_pdofs}.
We observe that MFM outperforms all baselines on left-out replicas. Although this is in general a simpler generalization problem compared to the left-out patient populations setting, this result suggests that there is arguably sufficient biological heterogeneity across replicas for MFM to learn meaningful embeddings.

\begin{wraptable}{R}{0.58\textwidth}
        \centering
        \vspace{-1.2em}
        \caption{\textbf{Experimental results on the organoid drug-screen dataset} for population prediction of treatment response across left-out \emph{\textbf{patient}} populations. 
        We report the mean and standard deviations across metrics computed over $3$ different patient splits.
        We denote the best non-OT method with \textbf{bold} and the best OT-based method with \underline{underline}.
        }
        \vspace{-1em}
        \resizebox{\linewidth}{!}{%
        \begin{tabular}{lrrrr}
            \toprule
            & $\cW_1 (\downarrow)$ & $\cW_2 (\downarrow)$ & MMD {\tiny ($\times 10^{-3}$)} $(\downarrow)$ & $r^2 (\uparrow)$ \\
            \midrule
            $d(p_0, p_1)$ & $4.175 \pm 0.135$ & $4.303 \pm 0.174$ & $12.11 \pm 2.07$ & $0.902 \pm 0.006$ \\
            $d(p_0 - \mu_0 + \Tilde{\mu}_1, p_1)$ & $6.158 \pm 0.239$ & $6.235 \pm 0.229$ & $77.74 \pm 10.72$ & $0.902 \pm 0.006$ \\
            \midrule
            FM & $4.171 \pm 0.107$ & $4.315 \pm 0.142$ & $10.95 \pm 1.98$ & $0.897 \pm 0.023$ \\
            CGFM & $4.189 \pm 0.088$ & $4.321 \pm 0.119$ & $11.57 \pm 0.96$ & $0.914 \pm 0.008$ \\
            MFM$_{k=0}$ \hfill (ours) & $4.135 \pm 0.094$ & $4.268 \pm 0.128$ & $10.18 \pm 1.28$ & $0.918 \pm 0.007$ \\
            MFM$_{k=10}$ \hfill (ours) & $4.112 \pm 0.086$ & $4.243 \pm 0.121$ & $9.90 \pm 0.99$ & $0.925 \pm 0.008$ \\
            MFM$_{k=50}$ \hfill (ours) & $\textbf{4.087} \pm \textbf{0.122}$ & $\textbf{4.218} \pm \textbf{0.160}$ & $\textbf{9.26} \pm \textbf{1.56}$ & $0.926 \pm 0.007$ \\
            MFM$_{k=100}$ \hfill (ours) & $4.112 \pm 0.148$ & $4.244 \pm 0.186$ & $9.63 \pm 2.08$ & $\textbf{0.931} \pm \textbf{0.002}$ \\
            \midrule
            FM$^{\text{w/}}\text{OT}$ & $\underline{4.064} \pm \underline{0.152}$ & $\underline{4.189} \pm \underline{0.194}$ & $\underline{9.44} \pm \underline{2.49}$ & $\underline{0.932} \pm \underline{0.005}$ \\
            CGFM$^{\text{w/}}\text{OT}$ & $4.087 \pm 0.129$ & $4.217 \pm 0.165$ & $9.83 \pm 2.03$ & $0.924 \pm 0.009$ \\
            ICNN &  $4.157 \pm 0.168$ &  $4.282 \pm 0.213$ &  $11.18 \pm 2.51$ &  $0.904 \pm 0.005$ \\
            \bottomrule
        \end{tabular}
        }
    \label{tab:patients_3_splits}
    \vspace{-1.5em}
\end{wraptable}

\vspace{-0.3em}
\paragraph{Predicting treatment response across patients.} We show results for generalization across patients in \cref{tab:patients_3_splits}.
We observe that MFM (w/o OT) outperforms all non-OT baseline methods as well as the ICNN (which learns OT couplings), while yielding competitive performance to the FM$^{\text{w/}}\text{OT}$ and CGFM$^{\text{w/}}\text{OT}$. All methods yield generally equal degrees of variation across 3 patient splits, with comparable variation relative to the trivial baseline. See \cref{ap:embeddings} for an analysis of population embeddings.

Through the biological and synthetic experiments, we have shown that MFM is able to generalize to unseen distributions. The implication of our results suggest that MFM can learn population dynamics in unseen environments. In biological contexts, like the one we have shown in this work, this result indicates that we can improve learning of population dynamics, of treatment response or any arbitrary perturbation, in new/unseen patients. This works towards a model where it is possible to predict and design an individualized treatment regimen for each patient based on their individual characteristics and tumor microenvironment.

\vspace{-5pt}
\section{Conclusion}
\vspace{-0.7em}
Our paper highlights the significance of modeling dynamics based on the entire distribution. 
While flow-based models offer a promising avenue for learning dynamics at the population level, they were previously restricted to a single initial population and predefined (known) conditions.
In this paper, we introduce \emph{Meta Flow Matching} (MFM) as a practical solution to address these limitations. By integrating along vector fields of the Wasserstein manifold, MFM allows for a more comprehensive model of dynamical systems with interacting particles. Crucially, MFM leverages graph neural networks to embed the initial population, enabling the model to generalize over various initial distributions. MFM opens up new possibilities for understanding complex phenomena that emerge from interacting systems in biological and physical systems. In practice, we demonstrate that MFM learns meaningful embeddings of single-cell populations along with the developmental model of these populations.
Moreover, our empirical study demonstrates the possibility of modeling patient-specific response to treatments via the meta-learning. We discuss broader impacts of this work in \cref{ap:broader_impact}.

\vspace{-0.3em}
\paragraph{Limitations \& Future work.}
In this work, we focused on exploring the amortization of learning over the space of distributions using flow matching.
We argue this is a more natural model for many biological systems. However, there are many other aspects of modeling biological systems that we did not consider. In particular we did not consider extensions to the manifold setting~\citep{huguet_manifold_2022,huguet_geodesic_2022}, unbalanced optimal transport~\citep{benamou_numerical_2003,yang_scalable_2019,chizat_unbalanced_2018}, aligned~\citep{somnath_aligned_2023,liu2023i2sb}, or stochastic settings~\citep{bunne2023learning,koshizuka_neural_2022} in this work. 
We observed that OT improves performance of FM and CGFM in the biological settings which we considered. As such, we view investigating the use of OT in MFM as a natural future direction. Since the focus of this work was to investigate the effect of conditioning flow models on initial distributions, we leave this exploration for future work.
We also note that our framework can be extended beyond flow matching. One of the central innovations in this work is conditioning vector fields on representations of entire distributions. Hence, this can be easily extended to other training regimes. Furthermore, we proposed the joint training of parameters $\W$ and $\theta$ for the vector field $v_t(\cdot; \W)$ and population embedding model $\varphi(\cdot; \theta)$ using the loss defined in \cref{eq:mfm_loss}. Although this is a novel approach to $v_t(\cdot | \varphi(\cdot; \theta); \W)$ training, there remains room to explore the pre-training and/or use of different losses for the population embedding model.

\section*{Reproducibility Statement}
To ensure reproducibility of our findings and results we \rev{submitted} our source code as supplementary materials. In addition, we describe the mathematical details of our method throughout \cref{sec:meta_flow_matching} and provide pseudo-code to reproduce training and inference in Algorithm~\ref{algo:train_mfm} and Algorithm~\ref{algo:test_mfm}. 
We provide details regarding parameterization of models, optimization, and implementation in \cref{ap:arch} and \cref{ap:compute}.
For our datasets, we include details regarding the synthetic letters dataset and how to construct in it \cref{sec:exp_diffusion} and \cref{ap:letters_expts}, and details for data download and data processing for the Organoid drug-screen data in \cref{sec:exp_bio} and \cref{ap:bio_expts}.
Information regarding the datasets that we consider in this work can also be found in the source code.
For our theoretical contributions, we state our assumptions throughout the text in \cref{sec:background} and \cref{sec:meta_flow_matching} and provide detailed proofs for \cref{math:prop1} (in the main text) and \cref{math:thrm1} (in \cref{app:mfm_proof}).

\section*{Acknowledgments}

The authors acknowledge funding from UNIQUE, CIFAR, NSERC, Intel, and Samsung. The research was enabled in part by by the Province of Ontario and companies sponsoring the Vector Institute (\url{http://vectorinstitute.ai/partners/}), the computational resources provided by the Digital Research Alliance of Canada (\url{https://alliancecan.ca}), Mila (\url{https://mila.quebec}), and NVIDIA.
KN is supported by IVADO.

\bibliographystyle{apalike}
\bibliography{refs}

\clearpage
\appendix

\section{Definitions}
\label{app:defs}
\rev{
All the densities we consider are the densities of the probability measures from the following set
\begin{align}
    \P_2(\X) \coloneqq \left\{\mu \in \P(\X) : \int_\X d\mu(x)\;d(x,\bar{x})^2 < +\infty \;\; \text{ for some }\;\; \bar{x}\in \X\right\}\,,
\end{align}
where $\P(\X)$ is the family of all Borel probability measures on $\X$.}

\rev{
Thus, one can introduce 2-Wasserstein distance between measures in $\P_2(\X)$, i.e.
\begin{align}
    W_2^2(\mu_0, \mu_1) \coloneqq \min\left\{ \int_0^1 dt\; \norm{v_t}^2_{L^2(p_t;\X)}: \; \deriv{p_t}{t} + \inner{\nabla}{p_tv_t} = 0\right\}\,,
\end{align}
where we used the Benamou-Brenier formulation of optimal transport \citep{benamou_numerical_2003}.
That is, the possible changes of the density are restricted by the continuity equation
\begin{align}
    \deriv{p_t(x)}{t} + \inner{\nabla_x}{p_t(x)v_t(x)} = 0\,,
\end{align}
where the define the divergence as
\begin{align}
    \inner{\nabla_x}{v(x)} = \sum_{i=1}^d \deriv{v_i(x)}{x_i}\,.
\end{align}}

\rev{
Note that $W_2^2(\cdot, \cdot)$ define a metric on $\P_2(\X)$, which allows to introduce absolutely-continuous curves and the following tangent space
\begin{align}
    \text{Tan}_\mu \P_2(\X) \coloneqq \overline{\left\{\nabla s: s \in C^\infty(\X)\right\}}^{L^2(\mu;\X)} \;\;\forall \mu \in \P_2(\X)\,.
\end{align}
These developments lead to the celebrated Riemannian interpretation of the 2-Wasserstein distance $W_2^2(\cdot, \cdot)$ on $\P_2(\X)$ proposed by \citep{otto2001geometry} and rigorously studied and presented in \citep{ambrosio2008gradient}.}

\section{Proof of \cref{th:mfm}}
\label{app:mfm_proof}

\mfm*
\begin{proof}
The loss function
\begin{align}
    \mathcal{L}(\W, \theta)=~&\mean_{p(c)}\int_0^1 dt\; \mean_{p_t(x_t\cond c)}\norm{v^*_{t}(x_t \cond c)-v_t(x_t \cond \varphi(p_t,\theta); \W)}^{2} \\
    =~& -2\mean_{p(c)}\int dtdx\;\left\langle p_{t}(x\cond c) v^*_{t}(x\cond c),v_t(x\cond \varphi(p_t,\theta); \W)\right\rangle + \\
    ~& + \mean_{p(c)}\int_0^1 dt\;\mean_{p_{t}(x_t\cond c)}\norm{v_t(x_t \cond \varphi(p_t,\theta), \W)}^{2} + \\
    ~& + \mean_{p(c)}\int_0^1 dt\;\mean_{p_{t}(x_t\cond c)}\norm{v_t^*(x_t \cond c)}^{2}\,.
\end{align}
The last term does not depend on $\theta$, the second term we can estimate, for the first term, we use the formula for the (from \cref{eq:cond_vf})
\begin{align}
p_{t}(\xi\cond c)v^*_{t}(\xi\cond c)=\mean_{\pi(x_0,x_1)}\delta(f_{t}(x_0, x_1)-\xi)\frac{\partial f_{t}(x_0, x_1)}{\partial t}\,.
\end{align}
Thus, the loss is equivalent (up to a constant) to
\begin{align}
    \mathcal{L}(\W, \theta)=~& -2\mean_{p(c)}\mean_{\pi(x_0,x_1\cond c)}\int dt\;\left\langle \deriv{f_{t}(x_0, x_1)}{t},v_t(f_{t}(x_0, x_1)\cond \varphi(p_t,\theta); \W)\right\rangle + \\
    ~& + \mean_{p(c)}\mean_{\pi(x_0,x_1\cond c)}\int_0^1 dt\;\norm{v_t(f_{t}(x_0, x_1) \cond \varphi(p_t,\theta), \W)}^{2} \pm \\
    ~& \pm \mean_{p(c)}\mean_{\pi(x_0,x_1\cond c)}\int_0^1 dt\;\norm{\deriv{f_{t}(x_0, x_1)}{t}}^{2}\\
    =~& \mean_{c \sim p(c)}\mean_{\pi(x_0,x_1\cond c)}\int_0^1 dt\;\norm{\deriv{}{t}f_{t}(x_0,x_1)-v_t(f_t(x_0,x_1)\cond \varphi(p_t,\theta); \W)}^{2}\,.
\end{align}
Note that in the final expression we do not need access to the probabilistic model of $p(c)$ if the joints $\pi(x_0,x_1\cond c)$ are already sampled in the data $\mathcal{D}$.
Thus, we have
\begin{align}
    \mathcal{L}(\W, \theta)=~& \mean_{c \sim p(c)}\mean_{\pi(x_0,x_1\cond c)}\int_0^1 dt\;\norm{\deriv{}{t}f_{t}(x_0,x_1)-v_t(f_t(x_0,x_1)\cond \varphi(p_t,\theta); \W)}^{2} \\
    =~& \mean_{i \sim \mathcal{D}}\mean_{\pi(x_0,x_1\cond i)}\int_0^1 dt\;\norm{\deriv{}{t}f_{t}(x_0,x_1)-v_t(f_t(x_0,x_1)\cond \varphi(p_t,\theta); \W)}^{2} \\
    =~& \mathcal{L}_{\text{MFM}}(\W,\theta)\,.
\end{align}
\end{proof}

\section{Broader Impacts}
\label{ap:broader_impact}
This paper is primarily a theoretical and methodological contribution with little societal impact. MFM can be used to better model dynamical systems of interacting particles and in particular cellular systems. Better modeling of cellular systems can potentially be used for the development of malicious biological agents. However, we do not see this as a significant risk at this time.

\section{Experimental Details}
\label{ap:experimental_details}

\subsection{Synthetic letters data}
\label{ap:letters_expts}

The synthetic letters dataset (used for \cref{fig:letters} and \cref{tab:letters}) contains 240 train populations a 10 test populations. Each population contains roughly between 750 and 2700 samples in this dataset. 

\subsection{Organoid drug-screen data}
\label{ap:bio_expts}

The \emph{organoid drug-screen} dataset consists of patient derived organoids (PDOs) from 10 different patients \citep{RamosZapatero2023}.\footnote{PDOs are cultures of cell populations which are derived from patient cells.} 
For each patient, experiments are replicated and performed over varrying conditions. This results in a significant amount of coupled populations pairs $(p_0(x_0 | c_i), p_1(x_1 | c_i))$, where we use $c_i$ to denote the treatment condition corresponding to respective population pair.
After pre-processing, we acquire a total of 927 replicates (or coupled population pairs). In the \textit{replica-1 split}, we use 713 populations for training, 111 left-out population for validation, and 103 left-out populations for testing. In the \textit{replica-2} split, we use 861 populations for training, 33 left-out populations for validation, and 33 left-out populations for testing. We use the replica splits to set and select reasonable hyperparameters for MFM and the baseline models. 

For the patients split, we consider 3 different patient splits where we independently leave out all populations from either patient 21, patient 27, and patients 75. Organoid PDO-21 was found to present an interesting chemoprotective response (High CAF Protection) to cancer treatments, and hence was selected by \cite{RamosZapatero2023} for further study and data collection. For the same reason we select PDO-21 as the unseen test patient for the \textit{patients split} experiments. 
We additionally consider populations from PDO-75 as a third left-out patient split. In the patients splits, we only split data into train and test sets, resulting in 839 population pairs for training and 88 population pairs for test evaluation for PDO-21 and PDO-27 splits. For PDO-75, the train split contain 830 train population pairs and 97 test population pairs.

\subsection{Metrics}

We use four metrics to evaluate the quality of a generated empirical distribution $\hat{p}$ as compared to an empirical ground truth distribution $p$. In this section we detail the calculation and interpretation of each metric. For letters, we evaluate over all samples, while for the orgranoid drug-screen dataset we evaluate over $5000$ cells. If a target population contains less than $5000$ cells, we evaluate using sample size corresponding to the size of the entire target population.

\paragraph{1-Wasserstein ($\mathcal{W}_1$) and 2-Wasserstein ($\mathcal{W}_2$).} We evaluate the Wasserstein distance for $\alpha=1$ and $\alpha=2$. Specifically for $\{\hat x_i\}_{i=0}^n = \hat{p}$ and $\{ x_i\}_{i=0}^n = p$ we calculate the $\alpha$-Wasserstein distance with respect to the Euclidean distance as
\begin{equation}
    \mathcal{W}_\alpha = \left ( \inf_{\pi \in \Pi(\hat{p}, p)} \int \| x - y\|^\alpha_2 d \pi(x,y) \right )^{1/\alpha}
\end{equation}
where $\Pi(\hat{p}, p)$ denotes the set of all joint probability measures with marginals $\hat{p}$, $p$.

\paragraph{Maximum mean discrepancy (MMD).}
We evaluate an estimate for MMD between empirical distributions $\{\hat x_i\}_{i=0}^n = \hat{p}$ and $\{ x_i\}_{i=0}^n = p$ using the radial basis function (RBF) kernel $k(\cdot, \cdot; \gamma)$ as:
\begin{equation}
    \text{MMD}(p, \hat{p}) = \frac{1}{n^2}\sum_i^n\sum_j^n k(\hat{x}_i, \hat{x}_j; \gamma) + \frac{1}{n^2}\sum_i^n\sum_j^n k(x_i, x_j; \gamma) - \frac{2}{n^2}\sum_i^n\sum_j^n k(\hat{x}_i, x_j; \gamma).
\end{equation}
We estimate MMD for $\gamma = 2, 1, 0.5, 0.1, 0.01, 0.005$ and report the average.

\paragraph{Coefficient of determination ($r^2$).} 
We adopt the $r^2$ metric, overall average correlation coefficient, from \cite{bunne2023learning} used to compare predictions and observations.
This metric is computed as follows. 
First, given two arrays $X \in \R^{n \times d}$ and $Y \in \R^{n \times d}$, where $n$ is the number of samples and $d$ is the number of features, we can compute the correlation as
\begin{equation}
    \text{corr}(X, Y) = \frac{\sum_{i=1}^n (X_i - \mu_X)^{\text{T}}(Y_i - \mu_Y)}{\sqrt{\sum_{i=1}^n (X_i - \mu_X)^{\text{T}}(X_i - \mu_X)} \sqrt{\sum_{i=1}^n (Y_i - \mu_Y)^{\text{T}}(Y_i - \mu_Y)}},
\end{equation}
where $\mu_X$ and $\mu_Y$ are the empirical means of $X$ and $Y$. 
We compute correlation across all points for $X$ (predicted samples) and $Y$ (target samples) individually
\begin{equation}
    \Sigma^X_{ij} = \text{corr}(X_i, X_j), \quad \Sigma^Y_{ij} = \text{corr}(Y_i, Y_j).
\end{equation}
Then, we consider the two arrays
\begin{equation}
    S^X = \Sigma^X_{ij} \forall i < j, \quad S^Y = \Sigma^Y_{ij} \forall i < j.
\end{equation}
The overall average correlation coefficient is reported as
\begin{equation}
    r^2 = \text{corr}(S^X, S^Y).
\end{equation}
We compute a $r^2$ for every population in the evaluation sets and report the average $r^2$ over this set.
We take this evaluation directly from \citet{bunne2023learning}, which is a typical evaluation of single-cell prediction methods.

\subsection{Model architectures and hyperparameters}
\label{ap:arch}

\begin{figure}[t]
    \centering
    \includegraphics[width=0.75\textwidth]{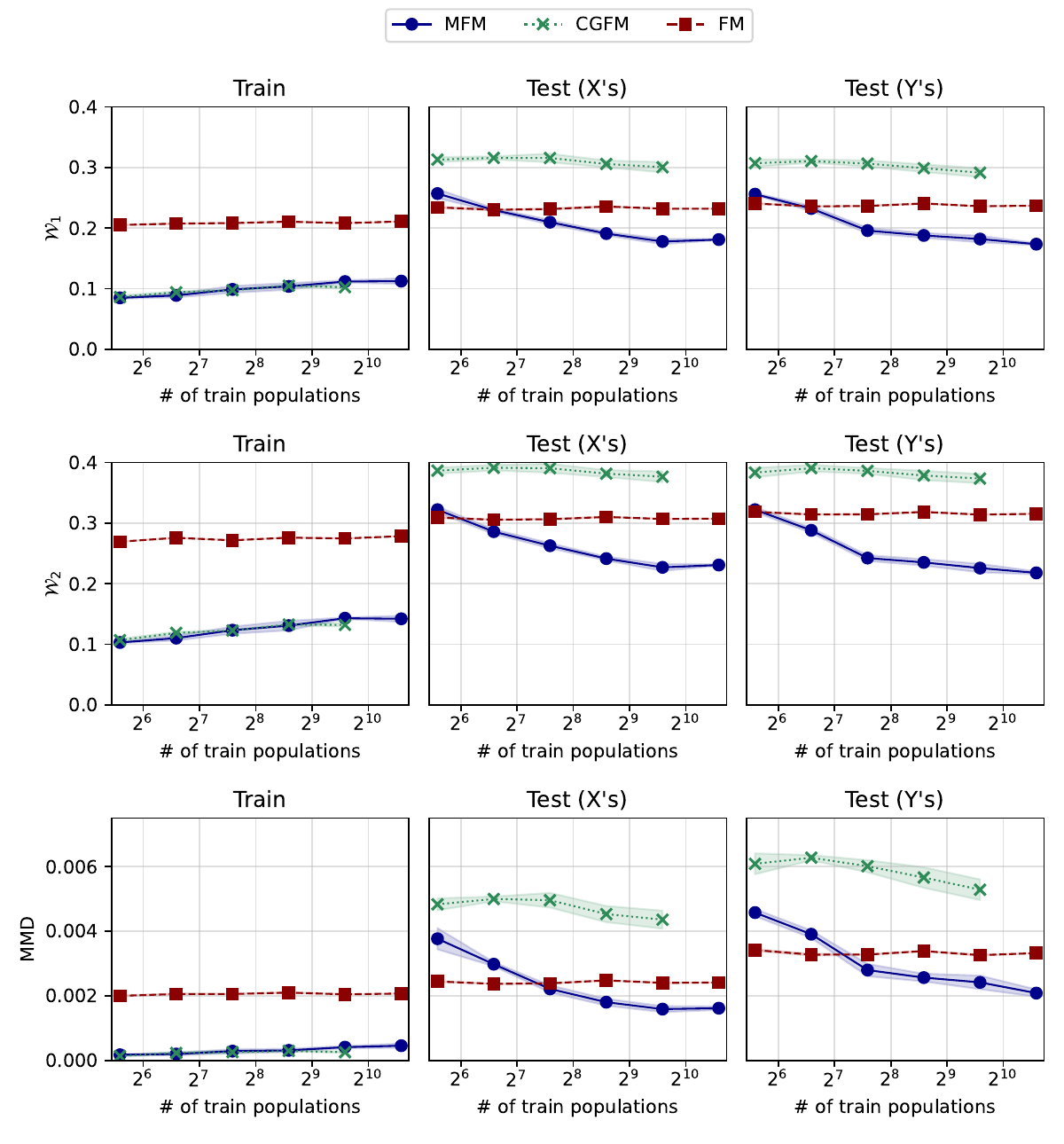}
    \vspace{-1em}
    \caption{\textbf{Synthetic letters ablation over number of training populations.} 
    Here we fix the test sets (X's and Y's) with 10 random rotations (same across each experiment). We then ablate the number of populations used for training FM (red), CGFM (green), and MFM (blue), by changing the number of random rotations/orientations used for each letter silhouette.
    We observe that for MFM the distributional errors on the test sets consistently decrease as we increase the number of training populations. 
    In contrast, since FM and CGFM cannot generalize across novel populations, this increase in training populations does not lead to an overall improved performance on the test populations.
    For a large number of training populations, CGFM exhibits exhaustive memory requirements since it requires one-hot encodings as input conditions to denote the population index. 
    }
    \label{fig:letters_ablation}
\end{figure}

\paragraph{ICNN}
The ICNN baseline was constructed with two networks ICNN network $f(x)$ and $g(x)$, with non-negative leaky ReLU activation layers. $f(x)$ is used to minimize the transport distance and $g(x)$ is used to transport from source to target. It has four hidden units with width of 64, and a latent dimension of 50. Both networks uses Adam optimizer ($lr=10^{-4}$, $\beta_1=0.5$, $\beta_2=0.9$). $g(x)$ is trained with an inner iteration of $10$ for every iteration $f(x)$ is trained.

\paragraph{Vector Field Models} All vector field models $v_t$ are parameterized with linear layers of 512 hidden units and SELU activation functions. 
For the synthetic experiments, we use 4 hidden layers. For the biological experiments, we use 7 hidden layers and skip connections across every layer. We found that this setup worked well for the biological experiments.
The FM vector field model additionally takes a conditional input for the one-hot treatment encoding. CGFM takes the conditional input for the one-hot treatment conditions as well as a one-hot encoding for the population index condition $i$. The MFM vector field model takes population embedding conditions, that is output from the GCN, as input, as well as the treatment one-hot encoding. All vector field models use temporal embeddings for time and positional embeddings for the input samples. We did not sweep the size of this embeddings space and found that a temporal embedding and positional embeddings sizes of $128$ worked sufficiently well.

\paragraph{Graph Neural Network} We considered a GCN model that consists of a $k$-nearest neighbour graph edge pooling layer and graph convolution layers with 512 hidden units. For the synthetic experiment we found that 3 GCN layers to work well, while for the biological experiments we found 2 GCN layers to perform well. The final GCN model layer outputs an embedding representation $e \in \mathbb{R}^d$. For the Synthetic experiment, we found that $d = 64$ performed well, and $d=128$ performed well for the biological experiments. We normalize and project embeddings onto a hyper-sphere, and find that this normalization helps improve training. Additionally, the GCN takes a one-hot cell-type encoding (encoding for Fibroblast cells or PDO cells) for the control populations $p_0$. This may be beneficial for PDOF populations where both Fibroblast cells and PDO cells are present. However, it is important to note that labeling which cells are Fibroblasts versus PDOs withing the PDOF cultures is difficult and noisy in itself, hence such a cell-type condition may yield no additive information/performance gain.  

\paragraph{Optimization} We use the Adam optimizer with a learning rate of $0.0001$ for all Flow-matching models (FM, CGFM, MFM). We also used the Adam optimizer with a learning rate of $0.0001$ for the GCN model. To train the MFM (FM+GCN) models, we alternate between updating the vector field model parameters $\W$ and the GCN model parameters $\theta$. We alternate between updating the respective model parameters every gradient step. For the syntehtic experiment, FM and CGFM model were trained for 12000 epochs, while MFM models were trained for 24000 epochs, with a population batch size of 10 and a sample batch size of 700. Due to the alternating optimization, the MFM vector field model receives half as many updates compared to its counterparts (FM and CGFM). Therefore, training for the double the epochs is necessary for fair comparison. 

\paragraph{Influence of GCN parameter $k$ on performance.} We observe that no clear single selection for $k$ yields the best performance across all tasks on the single-cell experiments. For the \emph{replicates} split, $k=0$ on average performs better than $k>0$, whereas on the \emph{patients} split, the opposite is true, $k=100$ performs best on average. This is possibly since the \emph{patients} split forms a more difficult generalization problem (more diversity between training populations and test populations), and hence it is more difficult to over-fit during training with higher $k$. 

The hyperparameters stated in this section were selected from brief and small grid search sweeps. We did not conduct any thorough hyperparameter optimization.  

\section{Implementation Details}
\label{ap:compute}
We implement all our experiments using PyTorch and PyTorch Geometric. All experiments were conducted on a HPC cluster primarily on NVIDIA Tesla T4 16GB and A40 48GB GPUs. Each individual seed experiment run required only 1 GPU. Depending on the model (FM vs CGFM vs MFM) Each synthetic experiment ran between 6-24 hours and each real-data experiment ran between 12-36 hours. All experiments took approximately 500 GPU hours in aggregate.

\clearpage
\section{Extended Results}
\label{ap:extended_results}

\begin{figure}[h]
    \centering
    
    \begin{minipage}[t]{0.25\textwidth}
        \centering
        \begin{tikzpicture}[every node/.style={inner sep=0pt,outer sep=0pt,anchor=south west}]
          \node (source) at (0,0) {\includegraphics[width=\textwidth]{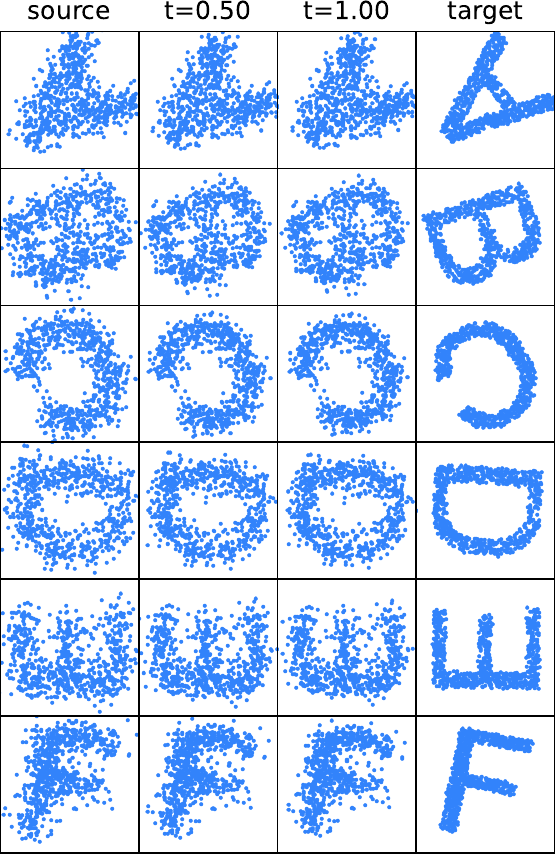}};
          \begin{scope}[overlay] %
            \node[rotate=90] at (-1mm, 2.2cm) {\textbf{Train}};
          \end{scope}
        \end{tikzpicture}
    \end{minipage}
    \begin{minipage}[t]{0.25\textwidth}
        \centering
        \begin{tikzpicture}[every node/.style={inner sep=0pt,outer sep=0pt,anchor=south west}]
          \node (source) at (0,0) {\includegraphics[width=\textwidth]{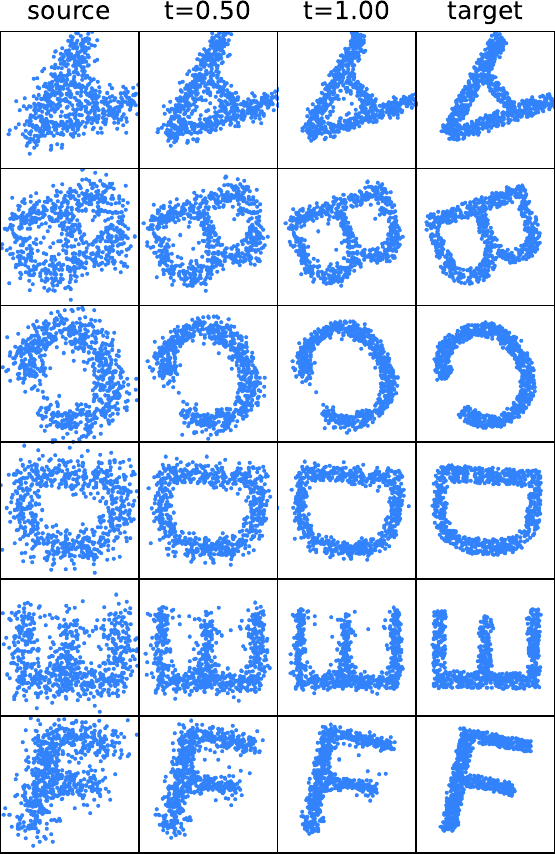}};
        \end{tikzpicture}
    \end{minipage}
    \begin{minipage}[t]{0.25\textwidth}
        \centering
        \begin{tikzpicture}[every node/.style={inner sep=0pt,outer sep=0pt,anchor=south west}]
          \node (source) at (0,0) {\includegraphics[width=\textwidth]{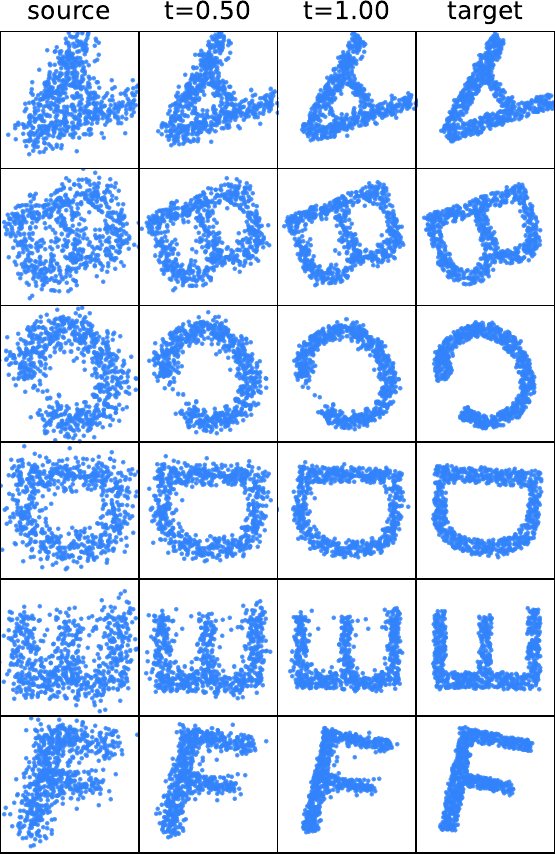}};
        \end{tikzpicture}
    \end{minipage}

    \vspace{0.3cm}

    \begin{minipage}[t]{0.25\textwidth}
        \centering
        \begin{tikzpicture}[every node/.style={inner sep=0pt,outer sep=0pt,anchor=south west}]
          \node (source) at (0,0) {\includegraphics[width=\textwidth]{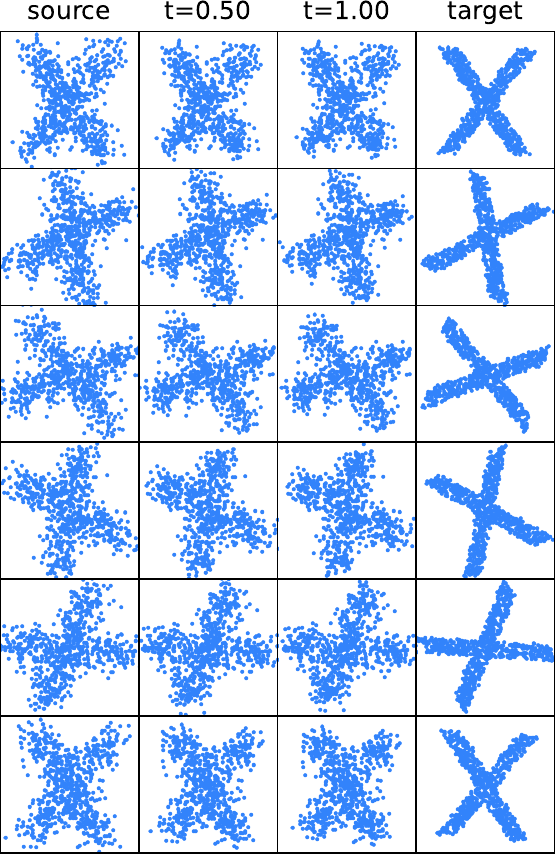}};
          \begin{scope}[overlay] %
            \node[rotate=90] at (-1mm, 2.05cm) {\textbf{Test (X's)}};
          \end{scope}
        \end{tikzpicture}
    \end{minipage}
    \begin{minipage}[t]{0.25\textwidth}
        \centering
        \begin{tikzpicture}[every node/.style={inner sep=0pt,outer sep=0pt,anchor=south west}]
          \node (source) at (0,0) {\includegraphics[width=\textwidth]{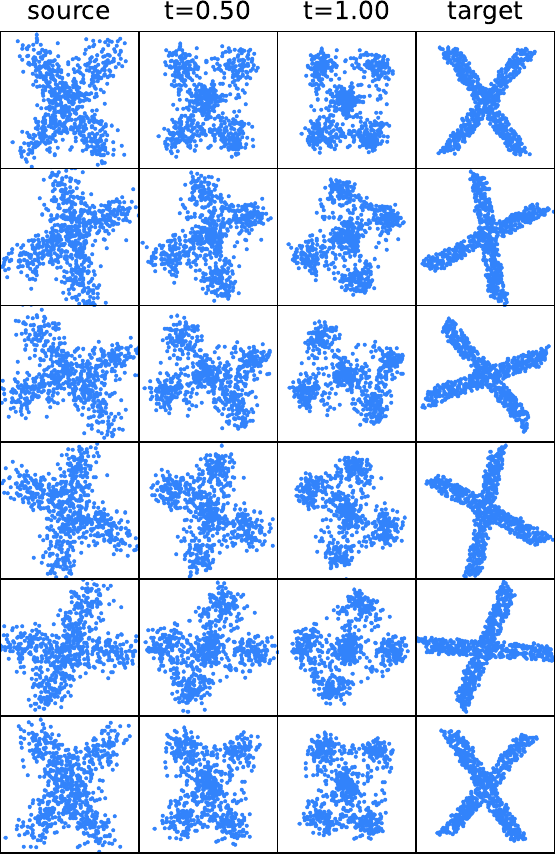}};
        \end{tikzpicture}
    \end{minipage}
    \begin{minipage}[t]{0.25\textwidth}
        \centering
        \begin{tikzpicture}[every node/.style={inner sep=0pt,outer sep=0pt,anchor=south west}]
          \node (source) at (0,0) {\includegraphics[width=\textwidth]{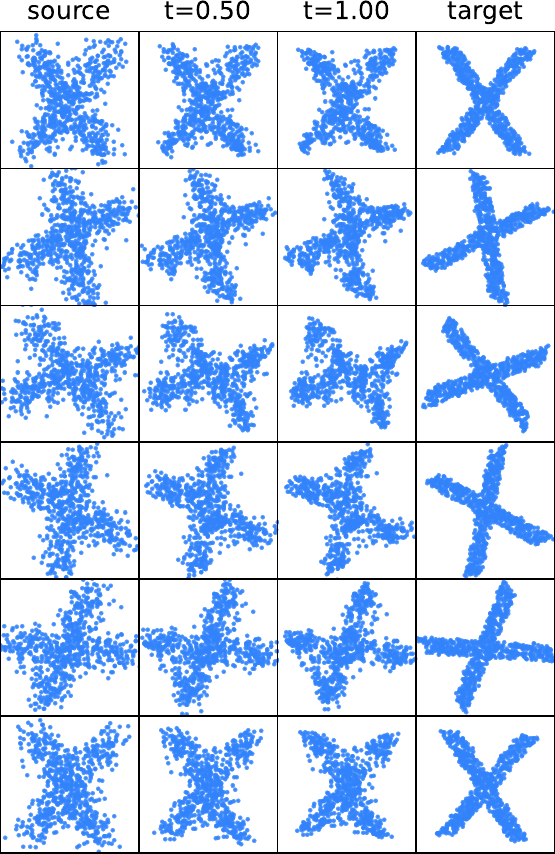}};
        \end{tikzpicture}
    \end{minipage}

    \vspace{0.3cm}
    
    \begin{minipage}[t]{0.25\textwidth}
        \centering
        \begin{tikzpicture}[every node/.style={inner sep=0pt,outer sep=0pt,anchor=south west}]
          \node (source) at (0,0) {\includegraphics[width=\textwidth]{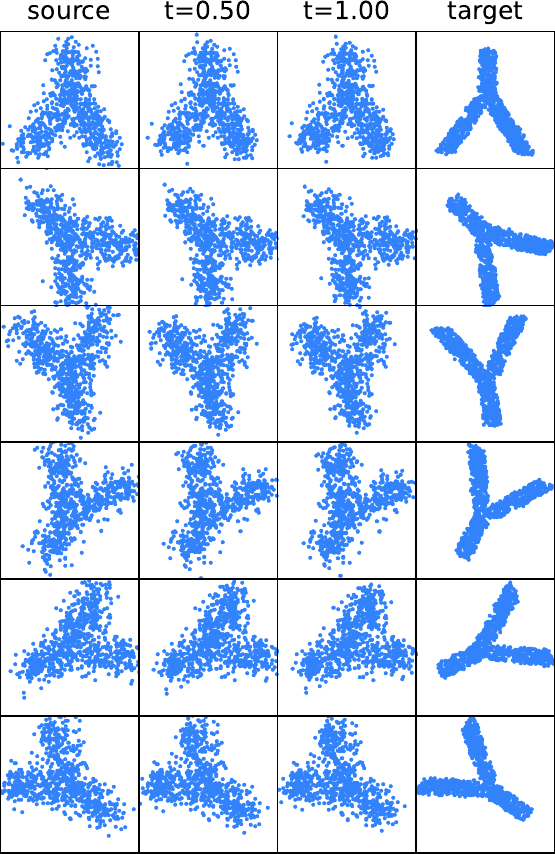}};
          \begin{scope}[overlay] %
            \node[rotate=90] at (-1mm, 2.05cm) {\textbf{Test (Y's)}};
          \end{scope}
        \end{tikzpicture}
        FM \\
    \end{minipage}
    \begin{minipage}[t]{0.25\textwidth}
        \centering
        \begin{tikzpicture}[every node/.style={inner sep=0pt,outer sep=0pt,anchor=south west}]
          \node (source) at (0,0) {\includegraphics[width=\textwidth]{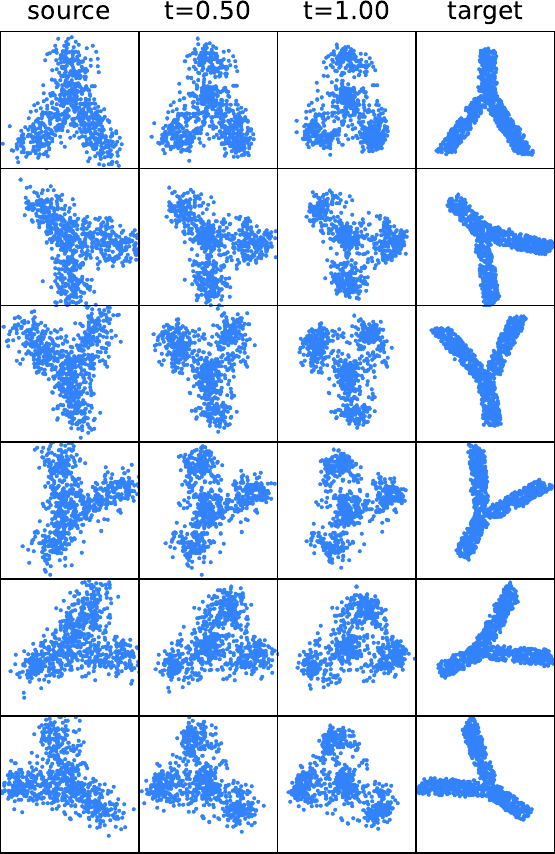}};
        \end{tikzpicture}
        CGFM \\
    \end{minipage}
    \begin{minipage}[t]{0.25\textwidth}
        \centering
        \begin{tikzpicture}[every node/.style={inner sep=0pt,outer sep=0pt,anchor=south west}]
          \node (source) at (0,0) {\includegraphics[width=\textwidth]{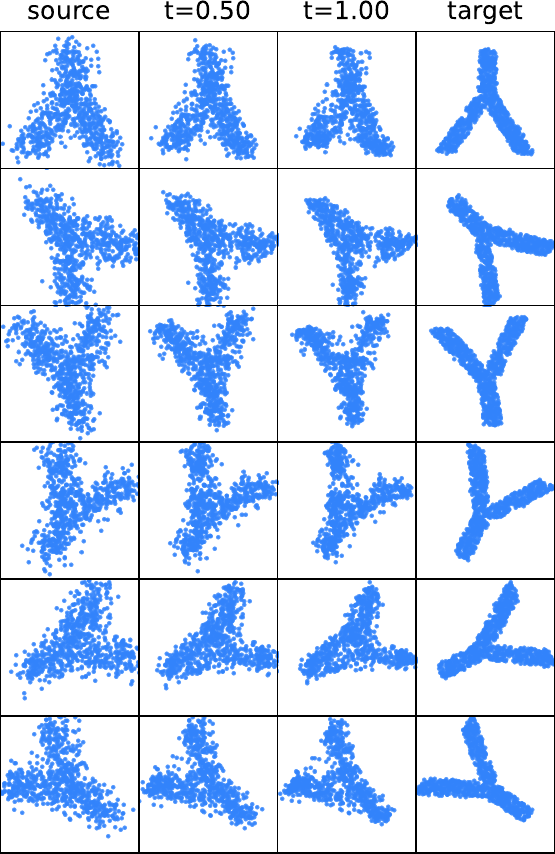}};
        \end{tikzpicture}
        MFM \\
    \end{minipage}
    
    \caption{Model-generated samples for synthetic letters from the source ($t=0$) to target ($t=1$) distributions.}
    \label{fig:letters_full}
\end{figure}

\clearpage

\begin{table}[t]
    \centering
    \caption{\textbf{Comparison of models trained using independent coupling versus OT couplings on the synthetic letters setting.} 
    We report the 1-Wasserstein ($\cW_1$), 2-Wasserstein ($\cW_2$), and the maximum-mean-discrepancy (MMD) distributional distances. 
    We use $^{\text{w/}}\text{OT}$ to denote models that incorporate OT couplings between source and target distributions~\citep{tong2024improving,pooladian2023multisample}. We report results for MFM ($k=50$). Here we report results for models trained with a population batch size of $1$ and sample batch size of $700$. We observe using OT couplings in this setting does not provide a significant change in performance.
    }
    \vspace{-1em}
    \resizebox{\columnwidth}{!}{
    \begin{tabular}{lrrrrrrrrr}
            \toprule
            & \multicolumn{3}{c}{\textbf{Train}} & \multicolumn{3}{c}{\textbf{Test} (X's)} & \multicolumn{3}{c}{\textbf{Test} (Y's)} \\
            \cmidrule(lr){2-4} \cmidrule(lr){5-7} \cmidrule(lr){8-10}
            & \multicolumn{1}{c}{$\cW_1 (\downarrow)$} & \multicolumn{1}{c}{$\cW_2 (\downarrow)$} & \multicolumn{1}{c}{MMD {\tiny ($\times 10^{-3}$)} $(\downarrow)$} & \multicolumn{1}{c}{$\cW_1 (\downarrow)$} & \multicolumn{1}{c}{$\cW_2 (\downarrow)$} & \multicolumn{1}{c}{MMD {\tiny ($\times 10^{-3}$)} $(\downarrow)$}  & \multicolumn{1}{c}{$\cW_1 (\downarrow)$} & \multicolumn{1}{c}{$\cW_2 (\downarrow)$} & \multicolumn{1}{c}{MMD {\tiny ($\times 10^{-3}$)} $(\downarrow)$}\\
            \midrule
            FM & $0.210 \pm 0.000$ & $0.279 \pm 0.000$ & $2.548 \pm 0.000$ & $0.231 \pm 0.000$ & $0.307 \pm 0.000$ & $2.422 \pm 0.000$ & $0.236 \pm 0.000$ & $0.315 \pm 0.000$ & $3.305 \pm 0.000$ \\
            FM$^{\text{w/}}\text{OT}$ & $0.210 \pm 0.001$ & $0.277 \pm 0.001$ & $2.555 \pm 0.034$ & $0.234 \pm 0.001$ & $0.309 \pm 0.001$ & $2.458 \pm 0.030$ & $0.239 \pm 0.001$ & $0.316 \pm 0.001$ & $3.368 \pm 0.057$ \\
            CGFM & $\textbf{0.093} \pm \textbf{0.000}$ & $\textbf{0.118} \pm \textbf{0.000}$ & $0.229 \pm 0.000$ & $0.321 \pm 0.000$ & $0.395 \pm 0.000$ & $5.125 \pm 0.000$ & $0.309 \pm 0.000$ & $0.388 \pm 0.000$ & $6.184 \pm 0.000$ \\
            CGFM$^{\text{w/}}\text{OT}$ & $\textbf{0.093} \pm \textbf{0.000}$ & $0.119 \pm 0.000$ & $\textbf{0.215} \pm \textbf{0.036}$ & $0.314 \pm 0.002$ & $0.390 \pm 0.002$ & $4.871 \pm 0.040$ & $0.307 \pm 0.001$ & $0.388 \pm 0.002$ & $6.021 \pm 0.084$ \\
            \midrule
            MFM & $0.100 \pm 0.002$ & $0.127 \pm 0.002$ & $0.322 \pm 0.032$ & $\textbf{0.189} \pm \textbf{0.003}$ & $\textbf{0.238} \pm \textbf{0.005}$ & $1.742 \pm 0.086$ & $\textbf{0.190} \pm \textbf{0.002}$ & $\textbf{0.237} \pm \textbf{0.003}$ & $\textbf{2.473} \pm \textbf{0.120}$ \\
            MFM$^{\text{w/}}\text{OT}$ & $0.103 \pm 0.007$ & $0.133 \pm 0.010$ & $0.450 \pm 0.234$ & $\textbf{0.189} \pm \textbf{0.006}$ & $0.239 \pm 0.007$ & $\textbf{1.706} \pm \textbf{0.139}$ & $0.193 \pm 0.006$ & $0.240 \pm 0.006$ & $2.522 \pm 0.220$ \\
            \bottomrule
        \end{tabular}
    }
    \label{tab:ot_letters}
\end{table}

\begin{table}[t]
        \centering
        \caption{\textbf{Experimental results on the organoid drug-screen dataset} for population prediction of treatment response across \emph{\textbf{replica-1}} populations. Results shown in this table are for $\text{\textbf{dim}} = \textbf{43}$. 
        We use $^{\text{w/}}\mathcal{N}$ to denote models that use Gaussian source distributions sampled via $x_0 \sim \mathcal{N}(\mathbf{0}, \mathbf{1})$, but maintain the control populations $p_0$ as input to the population embedding model $\varphi(p_0; \theta, k)$.
        We further report $d(p_0 - \mu_0 + \mu_1, p_1)$, a baseline that uses the means from each true treated populations $p_1$.
        }
        \vspace{-1em}
        \resizebox{\columnwidth}{!}{
        \begin{tabular}{lrrrrrrrr}
            \toprule
            & \multicolumn{4}{c}{\textbf{Train}} & \multicolumn{4}{c}{\textbf{Test}} \\
            \cmidrule(lr){2-5} \cmidrule(lr){6-9}
            & $\cW_1 (\downarrow)$ & $\cW_2 (\downarrow)$ & MMD {\tiny ($\times 10^{-3}$)} $(\downarrow)$ & $r^2 (\uparrow)$ & $\cW_1 (\downarrow)$ & $\cW_2 (\downarrow)$ & MMD {\tiny ($\times 10^{-3}$)} $(\downarrow)$ & $r^2 (\uparrow)$ \\
            \midrule
            $d(p_0, p_1)$ & $4.182$ & $4.294$ & $10.84$ & $0.925$ & $4.100$ & $4.222$ & $10.21$ & $0.931$ \\
            $d(p_0 - \mu_0 + \Tilde{\mu}_1, p_1)$ & $6.009$ & $6.074$ & $68.70$ & $0.925$ & $5.985$ & $6.058$ & $68.00$ & $0.931$ \\
            $d(p_0 - \mu_0 + \mu_1, p_1)$ & $3.913$ & $3.992$ & $1.97$ & $0.925$ & $-$ & $-$ & $-$ & $-$ \\
            \midrule
            FM & $3.925 \pm 0.019$ & $4.041 \pm 0.023$ & $3.76 \pm 0.26$ & $0.952 \pm 0.007$ & $3.961 \pm 0.036$ & $4.089 \pm 0.042$ & $5.90 \pm 0.25$ & $0.941 \pm 0.010$ \\
            CGFM & $\textbf{3.864} \pm \textbf{0.064}$ & $3.975 \pm 0.069$ & $\textbf{3.16} \pm \textbf{0.89}$ & $0.964 \pm 0.006$ & $4.087 \pm 0.063$ & $4.211 \pm 0.066$ & $8.84 \pm 0.75$ & $0.938 \pm 0.006$ \\
            MFM ($k=0$) & $3.874 \pm 0.015$ & $\textbf{3.973} \pm \textbf{0.020}$ & $3.37 \pm 0.14$ & $\textbf{0.967} \pm \textbf{0.003}$ & $\textbf{3.880} \pm \textbf{0.009}$ & $\textbf{3.990} \pm \textbf{0.011}$ & $4.68 \pm 0.16$ & $0.955 \pm 0.002$ \\
            MFM ($k=10$) & $3.896 \pm 0.021$ & $4.000 \pm 0.021$ & $3.82 \pm 0.12$ & $0.964 \pm 0.001$ & $3.899 \pm 0.013$ & $4.012 \pm 0.011$ & $5.13 \pm 0.48$ & $0.955 \pm 0.001$ \\
            MFM ($k=50$) & $3.888 \pm 0.038$ & $3.991 \pm 0.030$ & $3.59 \pm 0.41$ & $0.963 \pm 0.001$ & $3.900 \pm 0.038$ & $4.013 \pm 0.034$ & $5.06 \pm 0.22$ & $0.954 \pm 0.003$ \\
            MFM ($k=100$) & $3.906 \pm 0.010$ & $4.008 \pm 0.005$ & $4.05 \pm 0.38$ & $0.964 \pm 0.002$ & $3.898 \pm 0.008$ & $4.009 \pm 0.009$ & $5.19 \pm 0.05$ & $\textbf{0.957} \pm \textbf{0.000}$ \\
            \midrule
            FM$^{\text{w/}}\mathcal{N}$ & $6.908 \pm 0.037$ & $7.181 \pm 0.033$ & $57.70 \pm 0.75$ & $0.639 \pm 0.005$ & $6.972 \pm 0.022$ & $7.244 \pm 0.022$ & $60.39 \pm 0.98$ & $0.642 \pm 0.007$ \\
            CGFM$^{\text{w/}}\mathcal{N}$ & $4.187 \pm 0.008$ & $4.340 \pm 0.009$ & $8.69 \pm 0.50$ & $0.936 \pm 0.002$ & $6.852 \pm 0.045$ & $7.114 \pm 0.044$ & $71.24 \pm 3.71$ & $0.666 \pm 0.016$ \\
            MFM$^{\text{w/}}\mathcal{N}$ ($k=0$) & $3.940 \pm 0.022$ & $4.047 \pm 0.023$ & $3.91 \pm 0.18$ & $0.959 \pm 0.006$ & $3.896 \pm 0.026$ & $4.002 \pm 0.030$ & $\textbf{4.35} \pm \textbf{0.18}$ & $0.950 \pm 0.005$ \\
            MFM$^{\text{w/}}\mathcal{N}$ ($k=10$) & $3.976 \pm 0.044$ & $4.086 \pm 0.049$ & $4.52 \pm 0.42$ & $0.961 \pm 0.002$ & $3.943 \pm 0.032$ & $4.051 \pm 0.034$ & $5.28 \pm 0.25$ & $0.952 \pm 0.001$ \\
            MFM$^{\text{w/}}\mathcal{N}$ ($k=50$) & $3.968 \pm 0.013$ & $4.075 \pm 0.014$ & $4.36 \pm 0.44$ & $0.961 \pm 0.002$ & $3.934 \pm 0.007$ & $4.041 \pm 0.008$ & $4.99 \pm 0.35$ & $0.954 \pm 0.000$ \\
            MFM$^{\text{w/}}\mathcal{N}$ ($k=100$) & $3.937 \pm 0.014$ & $4.040 \pm 0.015$ & $3.94 \pm 0.00$ & $0.963 \pm 0.001$ & $3.908 \pm 0.030$ & $4.011 \pm 0.033$ & $4.68 \pm 0.52$ & $0.953 \pm 0.002$ \\
            \midrule
            ICNN & $4.286 \pm 0.018$ & $4.313 \pm 0.112$ & $38.60 \pm 0.21$ & $0.897 \pm 0.031$ & $ 4.194\pm 0.110$ & $4.313 \pm 0.112$ & $37.90 \pm 2.84 $ & $0.897 \pm 0.008$ \\
            \bottomrule
        \end{tabular}
        }
    \label{tab:avg_replicates_1}
\end{table}

\begin{table}[t]
        \centering
        \caption{\textbf{Experimental results on the organoid drug-screen dataset} for population prediction of treatment response across \emph{\textbf{replica-2}} populations. Results shown in this table are for $\text{\textbf{dim}} = \textbf{43}$.  
        }
        \vspace{-1em}
        \resizebox{\columnwidth}{!}{
        \begin{tabular}{lrrrrrrrr}
            \toprule
            & \multicolumn{4}{c}{\textbf{Train}} & \multicolumn{4}{c}{\textbf{Test}} \\
            \cmidrule(lr){2-5} \cmidrule(lr){6-9}
            & $\cW_1 (\downarrow)$ & $\cW_2 (\downarrow)$ & MMD {\tiny ($\times 10^{-3}$)} $(\downarrow)$ & $r^2 (\uparrow)$ & $\cW_1 (\downarrow)$ & $\cW_2 (\downarrow)$ & MMD {\tiny ($\times 10^{-3}$)} $(\downarrow)$ & $r^2 (\uparrow)$ \\
            \midrule
            $d(p_0, p_1)$ & $4.096$ & $4.213$ & $8.66$ & $0.923$ & $4.513$ & $4.695$ & $19.14$ & $0.876$ \\
            $d(p_0 - \mu_0 + \Tilde{\mu}_1, p_1)$ & $6.002$ & $6.083$ & $68.50$ & $0.929$ & $6.222$ & $6.346$ & $74.90$ & $0.876$ \\
            $d(p_0 - \mu_0 + \mu_1, p_1)$ & $3.887$ & $3.962$ & $1.76$ & $0.929$ & $-$ & $-$ & $-$ & $-$ \\
            \midrule
            FM & $3.985 \pm 0.054$ & $4.115 \pm 0.067$ & $4.64 \pm 0.43$ & $0.938 \pm 0.014$ & $4.340 \pm 0.078$ & $4.564 \pm 0.111$ & $13.00 \pm 0.67$ & $0.865 \pm 0.034$ \\
            CGFM & $\textbf{3.882} \pm \textbf{0.019}$ & $3.999 \pm 0.020$ & $\textbf{3.16} \pm \textbf{0.59}$ & $0.952 \pm 0.004$ & $4.443 \pm 0.033$ & $4.621 \pm 0.041$ & $17.00 \pm 1.03$ & $0.899 \pm 0.008$ \\
            MFM ($k=0$) & $3.905 \pm 0.005$ & $4.012 \pm 0.006$ & $4.18 \pm 0.25$ & $\textbf{0.958} \pm \textbf{0.001}$ & $4.209 \pm 0.007$ & $4.380 \pm 0.012$ & $12.34 \pm 0.50$ & $\textbf{0.918} \pm \textbf{0.002}$ \\
            MFM ($k=10$) & $3.896 \pm 0.033$ & $4.005 \pm 0.036$ & $3.89 \pm 0.44$ & $0.957 \pm 0.005$ & $4.216 \pm 0.090$ & $4.395 \pm 0.098$ & $11.99 \pm 2.36$ & $0.917 \pm 0.005$ \\
            MFM ($k=50$) & $3.902 \pm 0.018$ & $4.008 \pm 0.022$ & $4.20 \pm 0.17$ & $\textbf{0.958} \pm \textbf{0.000}$ & $4.214 \pm 0.017$ & $4.396 \pm 0.020$ & $12.09 \pm 0.75$ & $0.916 \pm 0.002$ \\
            MFM ($k=100$) & $3.884 \pm 0.039$ & $\textbf{3.986} \pm \textbf{0.044}$ & $3.77 \pm 0.49$ & $0.955 \pm 0.001$ & $\textbf{4.100} \pm \textbf{0.093}$ & $\textbf{4.269} \pm \textbf{0.104}$ & $\textbf{8.96} \pm \textbf{1.88}$ & $0.917 \pm 0.004$ \\
            \midrule
            FM$^{\text{w/}}\mathcal{N}$ & $6.892 \pm 0.027$ & $7.164 \pm 0.033$ & $57.03 \pm 1.00$ & $0.655 \pm 0.003$ & $7.114 \pm 0.100$ & $7.404 \pm 0.086$ & $64.97 \pm 3.79$ & $0.613 \pm 0.008$ \\
            CGFM$^{\text{w/}}\mathcal{N}$ & $4.313 \pm 0.077$ & $4.480 \pm 0.081$ & $11.51 \pm 1.96$ & $0.918 \pm 0.004$ & $7.135 \pm 0.045$ & $7.390 \pm 0.037$ & $79.78 \pm 4.67$ & $0.637 \pm 0.010$ \\
            MFM$^{\text{w/}}\mathcal{N}$ ($k=0$) & $3.982 \pm 0.014$ & $4.095 \pm 0.015$ & $5.04 \pm 0.36$ & $0.951 \pm 0.002$ & $4.177 \pm 0.042$ & $4.355 \pm 0.048$ & $10.53 \pm 0.59$ & $0.911 \pm 0.001$ \\
            MFM$^{\text{w/}}\mathcal{N}$ ($k=10$) & $4.006 \pm 0.008$ & $4.119 \pm 0.012$ & $5.13 \pm 0.30$ & $0.948 \pm 0.001$ & $4.156 \pm 0.065$ & $4.324 \pm 0.067$ & $9.58 \pm 1.63$ & $0.912 \pm 0.003$ \\
            MFM$^{\text{w/}}\mathcal{N}$ ($k=50$) & $3.982 \pm 0.018$ & $4.095 \pm 0.016$ & $4.74 \pm 0.21$ & $0.951 \pm 0.002$ & $4.153 \pm 0.069$ & $4.324 \pm 0.070$ & $9.63 \pm 1.45$ & $0.912 \pm 0.002$ \\
            MFM$^{\text{w/}}\mathcal{N}$ ($k=100$) & $4.004 \pm 0.012$ & $4.119 \pm 0.014$ & $5.19 \pm 0.43$ & $0.949 \pm 0.002$ & $4.166 \pm 0.001$ & $4.341 \pm 0.003$ & $9.52 \pm 0.33$ & $0.915 \pm 0.005$ \\
            \midrule
            ICNN & $4.308 \pm 0.034$ &   $4.413 \pm 0.036$ &  $7.07 \pm 0.13$ &  $0.929 \pm 0.006$ &  $4.488 \pm 0.035$ &  $4.665 \pm 0.038$ &  $17.60 \pm 0.55$ &  $0.884 \pm 0.002$ \\
            \bottomrule
        \end{tabular}
        }
    \label{tab:avg_replicates_2}
\end{table}

\begin{table}[t]
        \centering
        \caption{\textbf{Experimental results on the organoid drug-screen dataset} for population prediction of treatment response across left-out \emph{\textbf{patient}} populations. Results shown in this table are for $\text{\textbf{dim}} = \textbf{43}$. We report the mean and standard deviations across metrics computed over $3$ patient splits. We denote the best non-OT method with \textbf{bold} and the best OT-based method with \underline{underline}.
        }
        \vspace{-1em}
        \resizebox{\columnwidth}{!}{
        \begin{tabular}{lrrrrrrrr}
            \toprule
            & \multicolumn{4}{c}{\textbf{Train}} & \multicolumn{4}{c}{\textbf{Test}} \\
            \cmidrule(lr){2-5} \cmidrule(lr){6-9}
            & $\cW_1 (\downarrow)$ & $\cW_2 (\downarrow)$ & MMD {\tiny ($\times 10^{-3}$)} $(\downarrow)$ & $r^2 (\uparrow)$ & $\cW_1 (\downarrow)$ & $\cW_2 (\downarrow)$ & MMD {\tiny ($\times 10^{-3}$)} $(\downarrow)$ & $r^2 (\uparrow)$ \\
            \midrule
            $d(p_0, p_1)$ & $4.128 \pm 0.069$ & $4.254 \pm 0.076$ & $9.44 \pm 1.21$ & $0.937 \pm 0.007$ & $4.175 \pm 0.135$ & $4.303 \pm 0.174$ & $12.11 \pm 2.07$ & $0.902 \pm 0.006$ \\
            $d(p_0 - \mu_0 + \Tilde{\mu}_1, p_1)$ & $6.007 \pm 0.027$ & $6.085 \pm 0.034$ & $68.39 \pm 0.99$ & $0.937 \pm 0.007$ & $6.158 \pm 0.239$ & $6.235 \pm 0.229$ & $77.74 \pm 10.72$ & $0.902 \pm 0.006$ \\
            $d(p_0 - \mu_0 + \mu_1, p_1)$ & $3.905 \pm 0.042$ & $3.987 \pm 0.042$ & $2.01 \pm 0.22$ & $0.937 \pm 0.007$ & $-$ & $-$ & $-$ & $-$ \\
            \midrule
            FM & $3.950 \pm 0.055$ & $4.087 \pm 0.056$ & $4.62 \pm 1.44$ & $0.936 \pm 0.006$ & $4.171 \pm 0.107$ & $4.315 \pm 0.142$ & $10.95 \pm 1.98$ & $0.897 \pm 0.023$ \\
            CGFM & $3.868 \pm 0.034$ & $3.994 \pm 0.037$ & $3.61 \pm 0.40$ & $0.955 \pm 0.009$ & $4.189 \pm 0.088$ & $4.321 \pm 0.119$ & $11.57 \pm 0.96$ & $0.914 \pm 0.008$ \\
            MFM ($k=0$) & $3.860 \pm 0.064$ & $3.973 \pm 0.076$ & $\textbf{3.50} \pm \textbf{0.87}$ & $0.966 \pm 0.005$ & $4.135 \pm 0.094$ & $4.268 \pm 0.128$ & $10.18 \pm 1.28$ & $0.918 \pm 0.007$ \\
            MFM ($k=10$) & $\textbf{3.853} \pm \textbf{0.062}$ & $\textbf{3.963} \pm \textbf{0.067}$ & $3.55 \pm 0.92$ & $\textbf{0.968} \pm \textbf{0.005}$ & $4.112 \pm 0.086$ & $4.243 \pm 0.121$ & $9.90 \pm 0.99$ & $0.925 \pm 0.008$ \\
            MFM ($k=50$) & $3.863 \pm 0.042$ & $3.980 \pm 0.053$ & $3.64 \pm 0.64$ & $0.966 \pm 0.006$ & $\textbf{4.087} \pm \textbf{0.122}$ & $\textbf{4.218} \pm \textbf{0.160}$ & $\textbf{9.26} \pm \textbf{1.56}$ & $0.926 \pm 0.007$ \\
            MFM ($k=100$) & $3.876 \pm 0.055$ & $3.990 \pm 0.062$ & $3.65 \pm 0.91$ & $0.965 \pm 0.004$ & $4.112 \pm 0.148$ & $4.244 \pm 0.186$ & $9.63 \pm 2.08$ & $\textbf{0.931} \pm \textbf{0.002}$ \\
            \midrule
            FM$^{\text{w/}}\text{OT}$ & $3.866 \pm 0.056$ & $3.981 \pm 0.064$ & $3.76 \pm 1.02$ & $0.963 \pm 0.007$ & $\underline{4.064} \pm \underline{0.152}$ & $\underline{4.189} \pm \underline{0.194}$ & $\underline{9.44} \pm \underline{2.49}$ & $\underline{0.932} \pm \underline{0.005}$ \\
            CGFM$^{\text{w/}}\text{OT}$ & $\underline{3.763} \pm \underline{0.049}$ & $\underline{3.866} \pm \underline{0.057}$ & $\underline{2.38} \pm \underline{0.59}$ & $\underline{0.974} \pm \underline{0.004}$ & $4.087 \pm 0.129$ & $4.217 \pm 0.165$ & $9.83 \pm 2.03$ & $0.924 \pm 0.009$ \\
            ICNN & $4.394 \pm 0.477$ &   $4.508 \pm 0.518$ &  $7.80 \pm 3.37$ &  $0.914 \pm 0.092$ &  $4.157 \pm 0.168$ &  $4.282 \pm 0.213$ &  $11.18 \pm 2.51$ &  $0.904 \pm 0.005$ \\
            \bottomrule
        \end{tabular}
        }
    \label{tab:avg_patients_with_train}
\end{table}

\begin{table}[t]
        \centering
        \caption{\textbf{Extended experimental results on the organoid drug-screen dataset} for population prediction of treatment response across \emph{\textbf{replicas-1}} populations. Results shown in this table are for $\text{\textbf{dim}} = \textbf{43}$. Here we show results for the individual cell cultures: \textbf{Fibroblasts}, patient derived organoids (\textbf{PDOs}), and patient derived organoids with Fibroblasts (\textbf{PDOFs}), respectively. We consider $4$ settings for MFM with varying nearest-neighbours parameter ($k=0,10,50,100$).}
        \resizebox{\columnwidth}{!}{
        \begin{tabular}{lrrrrrrrr}
            \rule{0pt}{2ex} \\
            
            \multicolumn{9}{c}{\textbf{Fibroblasts}}  \\
            \toprule
            & \multicolumn{4}{c}{\textbf{Train}} & \multicolumn{4}{c}{\textbf{Test}} \\
            \cmidrule(lr){2-5} \cmidrule(lr){6-9}
            & $\cW_1 (\downarrow)$ & $\cW_2 (\downarrow)$ & MMD {\tiny ($\times 10^{-3}$)} $(\downarrow)$ & $r^2 (\uparrow)$ & $\cW_1 (\downarrow)$ & $\cW_2 (\downarrow)$ & MMD {\tiny ($\times 10^{-3}$)} $(\downarrow)$ & $r^2 (\uparrow)$ \\
            \midrule
            FM & $3.748 \pm 0.030$ & $3.838 \pm 0.047$ & $2.17 \pm 0.19$ & $0.937 \pm 0.020$ & $3.730 \pm 0.018$ & $3.816 \pm 0.031$ & $2.29 \pm 0.11$ & $0.948 \pm 0.014$ \\
            FM$^{\text{w/}}\mathcal{N}$ & $6.701 \pm 0.093$ & $7.045 \pm 0.094$ & $51.55 \pm 3.16$ & $0.621 \pm 0.007$ & $6.548 \pm 0.115$ & $6.885 \pm 0.113$ & $48.95 \pm 3.69$ & $0.647 \pm 0.007$ \\
            CGFM & $\textbf{3.620} \pm \textbf{0.004}$ & $\textbf{3.674} \pm \textbf{0.004}$ & $\textbf{0.86} \pm \textbf{0.07}$ & $0.979 \pm 0.001$ & $3.732 \pm 0.014$ & $3.797 \pm 0.015$ & $3.26 \pm 0.07$ & $0.959 \pm 0.004$ \\
            CGFM$^{\text{w/}}\mathcal{N}$ & $3.671 \pm 0.011$ & $3.725 \pm 0.011$ & $0.96 \pm 0.07$ & $\textbf{0.980} \pm \textbf{0.001}$ & $5.844 \pm 0.192$ & $6.148 \pm 0.204$ & $39.80 \pm 3.07$ & $0.695 \pm 0.007$ \\
            ICNN & $4.075 \pm 0.009$ &     $4.145 \pm 0.008$ &   $4.47 \pm 0.04$ &   $0.925 \pm 0.006$ &    $3.784 \pm 0.007$ &    $3.848 \pm 0.007$ &  $3.95 \pm 0.07$ &  $0.952 \pm 0.005$ \\
            \midrule
            MFM$^{\text{w/}}\mathcal{N}$ ($k=0$) & $3.745 \pm 0.034$ & $3.808 \pm 0.045$ & $1.73 \pm 0.27$ & $0.968 \pm 0.010$ & $3.730 \pm 0.036$ & $3.792 \pm 0.045$ & $2.02 \pm 0.29$ & $0.967 \pm 0.011$ \\
            MFM$^{\text{w/}}\mathcal{N}$ ($k=10$) & $3.718 \pm 0.023$ & $3.774 \pm 0.023$ & $1.48 \pm 0.22$ & $0.976 \pm 0.002$ & $3.719 \pm 0.023$ & $3.777 \pm 0.023$ & $2.15 \pm 0.25$ & $0.973 \pm 0.000$ \\
            MFM$^{\text{w/}}\mathcal{N}$ ($k=50$) & $3.716 \pm 0.027$ & $3.771 \pm 0.027$ & $1.55 \pm 0.32$ & $0.976 \pm 0.001$ & $3.717 \pm 0.032$ & $3.773 \pm 0.031$ & $2.13 \pm 0.45$ & $0.976 \pm 0.001$ \\
            MFM$^{\text{w/}}\mathcal{N}$ ($k=100$) & $3.712 \pm 0.013$ & $3.767 \pm 0.012$ & $1.37 \pm 0.17$ & $0.977 \pm 0.002$ & $3.708 \pm 0.011$ & $3.764 \pm 0.011$ & $1.81 \pm 0.29$ & $0.975 \pm 0.002$ \\
            \midrule
            MFM ($k=0$) & $3.667 \pm 0.027$ & $3.722 \pm 0.028$ & $1.43 \pm 0.18$ & $0.975 \pm 0.002$ & $\textbf{3.661} \pm \textbf{0.027}$ & $3.719 \pm 0.027$ & $\textbf{1.74} \pm \textbf{0.08}$ & $\textbf{0.976} \pm \textbf{0.001}$ \\
            MFM ($k=10$) & $3.679 \pm 0.032$ & $3.734 \pm 0.032$ & $1.69 \pm 0.34$ & $0.974 \pm 0.001$ & $3.680 \pm 0.030$ & $3.738 \pm 0.030$ & $2.04 \pm 0.28$ & $0.975 \pm 0.001$ \\
            MFM ($k=50$) & $3.664 \pm 0.042$ & $3.720 \pm 0.043$ & $1.46 \pm 0.25$ & $0.974 \pm 0.002$ & $3.664 \pm 0.039$ & $\textbf{3.722} \pm \textbf{0.041}$ & $1.84 \pm 0.23$ & $0.975 \pm 0.001$ \\
            MFM ($k=100$) & $3.676 \pm 0.005$ & $3.732 \pm 0.006$ & $1.56 \pm 0.27$ & $0.972 \pm 0.003$ & $3.674 \pm 0.001$ & $3.731 \pm 0.002$ & $2.05 \pm 0.20$ & $\textbf{0.976} \pm \textbf{0.001}$ \\
            \bottomrule
            \rule{0pt}{2ex} \\

            \multicolumn{9}{c}{\textbf{PDOs}}  \\
            \toprule
            & \multicolumn{4}{c}{\textbf{Train}} & \multicolumn{4}{c}{\textbf{Test}} \\
            \cmidrule(lr){2-5} \cmidrule(lr){6-9}
            & $\cW_1 (\downarrow)$ & $\cW_2 (\downarrow)$ & MMD {\tiny ($\times 10^{-3}$)} $(\downarrow)$ & $r^2 (\uparrow)$ & $\cW_1 (\downarrow)$ & $\cW_2 (\downarrow)$ & MMD {\tiny ($\times 10^{-3}$)} $(\downarrow)$ & $r^2 (\uparrow)$ \\
            \midrule
            FM & $3.910 \pm 0.037$ & $3.998 \pm 0.042$ & $3.37 \pm 0.25$ & $0.964 \pm 0.009$ & $3.825 \pm 0.062$ & $3.937 \pm 0.072$ & $3.31 \pm 0.34$ & $0.965 \pm 0.012$ \\
            FM$^{\text{w/}}\mathcal{N}$ & $7.334 \pm 0.101$ & $7.578 \pm 0.103$ & $69.40 \pm 3.18$ & $0.585 \pm 0.009$ & $7.461 \pm 0.091$ & $7.692 \pm 0.094$ & $72.49 \pm 3.39$ & $0.592 \pm 0.009$ \\
            CGFM & $3.792 \pm 0.061$ & $3.866 \pm 0.063$ & $2.27 \pm 0.59$ & $0.979 \pm 0.003$ & $4.062 \pm 0.096$ & $4.181 \pm 0.103$ & $7.75 \pm 1.45$ & $0.950 \pm 0.011$ \\
            CGFM$^{\text{w/}}\mathcal{N}$ & $\textbf{3.746} \pm \textbf{0.032}$ & $\textbf{3.807} \pm \textbf{0.037}$ & $\textbf{1.19} \pm \textbf{0.29}$ & $\textbf{0.986} \pm \textbf{0.002}$ & $7.672 \pm 0.149$ & $7.909 \pm 0.144$ & $94.96 \pm 7.33$ & $0.602 \pm 0.022$ \\
            ICNN & $4.533 \pm 0.008$ &       $4.635 \pm 0.007$ &  $11.53 \pm 0.11$ &     $0.901 \pm 0.005$ &      $4.152 \pm 0.014$ &      $4.261 \pm 0.013$ &   $8.66 \pm 0.27$ &    $0.928 \pm 0.004$ \\
            \midrule
            MFM$^{\text{w/}}\mathcal{N}$ ($k=0$) & $3.788 \pm 0.047$ & $3.851 \pm 0.053$ & $1.59 \pm 0.20$ & $0.982 \pm 0.002$ & $3.741 \pm 0.042$ & $3.829 \pm 0.047$ & $\textbf{2.39} \pm \textbf{0.23}$ & $0.978 \pm 0.002$ \\
            MFM$^{\text{w/}}\mathcal{N}$ ($k=10$) & $3.822 \pm 0.063$ & $3.891 \pm 0.071$ & $1.76 \pm 0.54$ & $0.983 \pm 0.001$ & $3.785 \pm 0.028$ & $3.875 \pm 0.029$ & $2.82 \pm 0.49$ & $0.979 \pm 0.002$ \\
            MFM$^{\text{w/}}\mathcal{N}$ ($k=50$) & $3.794 \pm 0.033$ & $3.858 \pm 0.034$ & $1.71 \pm 0.50$ & $0.985 \pm 0.001$ & $3.775 \pm 0.022$ & $3.868 \pm 0.020$ & $2.92 \pm 0.41$ & $0.980 \pm 0.002$ \\
            MFM$^{\text{w/}}\mathcal{N}$ ($k=100$) & $3.783 \pm 0.018$ & $3.845 \pm 0.018$ & $1.53 \pm 0.19$ & $0.984 \pm 0.002$ & $\textbf{3.749} \pm \textbf{0.027}$ & $\textbf{3.835} \pm \textbf{0.028}$ & $2.59 \pm 0.24$ & $\textbf{0.981} \pm \textbf{0.002}$ \\
            \midrule
            MFM ($k=0$) & $3.843 \pm 0.056$ & $3.916 \pm 0.066$ & $2.65 \pm 0.81$ & $0.971 \pm 0.002$ & $3.794 \pm 0.065$ & $3.891 \pm 0.074$ & $3.34 \pm 1.03$ & $0.974 \pm 0.002$ \\
            MFM ($k=10$) & $3.852 \pm 0.039$ & $3.932 \pm 0.045$ & $2.80 \pm 0.19$ & $0.972 \pm 0.003$ & $3.781 \pm 0.013$ & $3.879 \pm 0.015$ & $3.45 \pm 0.52$ & $0.978 \pm 0.002$ \\
            MFM ($k=50$) & $3.844 \pm 0.036$ & $3.924 \pm 0.033$ & $2.51 \pm 0.01$ & $0.973 \pm 0.006$ & $3.791 \pm 0.029$ & $3.889 \pm 0.026$ & $3.33 \pm 0.14$ & $0.974 \pm 0.006$ \\
            MFM ($k=100$) & $3.905 \pm 0.082$ & $3.988 \pm 0.088$ & $3.87 \pm 1.81$ & $0.974 \pm 0.001$ & $3.783 \pm 0.025$ & $3.882 \pm 0.029$ & $3.33 \pm 0.45$ & $0.976 \pm 0.003$ \\
            \bottomrule
            \rule{0pt}{2ex} \\

            \multicolumn{9}{c}{\textbf{PDOFs}}  \\
            \toprule
            & \multicolumn{4}{c}{\textbf{Train}} & \multicolumn{4}{c}{\textbf{Test}} \\
            \cmidrule(lr){2-5} \cmidrule(lr){6-9}
            & $\cW_1 (\downarrow)$ & $\cW_2 (\downarrow)$ & MMD {\tiny ($\times 10^{-3}$)} $(\downarrow)$ & $r^2 (\uparrow)$ & $\cW_1 (\downarrow)$ & $\cW_2 (\downarrow)$ & MMD {\tiny ($\times 10^{-3}$)} $(\downarrow)$ & $r^2 (\uparrow)$ \\
            \midrule
            FM & $4.117 \pm 0.006$ & $4.287 \pm 0.007$ & $\textbf{5.75} \pm \textbf{0.63}$ & $0.953 \pm 0.007$ & $4.328 \pm 0.057$ & $4.514 \pm 0.064$ & $12.11 \pm 0.76$ & $0.911 \pm 0.017$ \\
            FM$^{\text{w/}}\mathcal{N}$ & $6.686 \pm 0.093$ & $6.921 \pm 0.095$ & $52.12 \pm 2.39$ & $0.722 \pm 0.006$ & $6.906 \pm 0.073$ & $7.154 \pm 0.073$ & $59.73 \pm 3.16$ & $0.686 \pm 0.009$ \\
            CGFM & $4.180 \pm 0.130$ & $4.385 \pm 0.145$ & $6.35 \pm 2.13$ & $0.933 \pm 0.020$ & $4.467 \pm 0.080$ & $4.653 \pm 0.083$ & $15.51 \pm 0.93$ & $0.906 \pm 0.004$ \\
            CGFM$^{\text{w/}}\mathcal{N}$ & $5.150 \pm 0.046$ & $5.499 \pm 0.049$ & $23.79 \pm 1.21$ & $0.843 \pm 0.005$ & $7.039 \pm 0.147$ & $7.285 \pm 0.153$ & $78.96 \pm 6.83$ & $0.700 \pm 0.020$ \\
            ICNN & $4.434 \pm 0.006$ &        $4.582 \pm 0.006$ &  $5.69 \pm 0.05$ &      $0.954 \pm 0.002$ &       $4.552 \pm 0.005$ &       $4.735 \pm 0.006$ &  $17.02 \pm 0.22$ &     $0.898 \pm 0.005$ \\
            \midrule
            MFM$^{\text{w/}}\mathcal{N}$ ($k=0$) & $4.287 \pm 0.019$ & $4.482 \pm 0.015$ & $8.41 \pm 0.93$ & $0.928 \pm 0.011$ & $4.218 \pm 0.016$ & $4.385 \pm 0.016$ & $\textbf{8.62} \pm \textbf{0.24}$ & $0.905 \pm 0.002$ \\
            MFM$^{\text{w/}}\mathcal{N}$ ($k=10$) & $4.372 \pm 0.045$ & $4.582 \pm 0.048$ & $9.70 \pm 1.33$ & $0.923 \pm 0.007$ & $4.326 \pm 0.052$ & $4.501 \pm 0.057$ & $10.86 \pm 0.60$ & $0.904 \pm 0.004$ \\
            MFM$^{\text{w/}}\mathcal{N}$ ($k=50$) & $4.356 \pm 0.036$ & $4.566 \pm 0.040$ & $9.72 \pm 0.68$ & $0.922 \pm 0.006$ & $4.291 \pm 0.014$ & $4.469 \pm 0.018$ & $10.14 \pm 0.60$ & $0.905 \pm 0.001$ \\
            MFM$^{\text{w/}}\mathcal{N}$ ($k=100$) & $4.295 \pm 0.026$ & $4.495 \pm 0.033$ & $8.55 \pm 0.56$ & $0.932 \pm 0.007$ & $4.251 \pm 0.071$ & $4.423 \pm 0.080$ & $9.44 \pm 1.51$ & $0.905 \pm 0.003$ \\
            \midrule
            MFM ($k=0$) & $\textbf{4.111} \pm \textbf{0.016}$ & $\textbf{4.281} \pm \textbf{0.023}$ & $6.04 \pm 0.26$ & $\textbf{0.954} \pm \textbf{0.009}$ & $\textbf{4.184} \pm \textbf{0.022}$ & $\textbf{4.359} \pm \textbf{0.022}$ & $8.96 \pm 0.64$ & $0.915 \pm 0.003$ \\
            MFM ($k=10$) & $4.158 \pm 0.028$ & $4.332 \pm 0.029$ & $6.97 \pm 0.80$ & $0.946 \pm 0.006$ & $4.235 \pm 0.019$ & $4.418 \pm 0.022$ & $9.89 \pm 0.81$ & $0.914 \pm 0.002$ \\
            MFM ($k=50$) & $4.155 \pm 0.035$ & $4.330 \pm 0.015$ & $6.80 \pm 0.98$ & $0.943 \pm 0.002$ & $4.245 \pm 0.044$ & $4.429 \pm 0.035$ & $10.00 \pm 0.28$ & $0.912 \pm 0.001$ \\
            MFM ($k=100$) & $4.137 \pm 0.048$ & $4.305 \pm 0.068$ & $6.73 \pm 0.42$ & $0.945 \pm 0.002$ & $4.236 \pm 0.001$ & $4.412 \pm 0.004$ & $10.18 \pm 0.10$ & $\textbf{0.918} \pm \textbf{0.005}$ \\
            \bottomrule
        \end{tabular}
        }
    \label{tab:replicates_1_fibs_pdos_pdofs}
\end{table}

\begin{table}[t]
        \centering
        \caption{\textbf{Extended experimental results on the organoid drug-screen dataset} for population prediction of treatment response across \emph{\textbf{replicas-2}} populations. Results shown in this table are for $\text{\textbf{dim}} = \textbf{43}$. Here we show results for the individual cell cultures: \textbf{Fibroblasts}, patient derived organoids (\textbf{PDOs}), and patient derived organoids with Fibroblasts (\textbf{PDOFs}), respectively. We consider $4$ settings for MFM with varying nearest-neighbours parameter ($k=0,10,50,100$).}
        \resizebox{\columnwidth}{!}{
        \begin{tabular}{lrrrrrrrr}
            \rule{0pt}{2ex} \\
            
            \multicolumn{9}{c}{\textbf{Fibroblasts}}  \\
            \toprule
            & \multicolumn{4}{c}{\textbf{Train}} & \multicolumn{4}{c}{\textbf{Test}} \\
            \cmidrule(lr){2-5} \cmidrule(lr){6-9}
            & $\cW_1 (\downarrow)$ & $\cW_2 (\downarrow)$ & MMD {\tiny ($\times 10^{-3}$)} $(\downarrow)$ & $r^2 (\uparrow)$ & $\cW_1 (\downarrow)$ & $\cW_2 (\downarrow)$ & MMD {\tiny ($\times 10^{-3}$)} $(\downarrow)$ & $r^2 (\uparrow)$ \\
            \midrule
            FM  & $3.694 \pm 0.018$ & $3.775 \pm 0.023$ & $1.74 \pm 0.20$ & $0.951 \pm 0.007$ & $3.646 \pm 0.163$ & $3.805 \pm 0.231$ & $3.67 \pm 1.46$ & $0.867 \pm 0.070$ \\
            FM$^{\text{w/}}\mathcal{N}$ & $6.790 \pm 0.109$ & $7.124 \pm 0.120$ & $54.97 \pm 1.68$ & $0.661 \pm 0.005$ & $6.905 \pm 0.092$ & $7.252 \pm 0.100$ & $64.40 \pm 1.31$ & $0.607 \pm 0.012$ \\
            CGFM & $\textbf{3.616} \pm \textbf{0.031}$ & $\textbf{3.671} \pm \textbf{0.032}$ & $\textbf{0.90} \pm \textbf{0.13}$ & $0.980 \pm 0.003$ & $3.530 \pm 0.011$ & $3.584 \pm 0.010$ & $4.37 \pm 0.55$ & $0.975 \pm 0.003$ \\
            CGFM$^{\text{w/}}\mathcal{N}$ & $3.663 \pm 0.011$ & $3.718 \pm 0.011$ & $0.93 \pm 0.13$ & $\textbf{0.982} \pm \textbf{0.001}$ & $5.813 \pm 0.372$ & $6.136 \pm 0.398$ & $44.44 \pm 8.18$ & $0.688 \pm 0.026$ \\
            ICNN & $4.054 \pm 0.03$ &     $4.124 \pm 0.033$ &  $4.43 \pm 0.14$ &   $0.925 \pm 0.007$ &    $3.477 \pm 0.031$ &    $3.534 \pm 0.033$ &  $3.67 \pm 0.17$ &  $0.969 \pm 0.004$ \\
            \midrule
            MFM$^{\text{w/}}\mathcal{N}$ ($k=0$) & $3.671 \pm 0.021$ & $3.726 \pm 0.022$ & $1.17 \pm 0.17$ & $0.979 \pm 0.001$ & $3.503 \pm 0.030$ & $3.555 \pm 0.031$ & $\textbf{1.68} \pm \textbf{0.11}$ & $0.975 \pm 0.001$ \\
            MFM$^{\text{w/}}\mathcal{N}$ ($k=10$) & $3.691 \pm 0.015$ & $3.748 \pm 0.014$ & $1.18 \pm 0.14$ & $0.978 \pm 0.000$ & $3.526 \pm 0.024$ & $3.582 \pm 0.022$ & $1.86 \pm 0.36$ & $0.972 \pm 0.001$ \\
            MFM$^{\text{w/}}\mathcal{N}$ ($k=50$) & $3.678 \pm 0.017$ & $3.733 \pm 0.017$ & $1.15 \pm 0.12$ & $0.979 \pm 0.001$ & $3.515 \pm 0.029$ & $3.567 \pm 0.029$ & $2.07 \pm 0.63$ & $0.977 \pm 0.002$ \\
            MFM$^{\text{w/}}\mathcal{N}$ ($k=100$) & $3.706 \pm 0.020$ & $3.764 \pm 0.022$ & $1.44 \pm 0.08$ & $0.977 \pm 0.001$ & $3.536 \pm 0.036$ & $3.590 \pm 0.037$ & $1.90 \pm 0.23$ & $0.975 \pm 0.003$ \\
            \midrule
            MFM ($k=0$) & $3.626 \pm 0.012$ & $3.682 \pm 0.013$ & $1.20 \pm 0.04$ & $0.979 \pm 0.000$ & $3.451 \pm 0.020$ & $3.505 \pm 0.021$ & $2.55 \pm 0.42$ & $0.981 \pm 0.002$ \\
            MFM ($k=10$) & $3.624 \pm 0.025$ & $3.678 \pm 0.026$ & $1.04 \pm 0.17$ & $0.981 \pm 0.000$ & $3.451 \pm 0.037$ & $3.504 \pm 0.038$ & $2.47 \pm 0.56$ & $\textbf{0.982} \pm \textbf{0.001}$ \\
            MFM ($k=50$) & $3.639 \pm 0.016$ & $3.694 \pm 0.016$ & $1.71 \pm 0.20$ & $0.979 \pm 0.001$ & $\textbf{3.443} \pm \textbf{0.035}$ & $\textbf{3.497} \pm \textbf{0.037}$ & $2.89 \pm 1.85$ & $0.981 \pm 0.001$ \\
            MFM ($k=100$) & $3.654 \pm 0.005$ & $3.712 \pm 0.005$ & $1.56 \pm 0.20$ & $0.978 \pm 0.000$ & $3.480 \pm 0.027$ & $3.534 \pm 0.029$ & $3.32 \pm 0.49$ & $0.980 \pm 0.001$ \\
            \bottomrule
            \rule{0pt}{2ex} \\

            \multicolumn{9}{c}{\textbf{PDOs}}  \\
            \toprule
            & \multicolumn{4}{c}{\textbf{Train}} & \multicolumn{4}{c}{\textbf{Test}} \\
            \cmidrule(lr){2-5} \cmidrule(lr){6-9}
            & $\cW_1 (\downarrow)$ & $\cW_2 (\downarrow)$ & MMD {\tiny ($\times 10^{-3}$)} $(\downarrow)$ & $r^2 (\uparrow)$ & $\cW_1 (\downarrow)$ & $\cW_2 (\downarrow)$ & MMD {\tiny ($\times 10^{-3}$)} $(\downarrow)$ & $r^2 (\uparrow)$ \\
            \midrule
            FM & $3.948 \pm 0.092$ & $4.050 \pm 0.116$ & $3.96 \pm 0.68$ & $0.942 \pm 0.023$ & $4.043 \pm 0.100$ & $4.174 \pm 0.131$ & $4.90 \pm 0.50$ & $0.931 \pm 0.033$ \\
            FM$^{\text{w/}}\mathcal{N}$ & $7.109 \pm 0.028$ & $7.351 \pm 0.030$ & $63.42 \pm 1.97$ & $0.626 \pm 0.008$ & $7.697 \pm 0.212$ & $7.976 \pm 0.182$ & $77.52 \pm 7.33$ & $0.519 \pm 0.002$ \\
            CGFM & $3.750 \pm 0.013$ & $3.815 \pm 0.017$ & $1.77 \pm 0.71$ & $0.977 \pm 0.004$ & $4.225 \pm 0.146$ & $4.333 \pm 0.171$ & $9.50 \pm 3.14$ & $0.933 \pm 0.016$ \\
            CGFM$^{\text{w/}}\mathcal{N}$ & $\textbf{3.728} \pm \textbf{0.024}$ & $\textbf{3.787} \pm \textbf{0.026}$ & $\textbf{1.14} \pm \textbf{0.16}$ & $\textbf{0.985} \pm \textbf{0.000}$ & $8.333 \pm 0.309$ & $8.534 \pm 0.298$ & $114.45 \pm 14.17$ & $0.508 \pm 0.016$ \\
            ICNN & $4.476 \pm 0.039$ &        $4.58 \pm 0.041$ &  $11.17 \pm 0.05$ &     $0.906 \pm 0.007$ &      $4.413 \pm 0.048$ &      $4.519 \pm 0.051$ &  $11.18 \pm 0.05$ &    $0.886 \pm 0.004$ \\
            \midrule
            MFM$^{\text{w/}}\mathcal{N}$ ($k=0$) & $3.775 \pm 0.012$ & $3.837 \pm 0.013$ & $1.75 \pm 0.05$ & $0.979 \pm 0.001$ & $\textbf{3.855} \pm \textbf{0.075}$ & $3.944 \pm 0.087$ & $3.73 \pm 0.79$ & $0.978 \pm 0.003$ \\
            MFM$^{\text{w/}}\mathcal{N}$ ($k=10$) & $3.799 \pm 0.009$ & $3.860 \pm 0.009$ & $1.89 \pm 0.26$ & $0.978 \pm 0.001$ & $3.877 \pm 0.042$ & $3.961 \pm 0.047$ & $3.53 \pm 0.30$ & $0.978 \pm 0.003$ \\
            MFM$^{\text{w/}}\mathcal{N}$ ($k=50$) & $3.800 \pm 0.023$ & $3.863 \pm 0.023$ & $1.93 \pm 0.27$ & $0.980 \pm 0.001$ & $3.857 \pm 0.059$ & $\textbf{3.938} \pm \textbf{0.066}$ & $\textbf{3.40} \pm \textbf{0.77}$ & $\textbf{0.981} \pm \textbf{0.001}$ \\
            MFM$^{\text{w/}}\mathcal{N}$ ($k=100$) & $3.800 \pm 0.008$ & $3.863 \pm 0.010$ & $2.05 \pm 0.12$ & $0.979 \pm 0.001$ & $3.882 \pm 0.008$ & $3.973 \pm 0.009$ & $3.80 \pm 0.26$ & $0.979 \pm 0.000$ \\
            \midrule
            MFM ($k=0$) & $3.797 \pm 0.020$ & $3.864 \pm 0.020$ & $2.40 \pm 0.32$ & $0.971 \pm 0.001$ & $3.894 \pm 0.026$ & $3.982 \pm 0.037$ & $3.87 \pm 0.83$ & $0.976 \pm 0.002$ \\
            MFM ($k=10$) & $3.792 \pm 0.041$ & $3.863 \pm 0.049$ & $2.29 \pm 0.47$ & $0.975 \pm 0.005$ & $3.953 \pm 0.095$ & $4.055 \pm 0.115$ & $4.80 \pm 2.00$ & $0.978 \pm 0.001$ \\
            MFM ($k=50$) & $3.796 \pm 0.014$ & $3.864 \pm 0.016$ & $2.50 \pm 0.22$ & $0.972 \pm 0.001$ & $4.035 \pm 0.099$ & $4.153 \pm 0.116$ & $6.40 \pm 2.19$ & $0.971 \pm 0.004$ \\
            MFM ($k=100$) & $3.798 \pm 0.037$ & $3.866 \pm 0.041$ & $2.54 \pm 0.28$ & $0.970 \pm 0.002$ & $3.869 \pm 0.094$ & $3.953 \pm 0.113$ & $3.49 \pm 0.76$ & $0.975 \pm 0.001$ \\
            \bottomrule
            \rule{0pt}{2ex} \\

            \multicolumn{9}{c}{\textbf{PDOFs}}  \\
            \toprule
            & \multicolumn{4}{c}{\textbf{Train}} & \multicolumn{4}{c}{\textbf{Test}} \\
            \cmidrule(lr){2-5} \cmidrule(lr){6-9}
            & $\cW_1 (\downarrow)$ & $\cW_2 (\downarrow)$ & MMD {\tiny ($\times 10^{-3}$)} $(\downarrow)$ & $r^2 (\uparrow)$ & $\cW_1 (\downarrow)$ & $\cW_2 (\downarrow)$ & MMD {\tiny ($\times 10^{-3}$)} $(\downarrow)$ & $r^2 (\uparrow)$ \\
            \midrule
            FM & $4.313 \pm 0.062$ & $4.521 \pm 0.076$ & $8.21 \pm 0.55$ & $0.919 \pm 0.014$ & $5.330 \pm 0.112$ & $5.712 \pm 0.113$ & $30.45 \pm 3.30$ & $\textbf{0.798} \pm \textbf{0.002}$ \\
            FM$^{\text{w/}}\mathcal{N}$ & $6.778 \pm 0.046$ & $7.015 \pm 0.042$ & $52.69 \pm 2.63$ & $0.708 \pm 0.004$ & $6.741 \pm 0.177$ & $6.985 \pm 0.163$ & $52.98 \pm 5.35$ & $0.713 \pm 0.014$ \\
            CGFM & $4.281 \pm 0.065$ & $4.511 \pm 0.078$ & $\textbf{6.81} \pm \textbf{1.33}$ & $0.898 \pm 0.011$ & $5.574 \pm 0.051$ & $5.945 \pm 0.052$ & $37.13 \pm 0.40$ & $0.790 \pm 0.009$ \\
            CGFM$^{\text{w/}}\mathcal{N}$ & $5.549 \pm 0.206$ & $5.936 \pm 0.214$ & $32.44 \pm 5.58$ & $0.787 \pm 0.014$ & $7.258 \pm 0.197$ & $7.501 \pm 0.212$ & $80.45 \pm 8.02$ & $0.713 \pm 0.005$ \\
            ICNN & $4.393 \pm 0.034$ &        $4.534 \pm 0.037$ &  $5.60 \pm 0.20$ &      $\textbf{0.956} \pm \textbf{0.003}$ &       $5.573 \pm 0.049$ &       $5.943 \pm 0.056$ &  $37.96 \pm 1.58$ &     $\textbf{0.798} \pm \textbf{0.004}$ \\
            \midrule
            MFM$^{\text{w/}}\mathcal{N}$ ($k=0$) & $4.500 \pm 0.039$ & $4.721 \pm 0.043$ & $12.21 \pm 1.07$ & $0.896 \pm 0.005$ & $5.173 \pm 0.078$ & $5.567 \pm 0.077$ & $26.17 \pm 2.29$ & $0.779 \pm 0.003$ \\
            MFM$^{\text{w/}}\mathcal{N}$ ($k=10$) & $4.528 \pm 0.033$ & $4.751 \pm 0.045$ & $12.31 \pm 0.73$ & $0.888 \pm 0.003$ & $5.065 \pm 0.169$ & $5.431 \pm 0.172$ & $23.34 \pm 4.74$ & $0.786 \pm 0.005$ \\
            MFM$^{\text{w/}}\mathcal{N}$ ($k=50$) & $4.468 \pm 0.026$ & $4.690 \pm 0.030$ & $11.15 \pm 0.34$ & $0.894 \pm 0.006$ & $5.088 \pm 0.121$ & $5.468 \pm 0.115$ & $23.43 \pm 3.15$ & $0.778 \pm 0.005$ \\
            MFM$^{\text{w/}}\mathcal{N}$ ($k=100$) & $4.508 \pm 0.042$ & $4.731 \pm 0.042$ & $12.09 \pm 1.34$ & $0.891 \pm 0.006$ & $5.082 \pm 0.037$ & $5.461 \pm 0.047$ & $23.16 \pm 0.90$ & $0.792 \pm 0.015$ \\
            \midrule
            MFM ($k=0$) & $4.293 \pm 0.005$ & $4.491 \pm 0.005$ & $8.94 \pm 0.54$ & $0.925 \pm 0.002$ & $5.283 \pm 0.017$ & $5.653 \pm 0.029$ & $30.60 \pm 0.77$ & $0.797 \pm 0.005$ \\
            MFM ($k=10$) & $4.273 \pm 0.039$ & $4.474 \pm 0.041$ & $8.35 \pm 0.83$ & $0.915 \pm 0.009$ & $5.244 \pm 0.168$ & $5.626 \pm 0.175$ & $28.71 \pm 5.10$ & $0.790 \pm 0.016$ \\
            MFM ($k=50$) & $\textbf{4.271} \pm \textbf{0.038}$ & $4.466 \pm 0.049$ & $8.40 \pm 0.40$ & $0.924 \pm 0.001$ & $5.165 \pm 0.132$ & $5.538 \pm 0.140$ & $26.97 \pm 2.46$ & $0.796 \pm 0.009$ \\
            MFM ($k=100$) & $4.200 \pm 0.079$ & $\textbf{4.381} \pm \textbf{0.095}$ & $7.20 \pm 1.13$ & $0.918 \pm 0.004$ & $\textbf{4.950} \pm \textbf{0.199}$ & $\textbf{5.321} \pm \textbf{0.219}$ & $\textbf{20.08} \pm \textbf{5.10}$ & $\textbf{0.798} \pm \textbf{0.013}$ \\
            \bottomrule
        \end{tabular}
        }
    \label{tab:replicates_2_fibs_pdos_pdofs}
\end{table}

\clearpage
\subsection{Analysis of Population Embeddings}
\label{ap:embeddings}

Here, we analyze the embedding space of the population embedding model $\varphi(p_0; \theta)$ for the synthetic letters dataset and the organoid drug-screen dataset. To do this, we compute embeddings $h = \varphi(\{x_0^j\}_{j=1}^{N'};\theta)$ for each initial population. For letters we consider $200$ random rotations per letter and for the organoid drug-screen dataset we consider all population pairs $(p_0, p_1)$. We then project the embeddings into a 2-dimensional space using uniform manifold approximation and projection (UMAP) to visualize model embeddings. We compute pairwise Euclidean distances of the samples in the projected embedding space and report these distances visually using heat maps. See \cref{fig:embeddings}.

On the letters dataset, the population embedding model learns to embed similar letter close silhouettes together. For example, 'A' and 'V', 'T' and 'Y' (even though 'Y' silhouettes are never seen during training), and more. Note, that in this dataset, letter silhouette populations $p_0$ are both corrupted with noise and rotated randomly. We conduct the same analysis for the organoid drug-screen dataset. We  observe that in some sense $\varphi(p_0; \theta)$ groups populations from particular patients in a generally consistent matter to that found in \cite{RamosZapatero2023}. Specifically, we observe patient who are chemosensitive (PDO 21, 75, 23, 27) cluster together, away from patients who are chemorefractory (PDO 5, 11, 141, 216). This illustrates the embedding are able to capture patient drug response characteristics from untreated patient sample $p_0$.

\begin{figure}[ht]
    \centering
    \begin{subfigure}%
        \centering
        \includegraphics[width=0.48\textwidth]{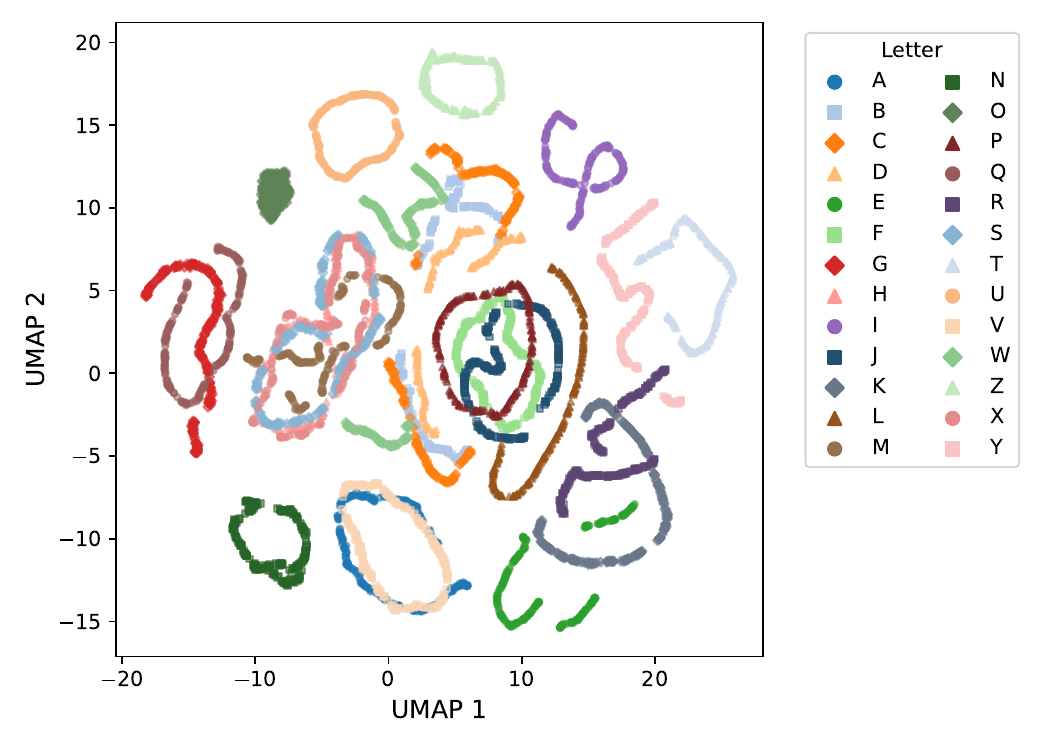}
        \label{fig:umap_200}
    \end{subfigure}
    \begin{subfigure}%
        \centering
        \includegraphics[width=0.48\textwidth]{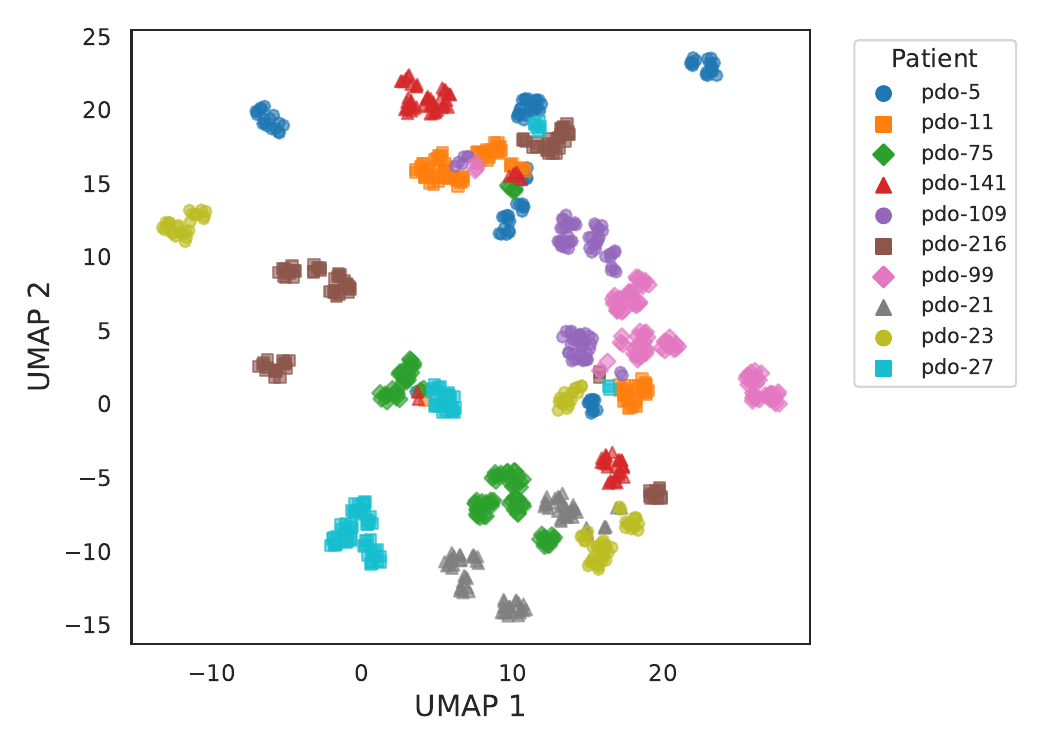}
        \label{fig:umap_heatmap}
    \end{subfigure}

    \begin{subfigure}%
        \centering
        \includegraphics[width=0.46\textwidth]{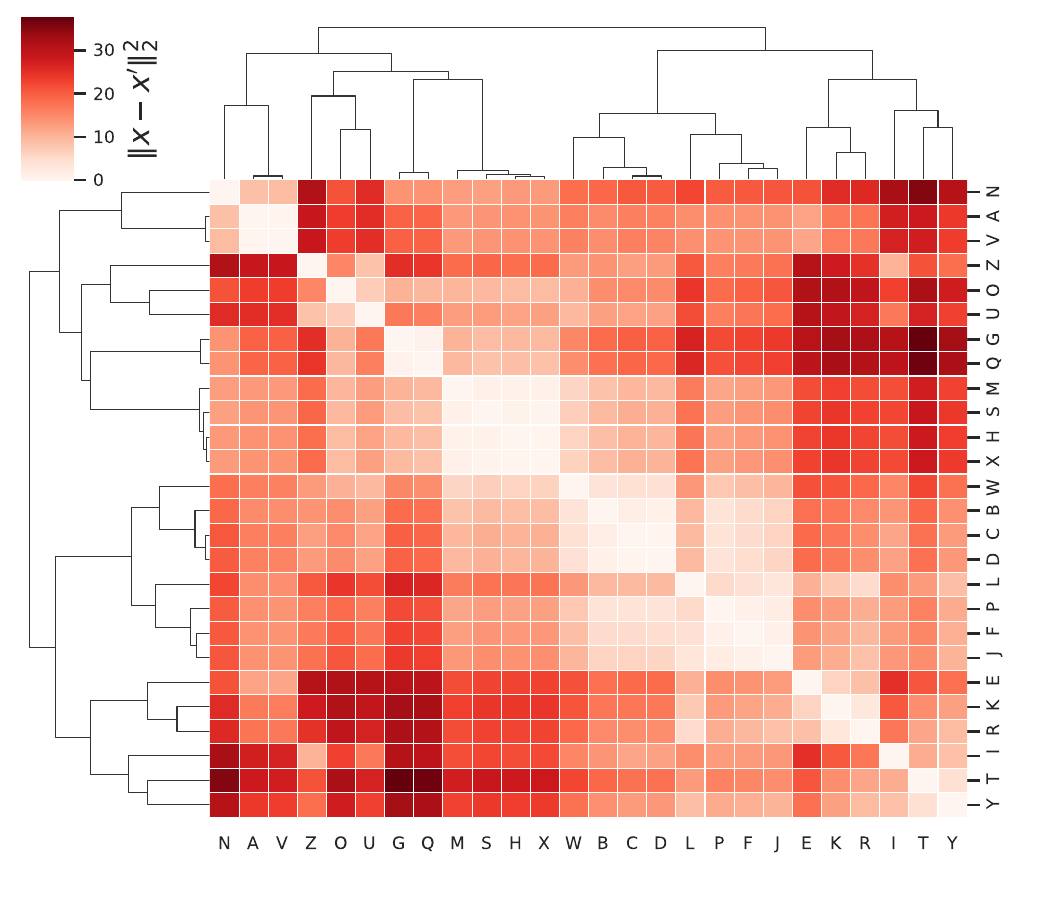}
    \end{subfigure}
    \begin{subfigure}%
        \centering
        \includegraphics[width=0.46\textwidth]{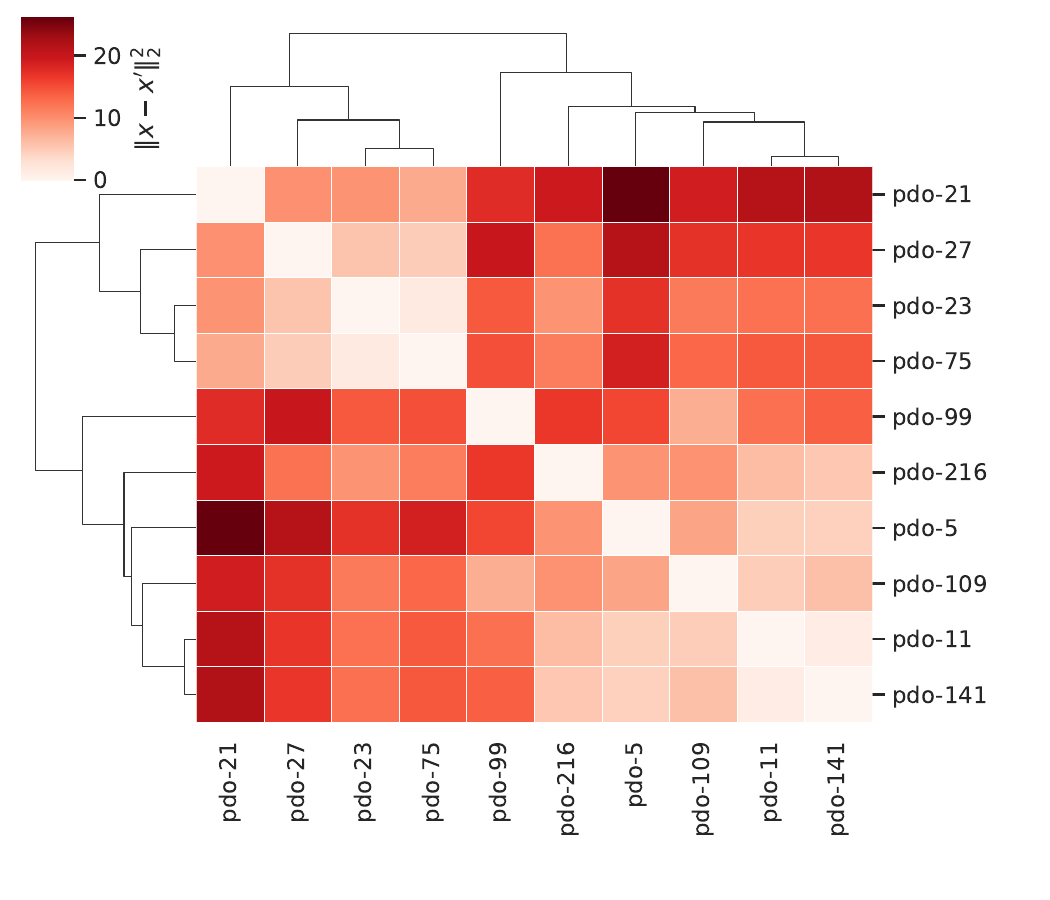}
    \end{subfigure}

    \caption{\textbf{Analysis of population embeddings from $\varphi(p_0; \theta)$.} We provide a UMAP visualization of population embeddings space (top) and pairwise distances of population embeddings computed in 2-dimensional space (bottom). (\textit{Left}) analysis of letter population embeddings plotted for 200 random rotations of each noisy letter silhouette. (\textit{Right}) analysis of patient population embeddings plotted for all control populations.}
    \label{fig:embeddings}
\end{figure}

\subsection{Analysis of Prediction Quality on Organoid Drug-screen Data}

In this section, we analyze the predictions produced by vector field models trained with MFM. For this analysis, we look at model predictions for the three respective left-out test patient splits: PDO-21, PDO-27, PDO-75. For each control population in the patient split (specifically, for the test patient in the given split), we predict the respective treatment response. We then subset the data to look at populations in the PDOF culture and take the mean of observations in each population (for both predicted and target populations). We project the means of the predicted and target populations into 2D space using principal component analysis (PCA) across all. We fit PCA using means from source, target, and predicted samples. Lastly, we plot the target and predicted samples separately. We show the result of this analysis in \cref{fig:pdo_prediction_umaps}. 

We observe that for all three test patients the general structure is preserved, with the treatment, Oxaliplatin (Green), being the furthest away both for the target and ground truth datasets. This is because (as shown in the original paper by \citet{RamosZapatero2023}) Oxaliplatin has a large effect on these cancer cells for the PDOF subset of PDOs. We see from \cref{fig:pdo_prediction_umaps} that this is more pronounced for PDO-21 and PDO-27 than for PDO-75. Overall, this reflects the conclusions drawn from Figure 4 in the original dataset paper \cite{RamosZapatero2023}. In this way, we are able to draw similar conclusions from the predicted populations as from the data.

\begin{figure}[ht]
    \centering
    \begin{subfigure}
        \centering
        \includegraphics[width=0.95\textwidth]{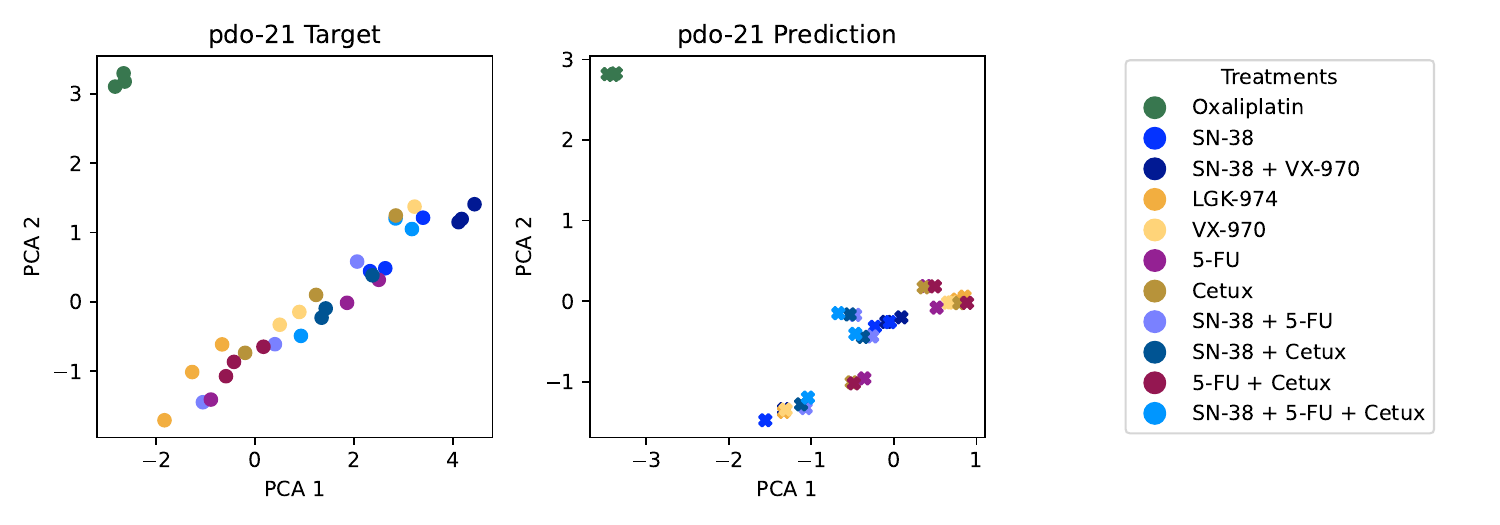}
    \end{subfigure}%
    
    \begin{subfigure}
        \centering
        \includegraphics[width=0.95\textwidth]{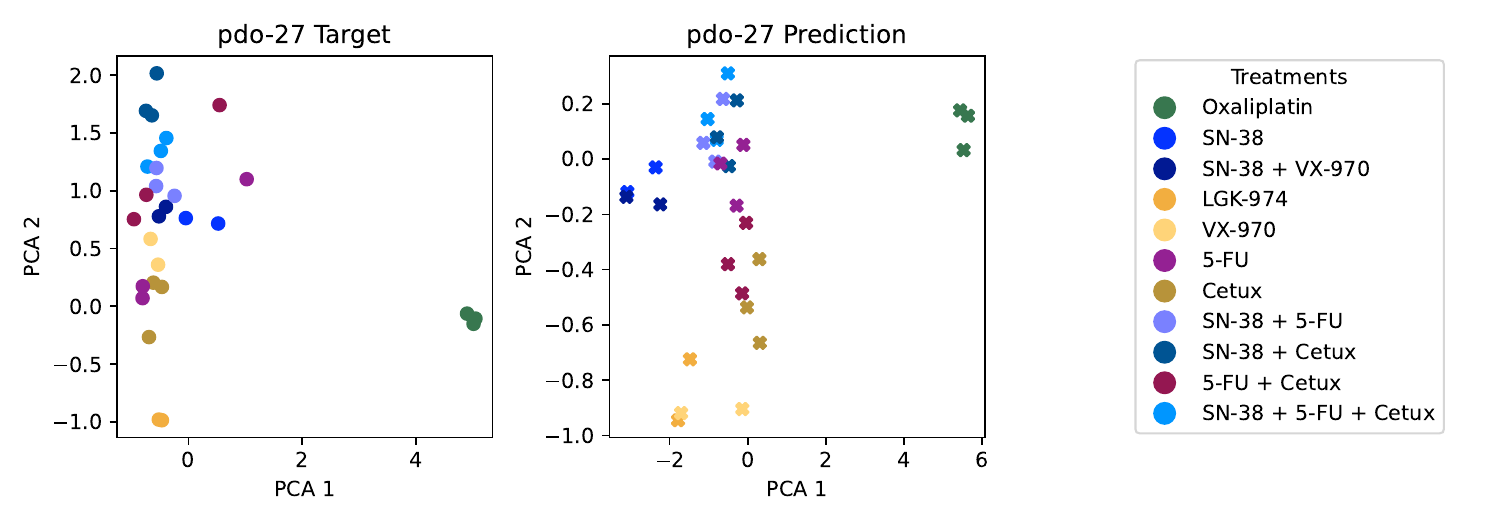}
    \end{subfigure}

    \begin{subfigure}
        \centering
        \includegraphics[width=0.95\textwidth]{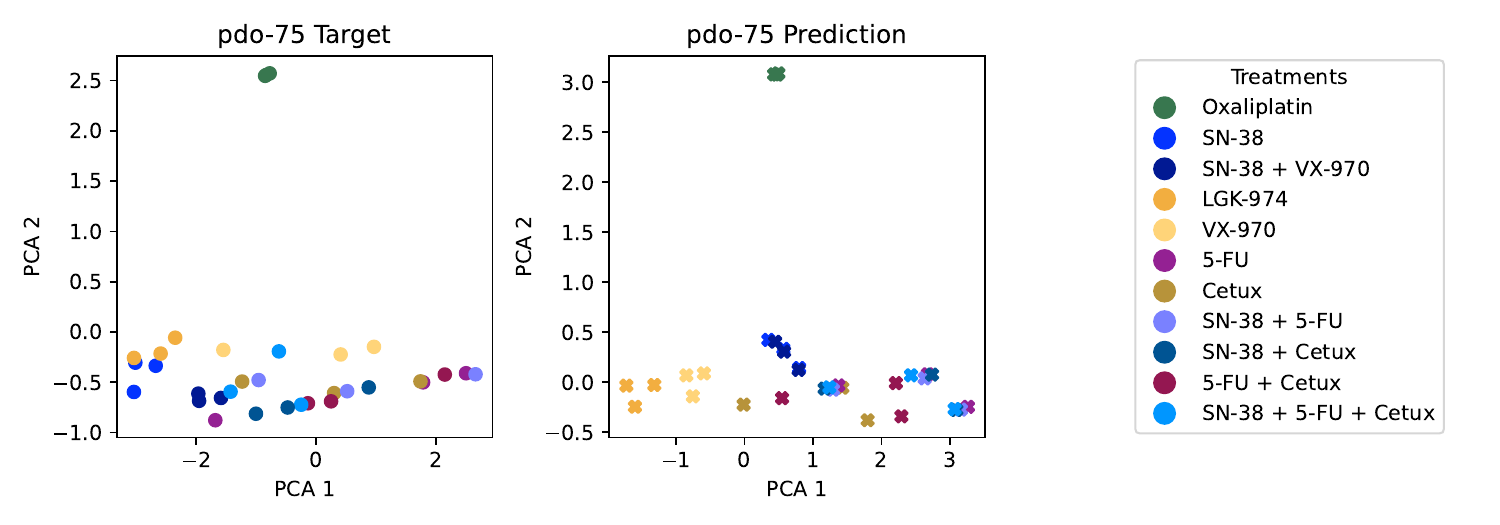}
    \end{subfigure}
        
    \caption{
    \textbf{Analysis of treatment-specific response prediction.} We plot population means in 2D PCA space for target populations (\textit{Left)}, predicted populations (\textit{Middle}), and the respective treatment identifier legend (\textit{Right}).}
    \label{fig:pdo_prediction_umaps}
\end{figure}

\end{document}